\definecolor{plum}  {rgb}{.4,0,.4}
\definecolor{brickred} {rgb}{0.6,0,0}
\def\thmheadbrackets#1#2#3{%
  \thmname{#1}\thmnumber{\@ifnotempty{#1}{ }\@upn{#2}}%
  \thmnote{ {\the\thm@notefont[#3]}}}
\newtheoremstyle{brackets}
  {}
  {}
  {\itshape}
  {}
  {\bfseries}
  {.}
  { }
  {\thmheadbrackets{#1}{#2}{#3}}
\theoremstyle{brackets}
\newtheorem{theorem}{Theorem}
\newtheorem{lemma}[theorem]{Lemma}
\newtheorem{definition}{Definition}
\definecolor{Gray}{gray}{0.9}
\newcommand{\reals}{\mathbb{R}}
\newcommand{\relu}{\textsc{ReLU}}
\newcommand{\step}{\textsc{Step}}
\newcommand{\minwidth}{w_{\min}}
\newcommand{\mc}{\mathcal}
\newcommand{\defeq}{:=}
\newcommand{\norm}[1]{\left\|{#1}\right\|}
\newcommand{\linf}[1]{\norm{#1}_\infty}
\begin{document} 
\title{Minimum Width for Universal Approximation}

\author{Sejun Park\thanks{School of Electrical Engineering, KAIST}
\and Chulhee Yun\thanks{Laboratory for Information and Decision Systems, MIT}
\and
Jaeho Lee$^*$
\and Jinwoo Shin$^*$\thanks{Graduate School of AI, KAIST\newline $~$\quad~~\,E-mails: \texttt{sejun.park@kaist.ac.kr, chulheey@mit.edu, jaeho-lee@kaist.ac.kr, jinwoos@kaist.ac.kr}.}
}

\maketitle
\begin{abstract}
The universal approximation property of width-bounded networks has been studied as a dual of classical universal approximation results on depth-bounded networks.
However, the critical width enabling the universal approximation has not been exactly characterized in terms of the input dimension $d_x$ and the output dimension $d_y$.
In this work, we provide the first definitive result in this direction for networks using the $\relu$ activation functions: The minimum width required for the universal approximation of the $L^p$ functions is exactly $\max\{d_x+1,d_y\}$.
We also prove that the same conclusion does not hold for the uniform approximation with $\relu$, but does hold with an additional threshold activation function.
Our proof technique can be also used to derive a tighter upper bound on the minimum width required for the universal approximation using networks with general activation functions.
\end{abstract}

\section{Introduction}
\label{sec:intro}
The study of the expressive power of neural networks investigates what class of functions neural networks can/cannot represent or approximate. 
Classical results in this field are mostly focused on shallow neural networks. An example of such results is the universal approximation theorem \citep{cybenko89,hornik89,pinkus99}, which shows that a neural network with fixed depth and arbitrary width can approximate any continuous function on a compact set, up to arbitrary accuracy, if the activation function is continuous and nonpolynomial.
Another line of research studies the memory capacity of neural networks \citep{baum88,huang98,huang03}, trying to characterize the maximum number of data points that a given neural network can memorize.

After the advent of deep learning, researchers started to investigate the benefit of depth in the expressive power of neural networks, in an attempt to understand the success of deep neural networks. 
This has led to interesting results showing the existence of functions that require the network to be extremely wide for shallow networks to approximate, while being easily approximated by deep and narrow networks \citep{telgarsky16, eldan16,lin17,poggio17}. A similar trade-off between depth and width in expressive power is also observed in the study of the memory capacity of neural networks \citep{yun19, vershynin20}.

In search of a deeper understanding of the depth in neural networks, a \emph{dual} scenario of the classical universal approximation theorem has also been studied \citep{lu17, hanin17,johnson19,kidger19}.
Instead of bounded depth and arbitrary width studied in classical results, the dual problem studies whether universal approximation is possible with a network of \emph{bounded width and arbitrary depth}. 
A very interesting characteristic of this setting is that there exists a \emph{critical threshold} on the width that allows a neural network to be a universal approximator. For example, one of the first results \citep{lu17} in the literature shows that universal approximation of $L^1$ functions from $\reals^{d_x}$ to $\reals$ is possible for a width-$(d_x+4)$ $\relu$ network, but \emph{impossible} for a width-$d_x$ $\relu$ network. This implies that the minimum width required for universal approximation lies between $d_x+1$ and $d_x+4$.
Subsequent results have shown upper/lower bounds on the minimum width, but none of the results has succeeded in a tight characterization of the minimum width.

\subsection{What is known so far?}
\label{sec:relatedwork}
\begin{table}
\caption{A summary of known upper/lower bounds on minimum width for universal approximation. In the table, $\mathcal K \subset \reals^{d_x}$ denotes a compact domain, and $p \in [1, \infty)$. ``Conti.'' is short for continuous.}
\vspace{8pt}
\centering
\begin{threeparttable}
\begin{tabular}{ |l|c c|c|  }
\hline 
\multicolumn{1}{|c|}{\textbf{Reference}} & \textbf{Function class} & \textbf{Activation $\rho$} & \textbf{Upper\,/\,lower bounds}\\ 
\hline \hline
\multirow{2}{120pt}[-1pt]{\citet{lu17}} & $L^1(\reals^{d_x}, \reals)$ & $\relu$ & $d_x + 1 \leq \minwidth \leq d_x + 4$\\
& $L^1(\mathcal K, \reals)$ & $\relu$ & $\minwidth \geq d_x$\\
\hline
\citet{hanin17} & $C(\mathcal K, \reals^{d_y})$ & $\relu$ & $d_x + 1 \leq \minwidth \leq d_x + d_y$\\
\hline
\citet{johnson19} & $C(\mathcal K, \reals)$ & uniformly conti.\tnote{$\dag$} & $\minwidth \geq d_x + 1$\\
\hline
\multirow{3}{120pt}[-1pt]{\citet{kidger19}} & $C(\mathcal K, \reals^{d_y})$ & conti.\ nonpoly\tnote{$\ddag$} &  $\minwidth \leq d_x + d_y + 1$\\
& $C(\mathcal K, \reals^{d_y})$ & nonaffine poly & $\minwidth \leq d_x + d_y + 2$\\
& $L^p(\reals^{d_x}, \reals^{d_y})$ & $\relu$ & $\minwidth \leq d_x + d_y + 1$\\
\hline \hline
\cellcolor{Gray}{\bf Ours} (Theorem~\ref{thm:relulp}) & \cellcolor{Gray}$L^p(\reals^{d_x}, \reals^{d_y})$ &
\cellcolor{Gray}$\relu$ & 
\cellcolor{Gray}$\minwidth = \max \{d_x+1, d_y\}$\\
\cellcolor{Gray}{\bf Ours} (Theorem~\ref{thm:reluuniflower}) & \cellcolor{Gray}$C([0,1], \reals^{2})$ &
\cellcolor{Gray}$\relu$ & 
\cellcolor{Gray}$\minwidth = 3 > \max \{d_x+1, d_y\}$\\
\cellcolor{Gray}{\bf Ours} (Theorem~\ref{thm:reluthresunif}) & \cellcolor{Gray}$C(\mathcal K, \reals^{d_y})$ &
\cellcolor{Gray}$\relu$+$\step$ & 
\cellcolor{Gray}$\minwidth = \max \{d_x+1, d_y\}$\\
\cellcolor{Gray}{\bf Ours} (Theorem~\ref{thm:generallp}) & \cellcolor{Gray}$L^p(\mathcal K, \reals^{d_y})$ &
\cellcolor{Gray}conti.\ nonpoly\tnote{$\ddag$} & 
\cellcolor{Gray}$\minwidth \leq \max \{d_x+2, d_y+1\}$\\
\hline
\end{tabular}
\begin{tablenotes}
\item[$\dag$] requires that $\rho$ is uniformly approximated by a sequence of one-to-one functions.
\item[$\ddag$] requires that $\rho$ is continuously differentiable at at least one point (say $z$), with $\rho'(z) \neq 0$.
\end{tablenotes}
\end{threeparttable}
\label{tbl:summary}
\end{table}

Before summarizing existing results, we first define function classes studied in the literature.
For a domain $\mc X \subseteq \reals^{d_x}$ and a codomain $\mc Y \subseteq \reals^{d_y}$, we define $C(\mc X, \mc Y)$ to be the class of continuous functions from $\mc X$ to $\mc Y$, endowed with the uniform norm: $\linf{f} \defeq \sup_{x \in \mc X} \linf{f(x)}$.
For $p \in [1,\infty)$, we also define $L^p(\mc X, \mc Y)$ to be the class of $L^p$ functions from $\mc X$ to $\mc Y$, endowed with the $L^p$-norm: $\norm{f}_p \defeq (\int_{\mc X} \norm{f(x)}_p^p dx)^{1/p}$. 
The summary of known upper and lower bounds in the literature, as well as our own results, is presented in Table~\ref{sec:intro}. We use $\minwidth$ to denote the minimum width for universal approximation.

\paragraph{First progress.} 
As aforementioned, \citet{lu17} show that universal approximation of $L^1(\reals^{d_x}, \reals)$ is possible for a width-($d_x+4$) $\relu$ network, but impossible for a width-$d_x$ $\relu$ network. These results translate into bounds on the minimum width: $d_x + 1 \leq \minwidth \leq d_x + 4$.
\citet{hanin17} consider approximation of $C(\mc K, \reals^{d_y})$, where $\mc K \subset \reals^{d_x}$ is compact. They prove that $\relu$ networks of width $d_x+d_y$ are dense in $C(\mc K, \reals^{d_y})$, while width-$d_x$ $\relu$ networks are \emph{not}. Although this result fully characterizes $\minwidth$ in case of $d_y=1$, it fails to do so for $d_y > 1$.

\paragraph{General activations.} 
Later, extensions to activation functions other than $\relu$ have appeared in the literature. \citet{johnson19} shows that if the activation function $\rho$ is uniformly continuous and can be uniformly approximated by a sequence of one-to-one functions, a width-$d_x$ network cannot universally approximate $C(\mc K, \reals)$.
\cite{kidger19} show that if $\rho$ is continuous, nonpolynomial, and continuously differentiable at at least one point (say $z$) with $\rho'(z) \neq 0$, then networks of width $d_x+d_y+1$ with activation $\rho$ are dense in $C(\mc K, \reals^{d_y})$.
Furthermore, \cite{kidger19} prove that $\relu$ networks of width $d_x+d_y+1$ are dense in $L^p(\reals^{d_x},\reals^{d_y})$.

\paragraph{Limitations of prior arts.}
Note that none of the existing works succeeds in closing the gap between the upper bound (at least $d_x+d_y$) and the lower bound (at most $d_x+1$). 
This gap is significant especially for applications with high-dimensional codomains (i.e., large $d_y$) such as image generation \citep{kingma13,goodfellow14}, language modeling \citep{devlin19,liu19}, and molecule generation \citep{gomez18,jin18}.
In the prior arts, the main bottleneck for proving an upper bound below $d_x+d_y$ is that they maintain all $d_x$ neurons to store the input and all $d_y$ neurons to construct the function output; this means every layer already requires at least $d_x+d_y$ neurons.
In addition, the proof techniques for the lower bounds only consider the input dimension $d_x$ regardless of the output dimension $d_y$.

\subsection{Summary of results}
We mainly focus on characterizing the minimum width of \textsc{ReLU} networks for universal approximation. 
Nevertheless, our results are not restricted to \textsc{ReLU} networks; they can be generalized to networks with general activation functions.
Our contributions can be summarized as follows.

\begin{list}{{\tiny$\bullet$}}{\leftmargin=1.8em}
  \setlength{\itemsep}{1pt}
  \vspace*{-4pt}
    \item 
    Theorem~\ref{thm:relulp} states that the minimum width for \textsc{ReLU} networks to be dense in $L^p(\mathbb R^{d_x},\mathbb R^{d_y})$ is exactly $\max\{d_x+1,d_y\}$.
    This is the first result fully characterizing the minimum width of \textsc{ReLU} networks for universal approximation.
    In particular, the upper bound on the minimum width is significantly smaller than the best known result $d_x+d_y+1$ \citep{kidger19}.
    
    \item Given the full characterization of $\minwidth$ of $\relu$ networks for approximating $L^p(\reals^{d_x}, \reals^{d_y})$, a~natural question arises: Is $\minwidth$ also the same for $C(\mc K, \reals^{d_y})$? We prove that it is \emph{not} the case;
    Theorem~\ref{thm:reluuniflower} shows that the minimum width for $\relu$ networks to be dense in $C([0,1],\mathbb R^{2})$ is $3$.
    Namely, \textsc{ReLU} networks of width $\max\{d_x+1,d_y\}$ are \emph{not} dense in $C(\mathcal K,\mathbb R^{d_y})$ in general.

    \item In light of Theorem~\ref{thm:reluuniflower}, is it impossible to approximate $C(\mc K, \reals^{d_y})$ in general while maintaining width $\max\{d_x+1,d_y\}$?
    Theorem~\ref{thm:reluthresunif} shows that an additional activation comes to rescue.
    We show that if networks use \emph{both} $\relu$ and threshold activation functions (which we refer to as $\step$)\footnote{The threshold activation function (i.e., $\step$) denotes $x \mapsto \mathbf{1}[x\ge0]$.}, they can universally approximate $C(\mc K, \reals^{d_y})$ with the minimum width $\max\{d_x+1,d_y\}$.
    
    \item Our proof techniques for tight upper bounds are not restricted to \textsc{ReLU} networks. In Theorem~\ref{thm:generallp}, we extend our results to general activation functions covered in \cite{kidger19}. 
\end{list}

\subsection{Organization}
We first define necessary notation in Section~\ref{sec:notation}.
In Section~\ref{sec:mainresult}, we formally state our main results and discuss their implications.
In Section~\ref{sec:upper}, we present our ``coding scheme'' for proving upper bounds on the minimum width in Theorems~\ref{thm:relulp}, \ref{thm:reluthresunif} and \ref{thm:generallp}.
In Section~\ref{sec:lower}, we prove the lower bound in Theorem~\ref{thm:reluuniflower} by explicitly constructing a counterexample.
Finally, we conclude the paper in Section~\ref{sec:conclusion}. 
We note that all formal proofs of Theorems~\ref{thm:relulp}--\ref{thm:generallp} are presented in Appendix.

\section{Problem setup and notation}\label{sec:notation}

Throughout this paper, we consider fully-connected neural networks that can be described as an alternating composition of affine transformations and activation functions. Formally, we consider the following setup: Given a set of activation functions $\Sigma$, an $L$-layer neural network $f$ of input dimension $d_x$, output dimension $d_y$, and hidden layer dimensions $d_1, \dots, d_{L-1}$\footnote{For simplicity of notation, we let $d_x=d_0$ and $d_y=d_L$.} is represented as
\begin{align}
f:=t_L\circ\sigma_{L-1}\circ\cdots\circ t_2\circ\sigma_1\circ t_1,\label{eq:neuralnet}
\end{align}
where $t_\ell:\mathbb{R}^{d_{\ell-1}}\rightarrow\mathbb{R}^{d_{\ell}}$ is an affine transformation and $\sigma_\ell$ is a vector of activation functions: 
\begin{align*}
    \sigma_\ell(x_1,\dots,x_{d_\ell})=\big(\rho_1(x_1),\dots,\rho_{d_\ell}(x_{d_\ell})\big),
\end{align*}
where $\rho_i\in\Sigma$. While we mostly consider the cases where $\Sigma$ is a singleton (e.g., $\Sigma = \{\textsc{ReLU}\}$), we also consider the case where $\Sigma$ contains both \textsc{ReLU} and \textsc{Step} activation functions as in Theorem \ref{thm:reluthresunif}. We denote a neural network with $\Sigma=\{\rho\}$ by a ``$\rho$ network'' and a neural network with $\Sigma=\{\rho_1,\rho_2\}$ by a ``$\rho_1$+$\rho_2$ network.''
We define the \emph{width} $w$ of $f$ as the maximum over $d_1,\dots,d_{L-1}$.

For describing the universal approximation of neural networks, we say $\rho$ networks (or $\rho_1$+$\rho_2$ networks) of width $w$ are dense in $C(\mathcal X,\mathcal Y)$ if for any $f^* \in C(\mathcal X,\mathcal Y)$ and $\varepsilon>0$, there exists a $\rho$ network (or a $\rho_1$+$\rho_2$ network) $f$ of width $w$ such that $\|f^*-f\|_\infty\le\varepsilon$.
Likewise, we say $\rho$ networks (or $\rho_1$+$\rho_2$ networks) are dense in $L^p(\mathcal X,\mathcal Y)$ if for any $f^* \in L^p(\mathcal X,\mathcal Y)$ and $\varepsilon>0$, there exists a $\rho$ network (or a $\rho_1$+$\rho_2$ network) $f$ such that $\|f^*-f\|_p\le\varepsilon$.

\section{Minimum width for universal approximation}\label{sec:mainresult}

\paragraph{$L^p$ approximation with $\relu$.}
We present our main theorems in this section.
First, for universal approximation of $L^p(\mathbb R^{d_x},\mathbb R^{d_y})$ using \textsc{ReLU} networks, we give the following theorem.
\begin{theorem}\label{thm:relulp}
For any $p \in [1, \infty)$, \textsc{ReLU} networks of width $w$ are dense in $L^p(\mathbb{R}^{d_x},\mathbb{R}^{d_y})$ if and only if $w\ge \max\{d_x+1,d_y\}$.
\end{theorem}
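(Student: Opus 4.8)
The plan is to prove the two directions separately: the \emph{sufficiency} (that width $\max\{d_x+1,d_y\}$ suffices to be dense in $L^p(\reals^{d_x},\reals^{d_y})$) by an explicit construction, and the \emph{necessity} (that no smaller width works) by two separate lower-bound arguments, one handling the $d_x+1$ term and one handling the $d_y$ term.

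For sufficiency, I would first reduce the problem to approximating a nice dense subclass. Since compactly-supported simple functions (finite sums of indicators of boxes times constant vectors in $\reals^{d_y}$), or more conveniently continuous functions on a large cube that are piecewise-constant outside a compact set, are dense in $L^p(\reals^{d_x},\reals^{d_y})$, it suffices to approximate such a target $f^*$ in $L^p$. The key idea — the ``coding scheme'' advertised in the introduction — is to avoid the naive layout that keeps $d_x$ neurons for the input \emph{and} $d_y$ neurons for the output simultaneously. Instead, the network operates in three phases. Phase 1 (\emph{encode}): using roughly $d_x+1$ neurons, quantize the input $x\in\reals^{d_x}$ into one of finitely many cells and compress the resulting discrete information into a single scalar ``code'' $c(x)$, carried alongside a running output register; this is where $d_x+1$ width is needed, since one needs the $d_x$ coordinates plus at least one extra neuron to accumulate/transfer information. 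Phase 2 (\emph{memorize/transcribe}): with width only $1+d_y$ or so, read off the code $c(x)$ and, cell by cell, write the correct target value into the $d_y$ output neurons — a $\relu$ network can implement such a piecewise-constant-in-$c$ map by a telescoping sum of ``bump'' functions, each bump adjusting the output register by the appropriate increment on the interval of $c$-values corresponding to one cell. Phase 3 (\emph{decode}): output the $d_y$ registers. The width used is $\max$ of the per-phase requirements, namely $\max\{d_x+1, d_y\}$ (absorbing the $+1$ bookkeeping neuron into whichever side already has slack, or noting that when $d_y\ge d_x+1$ the $d_y$ output neurons double as scratch space). I would present the construction for continuous targets and argue that, since $\relu$ networks are continuous, the unavoidable errors at cell boundaries occur on a set of arbitrarily small measure, so the $L^p$ error is controlled; this is cleaner than aiming for uniform control, and indeed the companion Theorem~\ref{thm:reluuniflower} shows uniform control is genuinely impossible at this width.

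For necessity I must show width $w$ fails whenever $w<d_x+1$ or $w<d_y$. The bound $w\ge d_y$ is the easy half: a $\relu$ network whose \emph{final} hidden layer has width $<d_y$ produces outputs lying (after the last affine map) in a fixed affine subspace of $\reals^{d_y}$ of dimension $<d_y$ — more precisely its range is contained in a translate of a $(w$-dimensional) subspace — so it cannot $L^p$-approximate, e.g., a function whose image is not contained in any such subspace (take $f^*$ surjective-ish onto a small ball). Actually one must be slightly careful since intermediate layers could be wider, but ``width $w$'' bounds \emph{all} hidden layers including the last, so the argument goes through; I'd make the measure-theoretic version precise. The bound $w\ge d_x+1$ is the substantive half and I would invoke the known topological obstruction for width-$d_x$ $\relu$ networks (the level-set / connectedness argument of \citet{lu17}, refined by \citet{hanin17}): a width-$d_x$ $\relu$ network, viewed as a map $\reals^{d_x}\to\reals$ coordinatewise, has the property that every preimage of a ``deep'' sublevel set is unbounded or connected in a way incompatible with approximating, in $L^p$, a function like a compactly-supported bump that separates space into a bounded piece and its complement. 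The cleanest route is: assume width $\le d_x$; show each hidden activation vector $\sigma_\ell\circ t_\ell\circ\cdots$ is a continuous piecewise-linear map $\reals^{d_x}\to\reals^{d_x}$ that is monotone along lines in a suitable sense, so that the composed network's sublevel sets $\{f< a\}$ cannot be bounded with nonempty bounded complement of positive measure — contradicting approximation of a target that is $1$ on a large ball and $0$ outside a slightly larger one.

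The main obstacle I anticipate is the sufficiency construction, specifically making the ``coding scheme'' fit inside width exactly $\max\{d_x+1,d_y\}$ rather than, say, $d_x+d_y$ or $\max\{d_x+1,d_y\}+1$: the delicate point is scheduling the phases so that the scratch neuron(s) needed for encoding/transcribing are reused from the output registers once $d_y$ is large, and so that during the encoding phase (when only $d_x+1$ neurons are available and $d_y$ may be large) the not-yet-written output coordinates are correctly handled — e.g.\ initialized and preserved — without extra width. One needs a careful argument (probably an inductive, layer-by-layer lemma) that a single scratch neuron suffices to carry the accumulated code while the input coordinates are gradually ``consumed,'' and that $\relu$'s one-sided linearity is enough to implement the requisite quantization (thresholding a coordinate into finitely many bins) and the telescoping bump sum. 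A secondary subtlety is keeping everything compactly supported / integrable so the $L^p$ norm is finite, which the reduction to compactly-supported targets handles but must be stated carefully.
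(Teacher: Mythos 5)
Your overall decomposition — a coding-scheme construction for the upper bound and two separate lower bounds (affine-range argument for $w\ge d_y$, level-set topology for $w\ge d_x+1$) — is the same as the paper's. The lower-bound sketches are on track: the paper makes your ``image lies in a $<d_y$-dimensional affine subspace'' statement quantitative with a volumetric bound against a simplex-traversal curve, and the $d_x+1$ half is exactly the Lu/Hanin level-set obstruction, sharpened to the dichotomy that a width-$d_x$ $\relu$ network on $\reals^{d_x}$ is either not in $L^p$ or identically zero. Those parts you would be able to fill in.

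The genuine gap is in the upper-bound width bookkeeping, and you correctly flagged it but did not close it. Your ``telescoping bump'' transcription phase, as described, keeps the scalar code $c$ alive \emph{alongside} all $d_y$ output registers, so it needs $d_y+1$ neurons; when $d_y\ge d_x+1$ this gives $\max\{d_x+1,d_y+1\}=d_y+1>d_y$, and the remark that the output neurons ``double as scratch'' does not say how that extra neuron is actually removed. The mechanism the paper uses is a specific iterative structure for the decoder: rather than accumulating all $d_y$ coordinates while a full-length code persists, the decoder peels off one output coordinate at a time, at each step holding the already-extracted coordinates plus one \emph{shortened} residual code, applying a width-$2$ subnetwork that splits the residual into (next coordinate, rest of code); at the final step the exhausted residual \emph{is} the last coordinate, so it is recovered as a linear combination of the code and the first $d_y-1$ outputs rather than stored separately. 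The shrinking code and the growing outputs therefore never jointly occupy more than $d_y$ neurons. You noticed that the input coordinates must be ``gradually consumed'' during encoding; the missing symmetric idea is that the code must be \emph{gradually consumed} during decoding, and that is precisely what brings the decoder width down from $d_y+1$ to $d_y$. (A secondary organizational difference: the paper interposes a width-$2$ one-dimensional ``memorizer'' — a piecewise-linear map from input codewords to target codewords — between encoder and decoder, rather than fusing it into the transcription phase; this separation is not essential, but the iterative-peeling decoder is.)
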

This theorem shows that the minimum width $\minwidth$ for universal approximation is exactly equal to $\max \{d_x+1, d_y\}$. In order to provide a tight characterization of $\minwidth$, we show three new upper and lower bounds: $\minwidth \leq \max\{d_x+1, d_y\}$ through a construction utilizing a coding approach, $\minwidth \geq d_y$ through a volumetric argument, and $\minwidth\ge d_x+1$ through an extension of the same lower bound for $L^1(\mathbb R^{d_x},\mathbb R^{d_y})$ \citep{lu17}.
Combining these bounds gives the tight minimum width $\minwidth = \max\{d_x + 1, d_y\}$.

Notably, using our new proof technique, we overcome the limitation of existing upper bounds that require width at least $d_x + d_y$. Our construction first encodes the $d_x$ dimensional input vectors into one-dimensional codewords, and maps the codewords to target codewords using memorization, and decodes the target codewords to $d_y$ dimensional output vectors. Since we construct the map from input to target using scalar codewords, we bypass the need to use $d_x+d_y$ hidden nodes. More details are found in Section~\ref{sec:upper}.
Proofs of the lower bounds are deferred to Appendices \ref{sec:pflb:general}, \ref{sec:pfthm:relulplb}.

\paragraph{Uniform approximation with $\relu$.} We have seen in Theorem~\ref{thm:relulp} a tight characterization $\minwidth = \max\{d_x+1,d_y\}$ for $L^p(\reals^{d_x}, \reals^{d_y})$ functions. Does the same hold for $C(\mathcal K,\mathbb R^{d_y})$, for a compact $\mc K \subset \reals^{d_x}$? Surprisingly, we show that the same conclusion does \emph{not} hold in general. Indeed, we show the following result, proving that width $\max\{d_x+1,d_y\}$ is \emph{provably insufficient} for $d_x = 1, d_y = 2$.
\begin{theorem}\label{thm:reluuniflower}
\textsc{ReLU} networks of width $w$ are dense in $C([0,1],\mathbb{R}^2)$ if and only if $w \ge 3$.
\end{theorem}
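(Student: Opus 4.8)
The plan is to prove the two directions separately. The ``if'' direction needs nothing new: \textsc{ReLU} networks of width $d_x+d_y=3$ are already dense in $C(\mathcal{K},\mathbb R^{d_y})$ by \citet{hanin17}, so width-$3$ \textsc{ReLU} networks are dense in $C([0,1],\mathbb R^2)$. Thus all the work is in the ``only if'' direction, i.e.\ showing that width-$2$ \textsc{ReLU} networks are \emph{not} dense in $C([0,1],\mathbb R^2)$, which I would do by exhibiting one $f^\star\in C([0,1],\mathbb R^2)$ that no width-$2$ \textsc{ReLU} network approximates in $\|\cdot\|_\infty$. The natural candidate is a constant-speed parametrization of a simple closed curve winding once around an interior point — e.g.\ $f^\star(x)=\big(2+\cos 2\pi x,\ 2+\sin 2\pi x\big)$ — and the goal is to show that for all sufficiently small $\varepsilon>0$ there is no width-$2$ network $N$ with $\norm{N-f^\star}_\infty<\varepsilon$.

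The engine of the argument is the rigidity of width-$2$ networks. First, the first hidden layer is a \emph{monotone arc}: writing $h^{(1)}(x)=\relu(ax+b)$, each coordinate $x\mapsto\relu(a_ix+b_i)$ is monotone, so $h^{(1)}$ traces a polygonal monotone arc $\mathcal A\subseteq[0,\infty)^2$ with at most two breakpoints, and $N=g\circ h^{(1)}$ for a width-$2$ \textsc{ReLU} network $g\colon\mathbb R^2\to\mathbb R^2$. Second, along any curve each hidden layer of $g$ acts either as an invertible affine map, or as a rank-one affine map (which collapses the curve into a line), or as $\relu$ — and $\relu$ on $\mathbb R^2$ is the retraction onto the convex orthant $[0,\infty)^2$, which can only ``flatten'' a curve toward the axes, never ``round'' it out. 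Combining these: when none of $g$'s activations clips along $\mathcal A$ and all its linear parts are invertible, $g$ restricted to $\mathcal A$ is a \emph{single} affine map, so $N([0,1])$ is a polygonal arc with at most two breakpoints and cannot be $\varepsilon$-close to a closed curve of diameter $\gg\varepsilon$; the only remaining scenarios are those in which a clip or a rank drop occurs somewhere in $g$.

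The heart of the proof is to show that the ``clip / rank drop'' scenarios do not help either, by carrying a topological invariant backward through the layers. Since $\norm{N-f^\star}_\infty<\varepsilon$ and $f^\star$ is, up to a short chord, a loop winding once about $c_0=(2,2)$, the network must produce a curve with total turning close to $2\pi$ about $c_0$. One then pulls this back layer by layer, using that an invertible affine layer carries turning about a point to turning about its preimage, a rank-one affine layer sends the curve into a line and hence kills its turning, and a $\relu$ layer — being straight-line homotopic to the identity through $y\mapsto(1-s)\relu(y)+sy$, which never hits a point of the open orthant — preserves turning about interior orthant points and, by convexity of the orthant, cannot create turning about exterior points. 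Pushed all the way back, this would force the two-breakpoint monotone arc $h^{(1)}$ to carry turning close to $2\pi$ about some point, which a direct check on such arcs rules out; hence no such $N$ exists.

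The step I expect to be the main obstacle is keeping this backward pull-back valid through the $\relu$ layers when the curve does not sit in the open orthant: the affine layers can rescale so that, a few layers back, the curve straddles or hugs an axis, where $\relu$ genuinely deforms it and the clean ``turning is preserved'' statement fails. Making this rigorous is the technical core, and I would organize it as an induction on depth: at each step either peel off a $\relu$$+$invertible-affine pair that acts trivially because the current curve is (in the relevant sense) bounded away from the axes, or stop at the first layer where a clip or rank drop occurs and argue directly that, in order to still be $\varepsilon$-close to $f^\star$ downstream, the curve there has already been flattened onto an axis or collapsed onto a line and therefore cannot carry the required turning. The delicate quantitative point — showing that closeness to $f^\star$ really does force the intermediate curves to stay away from the axes until such a collapse — is where the precise shape and placement of $f^\star$ (a small circle, well inside a large orthant, with carefully chosen radius) must be exploited.
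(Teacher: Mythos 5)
Your ``if'' direction is fine and identical to the paper's: cite \citet{hanin17} for width $d_x + d_y = 3$. The real content is the ``only if'' direction, and here you take a genuinely different route from the paper, so let me compare and then point at the gap.

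\emph{What you do differently.} You take $f^\star$ to be a round circle and propose to track a \emph{winding number} (``turning about $c_0$'') of the output loop backward through the layers, using that invertible affine maps carry winding to winding, rank drops kill it, and $\sigma=\relu$ on $\mathbb R^2$ is a retraction onto the convex orthant and hence (via the straight-line homotopy you describe) preserves winding about interior orthant points and cannot create winding about exterior points. The paper instead takes $f^\star$ to be an \emph{open} piecewise-linear arc that nearly closes up, isolates the \emph{last} layer $\ell^*$ that still touches a fixed box $\mathcal B$, and argues set-theoretically about bounded path-connected components of $\mathbb R^2 \setminus (g_{\ell^*}([p_2,1]) \cup \mathcal B)$ via the Jordan curve theorem (Lemmas~\ref{lem:relu}, \ref{lem:topology}, \ref{lem:topology2}). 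The core insight — $\relu$ as a retraction onto a convex region, so ``trapped stays trapped'' — is shared, but the bookkeeping is quite different, and the paper's choice of an open curve plus a frozen box is precisely what makes the bookkeeping go through.

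\emph{Where your version has genuine gaps.} First, winding number is an invariant of \emph{closed} curves, but $f^\star$ and $N$ are arcs; you close with a chord. The problem is that the closing chord does not pull back through $\relu$ layers. If at layer $\ell+1$ you close with the chord $\eta_{\ell+1}$ from $h_{\ell+1}(1)$ to $h_{\ell+1}(0)$, its relation to the chord $\eta_\ell$ one layer earlier is $\eta_{\ell+1} \ne \sigma \circ t_{\ell+1} \circ \eta_\ell$ in general (ReLU bends line segments). So the winding of your closed curve at layer $\ell+1$ differs from the transported winding at layer $\ell$ by the winding of a ``correction loop'' $\eta_{\ell+1} \cdot \overline{\sigma \circ t_{\ell+1}\circ \eta_\ell}$, and nothing in the sketch controls this — the endpoints $h_\ell(0),h_\ell(1)$ need not be close to each other once you invert the (arbitrarily ill-conditioned) affine maps, so ``short chord'' is not available at intermediate layers. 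The paper sidesteps exactly this by never closing with a $1$-dimensional chord: $\mathcal B$ is a $2$-dimensional set, and the choice of $\ell^*$ guarantees $\mathcal B$ is invariant under all later layers, so the ``closing'' part of the loop is frozen.

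Second, even granting the pull-back, your terminal ``direct check'' is false. You claim a two-breakpoint monotone arc $h^{(1)}([0,1])$ closed with a chord cannot wind once about any point. It can: take $h^{(1)}(x) = \bigl(x,\ \relu(x - \tfrac12)\bigr)$ on $[0,1]$, an arc from $(0,0)$ through $(\tfrac12,0)$ to $(1,\tfrac12)$, closed with the chord back to $(0,0)$. This bounds a nonempty region inside the open orthant (e.g.\ it winds once about $(\tfrac34,\tfrac{5}{16})$). So the induction, if it closed, would \emph{not} reach a contradiction at the base case; you would have to push one layer further, to the raw line segment $t_1([0,1])$ closed with a chord, which does wind zero — but that extra step is precisely a pull-back through a $\relu$, and it suffers from the chord problem above with full force. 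Finally, you yourself flag the remaining delicate cases (point on the orthant boundary, curve hugging an axis, rank drop) as unresolved, and they are: the paper handles these by perturbing to invertible affine maps and by the $\ell^*$ mechanism, neither of which appears in your sketch. The high-level topological intuition is sound and close in spirit to the paper's, but as written the argument does not close.
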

Theorem~\ref{thm:reluuniflower} translates to $\minwidth = 3$, and the upper bound $\minwidth \leq 3 = d_x + d_y$ is given by \citet{hanin17}. The key is to prove a lower bound $\minwidth \geq 3$, i.e., width $2$ is not sufficient. Recall from Section~\ref{sec:relatedwork} that all the known lower bounds are limited to showing that width $d_x$ is insufficient for universal approximation. A closer look at their proof techniques reveals that they heavily rely on the fact that the hidden layers have the same dimensions as the input space. As long as the width $w > d_x$, their arguments break because such a network maps the input space into a \emph{higher-dimensional} space.

Although only for $d_x = 1$ and $d_y = 2$, we overcome this limitation of the prior arts and show that width $w = 2 > d_x$ is \emph{insufficient} for universal approximation, by providing a counterexample. We use a novel topological argument which comes from a careful observation on the image created by $\relu$ operations. In particular, we utilize the property of $\relu$ that it projects all negative inputs to zero, without modifying any positive inputs. We believe that our proof will be of interest to readers and inspire follow-up works. Please see Section~\ref{sec:lower} for more details.

Theorem~\ref{thm:relulp} and Theorem~\ref{thm:reluuniflower} together imply that for $\relu$ networks, approximating $C(\mathcal K,\mathbb R^{d_y})$ requires more width than approximating $L^p(\mathbb R^{d_x},\mathbb R^{d_y})$.
Interestingly, this is in stark contrast with existing results, where the minimum \emph{depth} of $\relu$ networks for approximating $C(\mathcal K,\mathbb R^{d_y})$ is two \citep{leshno93} but it is  greater than two for approximating $L^p(\mathbb R^{d_x},\mathbb R^{d_y})$ \citep{qu19}.

\paragraph{Uniform approximation with $\relu$+$\step$.}
While width $\max\{d_x+1,d_y\}$ is insufficient for $\relu$ networks to be dense in $C(\mathcal K,\mathbb R^{d_y})$, an additional $\step$ activation function helps achieve the minimum width $\max\{d_x+1,d_y\}$, as stated in the theorem below.

\begin{theorem}\label{thm:reluthresunif}
\textsc{ReLU+Step} networks of width $w$ are dense in $C(\mathcal K,\mathbb{R}^{d_y})$ if and only if $w\ge \max\{d_x+1, d_y\}$.
\end{theorem}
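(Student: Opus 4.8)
The plan is to establish the two inequalities separately: $\minwidth\ge\max\{d_x+1,d_y\}$ and $\minwidth\le\max\{d_x+1,d_y\}$. For the lower bound I would show $\minwidth\ge d_y$ and $\minwidth\ge d_x+1$. The first is the volumetric argument already used for Theorem~\ref{thm:relulp}: if every hidden layer of $f$ has width at most $d_y-1$, then the last affine map $t_L$ has domain of dimension at most $d_y-1$, so $f(\mathcal K)$ lies in a proper affine subspace of $\reals^{d_y}$, and any fixed $f^*\in C(\mathcal K,\reals^{d_y})$ whose image has nonempty interior cannot be $\varepsilon$-approximated once $\varepsilon$ is small. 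The bound $\minwidth\ge d_x+1$ I would inherit from the known $\relu$ lower bounds for $C(\mathcal K,\reals^{d_y})$ (\citet{lu17,hanin17}, and the proof of Theorem~\ref{thm:relulp}), after verifying that the obstruction those proofs exhibit for width-$d_x$ networks is not removed by also allowing $\step$ — the one point in the lower bound that needs care, since $\step$ makes the network discontinuous whereas those arguments were written for continuous $\relu$ networks.

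The upper bound is the heart of the matter, and it is where $\step$ is needed. Fix $f^*\in C(\mathcal K,\reals^{d_y})$ and $\varepsilon>0$. Enclose $\mathcal K$ in a box and cut the box into $N=K^{d_x}$ axis-aligned half-open cells; by uniform continuity of $f^*$, for $K$ large $f^*$ varies by less than $\varepsilon/2$ on each cell. For each cell $i$ meeting $\mathcal K$ pick a value $w_i\in\reals^{d_y}$ within $\varepsilon/2$ of $f^*$ there, and — after an arbitrarily small perturbation — arrange that for each coordinate $k$ the numbers $w_1[k],\dots,w_N[k]$ are pairwise distinct. The piecewise-constant map sending $x$ in cell $i$ to $w_i$ is within $\varepsilon$ of $f^*$ on $\mathcal K$, so it suffices to realize it \emph{exactly} by a $\relu$+$\step$ network. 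Exactness is precisely what $\step$ buys: since $\step$ is a true step, the cell index of $x$ can be computed by a function that is literally constant on each half-open cell, with jumps supported on the cell walls and no transition region; a $\relu$-only network is continuous and can only produce the index up to a thin interpolating slab, and that slab is the very obstruction behind Theorem~\ref{thm:reluuniflower}.

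Following the coding scheme of Section~\ref{sec:upper}, the network is built in three stages. (i) \emph{Encoding}, within hidden width $d_x+1$: accumulate $\step(x_j-m/K)$ gates one layer at a time into a single register to form the base-$K$ cell index $c(x)\in\{0,\dots,N-1\}$, discarding each input coordinate as soon as it has been folded in, so the width never exceeds $d_x+1$. (ii) \emph{Memorization}: a scalar-to-scalar map carrying $c$ to a scalar target codeword, which costs only a constant width. (iii) \emph{Decoding}, within hidden width $d_y$: expand the scalar codeword into the $d_y$-dimensional output vector. Step (iii) is the delicate one: it requires a layer-by-layer schedule in which the neurons devoted at any time to ``the codeword information still needed'' together with ``the output coordinates produced so far'' never number more than $d_y$ — which is feasible because, thanks to the $\step$-discretization, every intermediate quantity takes only finitely many values, so the scalar maps relating them need only be specified on finite sets, and the distinctness of the $w_i[k]$ lets each such quantity determine the cell. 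Granting this, the assembled network has width $\max\{d_x+1,d_y\}$, on $\mathcal K$ it coincides with the piecewise-constant target, and hence is within $\varepsilon$ of $f^*$ in sup norm.

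The step I expect to be the main obstacle is exactly Step (iii): keeping the decoder within width $d_y$ rather than the naive $d_y+1$ (a dedicated codeword neuron plus a $d_y$-neuron output register) or the even cruder $d_x+d_y$ of prior work. The chain of finite-valued intermediates supplied by $\step$ is what makes the tight schedule possible, and getting this bookkeeping — together with the matching freeing of input neurons in the encoding stage — fully airtight is where the real work lies. A secondary but genuine point is the lower-bound check mentioned above: certifying that the width-$(d_x+1)$ obstruction of \citet{lu17,hanin17} is robust to the addition of the discontinuous $\step$ activation.
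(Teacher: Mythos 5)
Your upper-bound plan follows the paper's encoder/memorizer/decoder coding scheme, and your lower bound for $\minwidth\ge d_y$ is the paper's volumetric argument (Lemma~\ref{lem:dylb}) — though your phrasing ``$f^*$ whose image has nonempty interior'' fails when $d_x<d_y$, where $f^*(\mathcal K)$ is a thin set; what the paper actually uses is an $f^*$ hitting the $d_y+1$ vertices of a regular simplex, which no affine hyperplane can be uniformly close to. The one genuinely different ingredient in your sketch is the decoder: rather than peeling bits off a concatenated scalar codeword while carrying a residual-codeword neuron (the paper's Lemma~\ref{lem:decodingmod}), you perturb the targets so the $w_i[k]$ are pairwise distinct and then let the last-produced output coordinate serve as the codeword, chaining $w_i[k]\mapsto(w_i[k],w_i[k+1])$. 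This is a viable alternative and does keep the peak width at $d_y$: the map $x\mapsto(x,h(x))$ for $h$ specified on a finite set can be realized exactly with internal width $2$ (it is the $d_x=d_y=1$ case of \citet{hanin17}, cf.\ Lemma~\ref{lem:hanin}), and during the $j$-th such step you pass through $j-1$ already-extracted coordinates alongside the width-$2$ block, for a peak of $j+1\le d_y$. So, granting the encoder and memorizer, your upper bound would go through.

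The genuine gap is the lower bound $\minwidth\ge d_x+1$. You propose to ``inherit'' it from \citet{lu17,hanin17} after ``verifying'' that $\step$ does not remove the obstruction, and call this a secondary point. It is not secondary and it does not follow by inspection: those arguments are written for $\relu$ networks, whose layer maps are continuous and piecewise affine, and adding $\step$ strictly enlarges the class of width-$d_x$ networks, so the impossibility statement does not transfer for free. The paper handles this by proving a new structural lemma (Lemma~\ref{lem:level}): every nonempty level set of a width-$d_x$ $\relu$+$\step$ network that uses at least one $\step$ is unbounded. Its proof inducts on the first layer containing a $\step$, observing that the pre-activation of that $\step$ is constant along a full ray, and uses Hanin's level-set lemma (Lemma~\ref{lem:haninlevel}) to handle the preceding $\relu$-only layers. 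Paired with the sphere counterexample $f^*(x)=\sum_i(x_i-\tfrac12)^2$, whose level set $(f^*)^{-1}(0)$ is a single interior point enclosed by the sphere $(f^*)^{-1}(\tfrac14)$, this yields the bound. Without this lemma (or something playing its role), the ``only if'' direction of the theorem is unproved for networks that actually use $\step$; you have flagged the issue, but flagging is not proving.
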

Theorem~\ref{thm:reluuniflower} and Theorem~\ref{thm:reluthresunif} indicate that the minimum width for universal approximation is indeed \emph{dependent} on the choice of activation functions.
This is also in contrast to the classical results where $\relu$ networks of depth $2$ are universal approximators \citep{leshno93}, i.e., the minimum depths for universal approximation are identical for both \textsc{ReLU} networks and \textsc{ReLU}+\textsc{Step} networks.

Theorem~\ref{thm:reluthresunif} comes from a similar proof technique as Theorem~\ref{thm:relulp}. Due to its discontinuous nature, the $\step$ activation can be used in our encoder to quantize the input without introducing uniform norm errors. Lower bounds on $\minwidth$ can be proved in a similar way as Theorem~\ref{thm:relulp} (see Appendices \ref{sec:pflb:general}, \ref{sec:pfthm:reluthresuniflb}).

\paragraph{General activations.}
Our proof technique for upper bounds in Theorems~\ref{thm:relulp} and \ref{thm:reluthresunif} can be easily extended to networks using general activations. Indeed, we prove the following theorem, which shows that adding a width of $1$ is enough to cover the networks with general activations.

\begin{theorem}\label{thm:generallp}
Let $\rho:\mathbb{R}\rightarrow\mathbb{R}$ be any continuous nonpolynomial function which is continuously differentiable at at least one point, with nonzero derivative at that point. 
Then, $\rho$ networks of width $w$ are dense in $L^p(\mathcal K,\mathbb{R}^{d_y})$ for all $p\in[1,\infty)$ if $w\ge\max\{d_x+2,d_y+1\}$.
\end{theorem}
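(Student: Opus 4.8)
The plan is to reduce Theorem~\ref{thm:generallp} to the already-available $\relu$ construction of Theorem~\ref{thm:relulp} by showing that a width-$(\max\{d_x+1,d_y\})$ $\relu$ network can be simulated, up to arbitrarily small $L^p$ error on the compact domain $\mathcal K$, by a $\rho$ network of width $\max\{d_x+2,d_y+1\}$. The extra $+1$ budget per layer is the ``scratch'' coordinate that lets us mimic a $\relu$ gate with $\rho$. First I would recall the standard one-neuron trick from \citet{kidger19}: if $\rho$ is continuously differentiable at $z$ with $\rho'(z)\ne 0$, then for small $h$,
\begin{align*}
\frac{\rho(z+hx)-\rho(z)}{h\rho'(z)} \to x
\end{align*}
uniformly for $x$ in any bounded interval, so $\rho$ can approximate the identity on compacta using one neuron and one affine pre/post-composition; similarly, nonpolynomiality plus the Leshno--Lin--Pinkus argument gives that $\rho$ networks of bounded width can approximate $\relu$ (equivalently, any continuous univariate function) arbitrarily well on a compact interval. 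The point is that both ``approximate identity'' and ``approximate $\relu$'' are available with a \emph{constant} number of neurons.

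Next I would set up the simulation layer by layer. Fix $f^* \in L^p(\mathcal K,\reals^{d_y})$ and $\varepsilon>0$. By Theorem~\ref{thm:relulp} there is a $\relu$ network $g$ of width $w_0 \defeq \max\{d_x+1,d_y\}$ with $\|f^* - g\|_p \le \varepsilon/2$. Write $g = t_L\circ\relu\circ\cdots\circ\relu\circ t_1$. Since $\mathcal K$ is compact, the image of each intermediate layer of $g$ lies in a bounded box $B_\ell \subset \reals^{d_\ell}$. I would then replace each $\relu$ layer acting on $\le w_0$ coordinates by a $\rho$-block that uses $w_0 + 1 \le \max\{d_x+2,d_y+1\}$ neurons: process the coordinates that are about to be fed into the next affine map through the $\rho$-implementation of $\relu$ (using one auxiliary neuron as workspace if needed, exactly as in the scalar construction), while the remaining coordinates are carried through via the $\rho$-implementation of the identity. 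Concretely, one keeps the true width at $w_0+1$ by noting that at most $w_0$ coordinates need to survive between consecutive affine maps, and the $(w_0+1)$-th slot is reused each time to emulate the nonlinearity on one coordinate at a time if a simultaneous emulation is not directly possible; alternatively, since $\rho$ approximates \emph{any} continuous univariate function on a compact interval to arbitrary accuracy, each coordinate's $\relu$ can be emulated in place by a constant-width gadget and the bookkeeping only needs one spare coordinate per layer. Composing these blocks yields a $\rho$ network $\tilde g$ of width $\max\{d_x+2,d_y+1\}$ whose layer maps are uniformly within $\delta$ of those of $g$ on the relevant boxes.

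Then I would push the per-layer error through the composition. Each affine map $t_\ell$ is Lipschitz with some constant $A_\ell$ on the bounded set where it is evaluated, and $\relu$ is $1$-Lipschitz, so a telescoping/induction argument gives $\|g - \tilde g\|_{\infty,\mathcal K} \le C\,\delta$ for a constant $C$ depending only on $g$ (its depth and the $A_\ell$); choosing $\delta$ small makes this $\le \varepsilon/(2|\mathcal K|^{1/p})$, hence $\|g - \tilde g\|_p \le \varepsilon/2$ and $\|f^* - \tilde g\|_p \le \varepsilon$. One subtlety is that the $\rho$-gadgets are only accurate on a prescribed compact interval, while a slightly perturbed input could in principle leave that interval; this is handled by enlarging each box $B_\ell$ by a fixed margin before building the gadgets, and noting inductively that the accumulated perturbation stays within the margin. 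The main obstacle, and the place requiring the most care, is precisely this width-accounting: verifying that emulating $\relu$ on each of the up-to-$w_0$ coordinates of a hidden layer, together with the identity pass-through, can be done with only \emph{one} extra neuron rather than several. I expect to resolve it by observing that the identity-emulation gadget for $\rho$ can be folded into the affine maps surrounding each nonlinearity (so no dedicated identity neurons are needed), leaving a single genuine auxiliary coordinate; if a one-neuron-at-a-time emulation is used, one inserts extra affine layers, which does not change the width. Aside from that accounting, everything is a routine uniform-approximation-plus-Lipschitz estimate.
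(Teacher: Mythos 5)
Your overall plan — go through the coding scheme / reuse the $\relu$ construction and replace $\relu$ gates by $\rho$-gadgets — is in the right spirit, and the Lipschitz error-propagation and box-enlargement argument is fine. But the width accounting is off by one, and that off-by-one is exactly the hard part of this theorem, so the gap is fatal.

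The specific wrong step is the claim that ``the identity-emulation gadget for $\rho$ can be folded into the affine maps surrounding each nonlinearity (so no dedicated identity neurons are needed), leaving a single genuine auxiliary coordinate.'' Only the \emph{affine} pre/post-composition of the one-neuron identity trick $x\mapsto \bigl(\rho(z+hx)-\rho(z)\bigr)/\bigl(h\rho'(z)\bigr)$ can be absorbed into $t_\ell$; the $\rho(\cdot)$ application itself must occupy a neuron in the hidden layer. So each of the $w_0-1$ pass-through coordinates still costs one neuron. Now consider the gadget that emulates $\relu$ on the one remaining coordinate. For a \emph{generic} $\rho$ satisfying only the hypotheses of the theorem, the Kidger-style ``register'' construction represents $\relu(x)\approx\sum_k c_k\rho(w_k x+d_k)$ by keeping, across several $\rho$-layers, (i) an approximate copy of the input $x$ (needed to form each new term), (ii) the running accumulator, and (iii) one scratch slot in which $\rho(w_k x+d_k)$ is actually computed; this is precisely why Lemma~\ref{lem:kidger} needs width $d_x+d_y+1=3$ for a scalar $1\to1$ map. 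Two slots do not suffice: if the input slot is reused as the accumulator you lose $x$ after the first term, and $\rho$ is not assumed invertible so you cannot recover it. Hence a one-coordinate-at-a-time emulation of a width-$w_0$ $\relu$ layer needs $(w_0-1)+3=w_0+2$ neurons, and your construction actually yields width $\max\{d_x+3,d_y+2\}$, not the claimed $\max\{d_x+2,d_y+1\}$.

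The paper avoids this by \emph{not} simulating the $\relu$ network layer by layer. It applies Lemma~\ref{lem:kidger} to the encoder, memorizer, and decoder viewed as continuous \emph{functions}, so that the widths it pays are governed by the input/output dimensions of those three pieces rather than by the internal width of the $\relu$ construction: the encoder is $\reals^{d_x}\to\reals$, giving $d_x+1+1=d_x+2$; the memorizer is $\reals\to\reals$, giving $3$; and the decoder is $\reals\to\reals^{d_y}$, for which generic Kidger would give $d_y+2$, so the paper proves a bespoke Lemma~\ref{lem:decodingrho} that exploits the decoder's recursive ``extract-one-coordinate-at-a-time'' structure to bring this down to $d_y+1$. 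That last specialized lemma is what makes the stated $\max\{d_x+2,d_y+1\}$ bound reachable; a generic layer-by-layer simulation of the $\relu$ network cannot get there. To repair your argument you would need to (a) pass to the encoder/memorizer/decoder decomposition rather than a black-box $\relu$ network, and (b) supply the decoder-specific construction in place of a generic one.
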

Please notice that unlike other theorems, Theorem~\ref{thm:generallp} only proves an upper bound $\minwidth \leq \max\{d_x+2,d_y+1\}$.
We note that Theorem~\ref{thm:generallp} significantly improves over the previous upper bound of width $d_x+d_y+1$ by \citep[Remark~4.10]{kidger19}.

\section{Tight upper bound on minimum width for universal\,approximation}\label{sec:upper}
In this section, we present the main idea for constructing networks achieving the minimum width for universal approximation, and then sketch the proofs of upper bounds in Theorems~\ref{thm:relulp}, \ref{thm:reluthresunif}, and \ref{thm:generallp}.

\subsection{Coding scheme for universal approximation}\label{sec:coding}
We now illustrate the main idea underlying the construction of neural networks that achieve the minimum width. 
To this end, we consider an approximation of a target continuous function $f^* \in C([0,1]^{d_x}, [0,1]^{d_y})$; however, our main idea can be easily generalized to other domain, codomain, and $L^p$ functions.
Our construction can be viewed as a \textit{coding scheme} in essence, consisting of three parts: \emph{encoder, memorizer, and decoder}. First, the encoder encodes an input vector to a one-dimensional codeword. Then, the memorizer maps the codeword to a one-dimensional target codeword that is encoded with respect to the corresponding target $f^*(x)$. Finally, the decoder maps the target codeword to a target vector which is sufficiently close to $f^*(x)$. Note that one can view the encoder, memorizer, and decoder as functions mapping from $d_x$-dimension to $1$-dimension, then to $1$-dimension, and finally to $d_y$-dimension.

The spirit of the coding scheme is that the three functions can be constructed using the idea of the prior results such as \citep{hanin17}.
Recall that \citet{hanin17} approximate any continuous function mapping $n$-dimensional inputs to $m$-dimensional outputs using $\relu$ networks of width $n+m$. 
Under this intuition, we construct the encoder, the memorizer, and the decoder by $\relu$+$\step$ networks (or $\relu$ networks) of width $d_x+1,2,d_y$, respectively; these constructions result in the tight upper bound $\max\{d_x+1,d_y\}$.
Here, the decoder requires width $d_y$ instead of $d_y+1$, as we only construct the first $d_y-1$ coordinates of the output, and recover the last output coordinate from a linear combination of the target codeword and the first $d_y-1$ coordinates.

Next, we describe the operation of each part. We explain their neural network constructions in subsequent subsections.

\begin{figure*}
\includegraphics[width=1\textwidth]{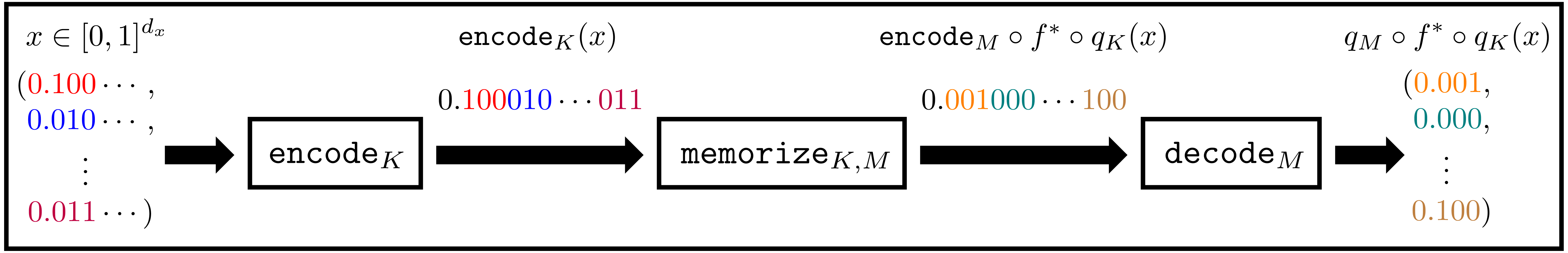}
\caption{Illustration of the coding scheme}\label{fig:coding}
\end{figure*}

\paragraph{Encoder.} 
Before introducing the encoder, we first define a quantization function $q_n:[0,1]\rightarrow\mathcal C_n$ for $n\in\mathbb N$ and $\mathcal C_n:=\{0,2^{-n},2\times2^{-n},\dots,1-2^{-n}\}$ as
\begin{align*}
q_n(x) := \max\{c\in\mathcal C_n:c\le x\}.
\end{align*}
In other words, given any $x \in [0,1)$, $q_n(x)$ preserves the first $n$ bits in the binary representation of $x$ and discards the rest; $x=1$ is mapped to $1-2^{-n}$. Note that the error from the quantization is always less than or equal to $2^{-n}$.

The encoder encodes each input $x\in[0,1]^{d_x}$ to some scalar value via the function $\mathtt{encode}_K:\mathbb{R}^{d_x}\rightarrow\mathcal C_{d_x K}$ for some $K\in\mathbb{N}$ defined as
\begin{align*}
    \mathtt{encode}_K(x):=\sum\nolimits_{i=1}^{d_x}q_K(x_i)\times2^{-(i-1)K}.
\end{align*}
In other words, $\mathtt{encode}_K(x)$ quantizes each coordinate of $x$ by a $K$-bit binary representation and concatenates the quantized coordinates into a single scalar value having a $(d_xK)$-bit binary representation. Note that if one ``decodes'' a codeword $\mathtt{encode}_K(x)$ back to a vector $\hat x$ as\footnote{Here, $\mathtt{encode}_K^{-1}$ denotes the preimage of $\mathtt{encode}_K$ and $\mathcal C_{K}^{d_x}$ is the Cartesian product of $d_x$ copies of $\mathcal C_{K}$.}
\begin{align*}
\{\hat x\}:=\big(\mathtt{encode}_K^{-1}\circ\mathtt{encode}_K(x)\big)\cap\mathcal C_{K}^{d_x}, 
\end{align*}
then $\|x-\hat x\|_\infty\le2^{-K}$.
Namely, the ``information loss'' incurred by the encoding can be made arbitrarily small by choosing large $K$.

\paragraph{Memorizer.} The memorizer maps each codeword $\mathtt{encode}_K(x)\in\mathcal C_{d_xK}$ to its target codeword
via the function $\mathtt{memorize}_{K,M}:\mathcal C_{d_xK}\rightarrow\mathcal C_{d_yM}$ for some $M\in\mathbb N$, defined as
\begin{align*}
    \mathtt{memorize}_{K,M}\big(\mathtt{encode}_K(x)\big):=\mathtt{encode}_M\big(f^*\circ q_K(x)\big)
\end{align*}
where $q_K$ is applied coordinate-wise for a vector.
We note that $\mathtt{memorizer}_{K,M}$ is well-defined as each $\mathtt{encode}_K(x)\in\mathcal C_{d_xK}$ corresponds to a unique $q_K(x)\in\mathcal C_K^{d_x}$.
Here, one can observe that the target of the memorizer contains the information of the target value 
since $\mathtt{encode}_M\big(f^*\circ q_K(x)\big)$ contains information of $f^*$ at a quantized version of $x$, and the information loss due to quantization can be made arbitrarily small by choosing large enough $K$ and $M$.

\paragraph{Decoder.} The decoder  decodes each codeword generated by the memorizer by the function $\mathtt{decode}_M:\mathcal C_{d_yM}\rightarrow\mathcal C_{M}^{d_y}$ defined as
\begin{align*}
    \mathtt{decode}_M(c):=\hat x\quad\text{where}\quad\{\hat x\}:=\mathtt{encode}_M^{-1}(c)\cap \mathcal C_M^{d_y}.
\end{align*}

Combining $\mathtt{encode}$, $\mathtt{memorize}$, and $\mathtt{decode}$ completes our coding scheme for approximating $f^*$.
One can observe that our coding scheme is equivalent to $q_M\circ f^*\circ q_K$ 
which can approximate the target function $f^*$ within any $\varepsilon>0$ error, i.e., 
\begin{align*}
\sup\nolimits_{x\in[0,1]^{d_x}}\|f^*(x)- \mathtt{decode}_M \circ \mathtt{memorize}_{K,M} \circ \mathtt{encode}_K(x)\|_\infty\le\varepsilon
\end{align*}
by choosing large enough $K,M\in\mathbb N$ so that $\omega_{f^*}(2^{-K})+2^{-M}\le\varepsilon$.\footnote{$\omega_{f^*}$ denotes the modulus of continuity of $f^*$: $\|f^*(x)-f^*(x^\prime)\|_\infty\le\omega_{f^*}(\|x-x^\prime\|_\infty)~~\forall x,x^\prime\in[0,1]^{d_x}$.}

In the remainder of this section, we discuss how each part of the coding scheme can be implemented with a neural network using $\relu$+$\step$ activations (Section~\ref{sec:pfsketch:reluthresunif}), $\relu$ activation (Section~\ref{sec:pfsketch:relulp}), and other general activations (Section~\ref{sec:pfsketchthm:generallp}).
\subsection{Tight upper~bound~on~minimum~width~of~\textsc{ReLU}+\textsc{Step}\,networks\,(Theorem\,\ref{thm:reluthresunif})}\label{sec:pfsketch:reluthresunif}
In this section, we discuss how we explicitly construct our coding scheme to approximate functions in $C(\mathcal K,\mathbb R^{d_y})$ using a width-$(\max\{d_x+1,d_y\})$ \textsc{ReLU}+\textsc{Step} network. This results in the tight upper bound in Theorem \ref{thm:reluthresunif}.

First, the encoder consists of quantization functions $q_K$ and a linear transformation.
However, as $q_K$ is discontinuous and cannot be uniformly approximated by any continuous function, we utilize the discontinuous $\step$ activation to exactly construct the encoder via a \textsc{ReLU}+\textsc{Step} network of width $d_x+1$.
On the other hand, the memorizer and the decoder maps a finite number of scalar values (i.e., $\mathcal C_{d_xK}$ and $\mathcal C_{d_yM}$, respectively) to their target values/vectors.
Such maps can be easily implemented by continuous functions (e.g., via linear interpolation), and hence, can be exactly constructed by $\relu$ networks of width $2$ and $d_y$, respectively, as discussed in Section \ref{sec:coding}.
Note that $\step$ is used only for constructing the encoder.

In summary, all parts of our coding scheme can be exactly constructed by \textsc{ReLU}+\textsc{Step} networks of width $d_x+1$, $2$, and $d_y$. Thus, the overall \textsc{ReLU}+\textsc{Step} network has width $\max\{d_x+1,d_y\}$.
Furthermore, it can approximate the target continuous function $f^*$ within arbitrary uniform error by choosing sufficiently large $K$ and $M$.
We present the formal proof in Appendix \ref{sec:pfthm:reluthresunifub}.

\subsection{Tight upper bound on minimum width of \textsc{ReLU} networks (Theorem~\ref{thm:relulp})}\label{sec:pfsketch:relulp}
The construction of width-$(\max\{d_x+1,d_y\})$ $\relu$ network for approximating $L^p(\mathbb{R}^{d_x},\mathbb R^{d_y})$ (i.e., the tight upper bound in Theorem~\ref{thm:relulp}) is almost identical to the $\relu$+$\step$ network construction in Section~\ref{sec:pfsketch:reluthresunif}.
Since any $L^p$ function can be approximated by a continuous function with compact support, we aim to approximate continuous $f^*:[0,1]^{d_x}\to[0,1]^{d_y}$ here as in our coding scheme.

Since the memorizer and the decoder can be exactly constructed by $\relu$ networks, we only discuss the encoder here.
As we discussed in the last section, the encoder cannot be uniformly approximated by continuous functions (i.e., $\relu$ networks).
Nevertheless, it can be implemented by continuous functions except for a subset of the domain around the discontinuities, and this subset can be made arbitrarily small in terms of the Lebesgue measure.
That is, we construct the encoder using a $\relu$ network of width $d_x+1$ for $[0,1]^{d_x}$ except for a small subset, which enables us to approximate the encoder in the $L^p$-norm.
Combining with the memorizer and the decoder, we obtain a $\relu$ network of width $\max\{d_x+1,d_y\}$ that approximates the target function $f^*$ in the $L^p$-norm.
We present the formal proof in Appendix \ref{sec:pfthm:relulpub}.

\subsection{Tightening upper bound on minimum width for general activations\,(Theorem\,\ref{thm:generallp})}\label{sec:pfsketchthm:generallp}
Our network construction can be generalized to general activation functions using existing results on approximation of $C(\mathcal K,\mathbb R^{d_y})$ functions. For example, \citet{kidger19} show that if the activation $\rho$ is continuous, nonpolynomial, and continuously differentiable at at least one point (say $z$) with $\rho^\prime(z) \neq 0$, then $\rho$ networks of width $d_x+d_y+1$ are dense in $C(\mathcal K,\mathbb R^{d_y})$.
Applying this result to our encoder, memorizer, and decoder constructions of $\relu$ networks, it follows that if $\rho$ satisfies the conditions above, then $\rho$ networks of width $\max \{d_x+2,d_y+1\}$ are dense in $L^p(\mathcal K,\mathbb R^{d_y})$, i.e., Theorem~\ref{thm:generallp}.
We note that any universal approximation result for $C(\mathcal{K},\mathbb{R}^{d_y})$ by networks using other activation functions, other than \citet{kidger19}, can also be combined with our construction.
We present the formal proof in Appendix \ref{sec:pfthm:generallp}.

\section{Tight lower bound on minimum width for universal\,approximation}\label{sec:lower}
The purpose of this section is to prove the tight lower bound in Theorem \ref{thm:reluuniflower}, i.e., there exist $f^*\in C([0,1],\mathbb R^2)$ and $\varepsilon>0$ satisfying the following property: For any width-2 $\relu$ network $f$, we have $    \|f^*-f\|_\infty>\varepsilon$.
Our construction of $f^*$ is based on topological properties of $\relu$ networks, which we study in Section \ref{sec:topologicalproerty}.
Then, we introduce a counterexample $f^*$ and prove that $f^*$ cannot be approximated by width-$2$ $\relu$ networks in Section \ref{sec:counter}.
\subsection{Topological properties of ReLU networks}\label{sec:topologicalproerty}
We first interpret a width-$2$ $\relu$ network $f:\mathbb{R}\to\mathbb{R}^2$ as below, following \eqref{eq:neuralnet}:
\begin{align*}
    f:=t_L\circ\sigma\circ\cdots\circ\sigma\circ t_2\circ\sigma\circ t_1
\end{align*}
where $L\in\mathbb N$ denotes the number of layers, $t_1:\mathbb{R}\rightarrow\mathbb{R}^2$ and $t_\ell:\mathbb{R}^2\to\mathbb{R}^2$ for $\ell>1$ are affine transformations, and $\sigma$ is the coordinate-wise $\relu$.
Without loss of generality, we assume that $t_\ell$ is invertible for all $\ell>1$, as invertible affine transformations are dense in the space of affine transformations on bounded support, endowed with the uniform norm.
To illustrate the topological properties of $f$ better, we reformulate $f$ as follows:
\begin{align}
    f&=(\phi_{L-1}^{-1}\circ\sigma\circ\phi_{L-1})\circ\cdots\circ(\phi_{2}^{-1}\circ\sigma\circ\phi_{2})\circ(\phi_{1}^{-1}\circ\sigma\circ\phi_{1})\circ t^\dagger\label{eq:reform}
\end{align}
where $\phi_\ell$ and $t^\dagger$ are defined as
\begin{align*}
    t^\dagger&:=t_L\circ\cdots\circ t_1\quad\text{and}\quad
    \phi_\ell:=(t_L\circ\cdots\circ t_{\ell+1})^{-1},
\end{align*}
i.e., $t_\ell=\phi_{\ell}\circ\phi_{\ell-1}^{-1}$ for $\ell \geq 2$ and $t_1=\phi_1\circ t^\dagger$.
Under the reformulation \eqref{eq:reform}, $f$ first maps inputs through an affine transformation $t^\dagger$, then it sequentially applies $\phi_\ell^{-1}\circ\sigma\circ\phi_\ell$. Here, $\phi_\ell^{-1}\circ\sigma\circ\phi_\ell$ can be viewed as changing the coordinate system using $\phi_\ell$, applying $\relu$ in the modified coordinate system, and then returning back to the original coordinate system via $\phi_\ell^{-1}$.
Under this reformulation, we present the following lemmas.
The proofs of Lemmas \ref{lem:relu}, \ref{lem:topology} are presented in Appendices \ref{sec:pflem:relu}, \ref{sec:pflem:topology}.
\begin{lemma}\label{lem:relu}
Let $\phi:\mathbb{R}^2\rightarrow\mathbb{R}^2$ be an invertible affine transformation. Then, there exist $a_1,a_2\in\mathbb{R}^2$ and $b_1,b_2\in\mathbb{R}$ such that the following statements hold for $\mathcal S:=\{x:\langle a_1,x\rangle+b_1\ge0,\langle a_2,x\rangle+b_2\ge0\}$ and $x^\prime:=\phi^{-1}\circ\sigma\circ\phi(x)$:
\begin{list}{{\tiny$\bullet$}}{\leftmargin=1.8em}
  \setlength{\itemsep}{2pt}
  \vspace*{-4pt}
    \item If $x\in\mathcal S$, then $x^\prime=x$.
    \item If $x\in\mathbb R^2\setminus\mathcal S$, then 
    $x^\prime\ne x$ and $x^\prime\in\partial\mathcal S$.\footnote{$\partial\mathcal S$ denotes the boundary set of $\mathcal S$.}
\end{list}
\end{lemma}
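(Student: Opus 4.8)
\textbf{Proof proposal for Lemma~\ref{lem:relu}.}

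The plan is to reduce the action of $\phi^{-1}\circ\sigma\circ\phi$ on $\mathbb{R}^2$ to an explicit, coordinate-free description of which points are fixed and which are moved, by tracking what $\sigma$ (coordinate-wise $\relu$) does in the $\phi$-coordinate system and pulling the answer back through $\phi^{-1}$. Write $\phi(x) = (\ell_1(x), \ell_2(x))$ where $\ell_1, \ell_2 : \mathbb{R}^2 \to \mathbb{R}$ are the two affine coordinate functions of $\phi$; since $\phi$ is invertible, $\ell_1$ and $\ell_2$ have linearly independent linear parts. Each coordinate of $\sigma\circ\phi(x)$ equals $\ell_i(x)$ when $\ell_i(x) \ge 0$ and equals $0$ otherwise. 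Thus in $\phi$-coordinates, $\sigma$ fixes a point precisely when both coordinates are nonnegative, i.e.\ on the image under $\phi$ of the set $\mathcal{S} := \{x : \ell_1(x) \ge 0,\ \ell_2(x) \ge 0\}$; applying $\phi^{-1}$ then shows $x' = x$ for $x \in \mathcal{S}$. This handles the first bullet, and it fixes the choice of $a_1, a_2, b_1, b_2$: let $\langle a_i, \cdot\rangle + b_i = \ell_i(\cdot)$.

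For the second bullet, suppose $x \notin \mathcal{S}$, so at least one of $\ell_1(x), \ell_2(x)$ is strictly negative. The key observation is that $\relu$ is idempotent: $\sigma\circ\sigma = \sigma$, hence $\sigma\circ\phi(x) \in \{y : y_1 \ge 0, y_2 \ge 0\}$ always, and moreover at least one coordinate of $\sigma\circ\phi(x)$ is exactly $0$ (the coordinate(s) where $\ell_i(x) < 0$). Pulling back, $x' = \phi^{-1}(\sigma\circ\phi(x))$ lies in $\phi^{-1}$ of the set $\{y_1 \ge 0, y_2 \ge 0, \text{ and } y_1 y_2 = 0\}$, which is exactly the boundary $\partial\mathcal{S}$ of $\mathcal{S}$ (an invertible affine map carries the boundary of the first-quadrant preimage to the boundary of $\mathcal{S}$, and it carries the two boundary rays of the quadrant to the two boundary rays of $\mathcal{S}$). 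This gives $x' \in \partial\mathcal{S}$. It remains to argue $x' \ne x$: if we had $x' = x$ then $x \in \partial\mathcal{S} \subseteq \mathcal{S}$, contradicting $x \notin \mathcal{S}$ — so the only thing to check is that $\mathcal{S}$ is closed and contains its boundary, which is immediate since $\mathcal{S}$ is an intersection of two closed half-planes.

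The one genuinely delicate point — and the step I would write out most carefully — is the degenerate-geometry bookkeeping: because $\phi$ is merely invertible affine (not, say, orthogonal), the set $\mathcal{S}$ is a (possibly skewed) cone/wedge, and one must make sure the claim ``$x \notin \mathcal{S} \Rightarrow x' \ne x$'' does not fail on some edge case. The cleanest way is the argument just given — $x'$ always lands in $\mathcal{S}$ (indeed in $\partial\mathcal{S}$) while $x$ does not, so they cannot be equal — which sidesteps any need to compute $x'$ explicitly. A minor secondary subtlety: one should note that $\mathcal{S}$ genuinely has nonempty interior and genuine boundary (it is not all of $\mathbb{R}^2$ nor lower-dimensional), which follows from linear independence of the linear parts of $\ell_1, \ell_2$; this is what makes the dichotomy in the two bullets non-vacuous. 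I do not expect any real obstacle beyond presenting this affine-image-of-a-quadrant picture cleanly.
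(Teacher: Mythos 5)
Your proof is correct and follows essentially the same approach as the paper's: identify the coordinate functions of $\phi$ with $\langle a_i,\cdot\rangle+b_i$, observe that $\relu$ fixes the nonnegative quadrant and projects any point outside it onto the quadrant's boundary, and pull back through $\phi^{-1}$. Your argument for $x'\neq x$ (namely $x'\in\partial\mathcal S\subseteq\mathcal S$ while $x\notin\mathcal S$) handles all three sign cases uniformly, which is a slightly tidier packaging than the paper's case-by-case ray computation, but the underlying mechanism is the same.
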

\begin{lemma}\label{lem:topology}
Let $\phi:\mathbb{R}^2\to\mathbb{R}^2$ be an invertible affine transformation. Suppose that $x\in\mathbb{R}^2$, $\mathcal T\subset\mathbb{R}^2$ satisfies that $x$ is in a bounded path-connected component of $\mathbb{R}^2\setminus\mathcal T$. Then, the following statements hold for $x^\prime:=\phi^{-1}\circ\sigma\circ\phi(x)$ and $\mathcal T^\prime:=\phi^{-1}\circ\sigma\circ\phi(\mathcal T)$: 
\begin{list}{{\tiny$\bullet$}}{\leftmargin=1.8em}
  \setlength{\itemsep}{2pt}
  \vspace*{-4pt}
    \item If $x^\prime=x$ and $x^\prime\notin\mathcal T^\prime$, then $x^\prime$ is in a bounded path-connected component of $\mathbb R^2\setminus \mathcal T^\prime$.
    \item If $x^\prime\ne x$, then $x^\prime\in\mathcal T^\prime$. 
\end{list}
\end{lemma}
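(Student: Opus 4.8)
\textbf{Proof plan for Lemma~\ref{lem:topology}.}

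The plan is to analyze the map $g := \phi^{-1}\circ\sigma\circ\phi$ using Lemma~\ref{lem:relu}, which tells us that $g$ fixes the ``sector'' $\mathcal S$ pointwise and collapses its complement onto $\partial\mathcal S$ (more precisely, it moves every point of $\mathbb R^2\setminus\mathcal S$ and sends it into $\partial\mathcal S$). So $g$ is a retraction-like map onto $\mathcal S$, continuous, with image exactly $\mathcal S$. I would first record the structural consequences: $g$ is continuous; $g(\mathbb R^2)=\mathcal S$; $g|_{\mathcal S}=\mathrm{id}$; and the ``fibers'' of $g$ over a boundary point are the rays perpendicular-to-$\partial\mathcal S$-in-$\phi$-coordinates emanating out of $\mathcal S$. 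The key geometric fact is that $\mathbb R^2\setminus\mathcal S$ is itself path-connected (it is the complement of a convex sector, which is connected and simply connected), and moreover $\mathbb R^2\setminus\mathcal S$ together with any boundary point stays path-connected.

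For the \emph{second bullet} ($x'\ne x$ implies $x'\in\mathcal T'$): by Lemma~\ref{lem:relu}, $x'\ne x$ forces $x\in\mathbb R^2\setminus\mathcal S$ and $x'\in\partial\mathcal S$. I must show $x'\in g(\mathcal T)$. Suppose not. Then $g^{-1}(x')\cap\mathcal T=\emptyset$; but $g^{-1}(x')$ is a connected set (a segment/ray) containing $x$ and reaching into $\mathcal S$ at $x'$ itself. The idea is that $x$ lies in a bounded component $U$ of $\mathbb R^2\setminus\mathcal T$, so $\mathcal T$ ``surrounds'' $x$; following the fiber $g^{-1}(x')$ from $x$ we must exit the bounded region, hence cross $\mathcal T$ — contradiction. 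I need to be careful here: the fiber could in principle stay inside $U$. The clean way is a connectedness/separation argument on $U$: since $U$ is bounded and open with $x\in U$, and since $g$ is continuous with $g(x)=x'\notin\mathcal T'$... actually the cleanest route is to argue via the first bullet applied contrapositively, or directly: if $x'\notin\mathcal T'=g(\mathcal T)$ then $g$ restricted to $\overline{U}$ would have to map a boundary circle of $U$ (which lies in $\mathcal T$, so into $\mathcal T'$) while mapping the interior point $x$ to $x'\notin\mathcal T'$; but $g$ collapses along a star-shaped-from-$\mathcal S$ structure, and I can explicitly push $x$ to $\partial\mathcal S$ along the fiber and note the whole segment $[x,x']$ maps into $\{x'\}$ only at the endpoint but into $\mathcal S$ near $x'$ — hmm, this is getting delicate. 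Let me reframe: the robust argument is that $U$ bounded path-connected component of $\mathbb{R}^2 \setminus \mathcal T$ means $\partial U \subseteq \mathcal T$; apply $g$; then $g(\partial U)\subseteq \mathcal T'$; now show $x'=g(x)$ is ``enclosed'' by $g(\partial U)$ in the appropriate sense using a degree/winding argument in the plane, exploiting that $g$ is homotopic to the identity via the straight-line homotopy $(1-t)\,\mathrm{id}+t\,g$ (which is legitimate because $\sigma_t(s):=(1-t)s+t\,\relu(s)=\relu(s)+(1-t)\min\{s,0\}$ interpolates and $\phi^{-1}\circ((1-t)\,\mathrm{id}+t\sigma)\circ\phi$ is a homotopy from $\mathrm{id}$ to $g$).

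For the \emph{first bullet} ($x'=x$, $x'\notin\mathcal T'$ imply $x'$ is in a bounded path-connected component of $\mathbb R^2\setminus\mathcal T'$): here $x\in\mathcal S$ (since $x'=x$), and $x$ lies in a bounded component $U$ of $\mathbb R^2\setminus\mathcal T$. I want a bounded component of $\mathbb R^2\setminus\mathcal T'$ containing $x$. The natural candidate is the connected component of $g(U)$, or of $\mathbb R^2 \setminus \mathcal T'$, containing $x$; I must show it is bounded and disjoint from $\mathcal T'$. Disjointness from $\mathcal T'$ near $x$ is given ($x\notin\mathcal T'$); for boundedness, the homotopy argument again: $g\simeq\mathrm{id}$ rel nothing, and since $\partial U\subseteq\mathcal T$, we get $g(\partial U)\subseteq\mathcal T'$, and the winding number of $g\circ(\text{loop around }x\text{ in }U)$ about $x$ equals that of the identity, namely nonzero, so $x$ is ``surrounded'' by $\mathcal T'$ and hence lies in a bounded component of its complement. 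I would make ``surrounded $\Rightarrow$ in a bounded component'' precise via: if $x$ were in the unbounded component of $\mathbb R^2\setminus\mathcal T'$, there is a path from $x$ to infinity avoiding $\mathcal T'$, contradicting nonzero winding of a $\mathcal T'$-valued loop that links $x$.

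The main obstacle I anticipate is making the winding-number / ``$\mathcal T$ encloses $x$'' argument rigorous when $\mathcal T$ is merely an arbitrary subset (not a nice curve): ``$x$ is in a bounded path-connected component of $\mathbb R^2\setminus\mathcal T$'' does give $\partial U\subseteq\overline{\mathcal T}$ and that $U$ is open bounded, but to run a degree argument I want an actual loop in $\mathcal T$ linking $x$. The fix is to take a small circle $\gamma$ around $x$ inside $U$; $\gamma$ has winding number $1$ about $x$; then $g\circ\gamma$ is a loop with winding number $1$ about $x'=x$ (by the identity-homotopy, since the homotopy avoids $x$ — this needs $x\notin$ image of intermediate maps, which follows because the homotopy only moves points that are outside $\mathcal S$ in $\phi$-coordinates toward $\partial\mathcal S$, and one checks $x=x'\in\mathcal S$ is a fixed point of every intermediate map). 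Then $g\circ\gamma$ lies in $g(U)$, and since $\gamma$ can be taken inside $U$ while $\partial U\subseteq\mathcal T$ — hmm, $g\circ\gamma$ need not lie in $\mathcal T'$. So instead: use $g\circ\Gamma$ where $\Gamma$ is a loop in $U$ close to $\partial U$... The truly clean statement is probably: the unbounded component of $\mathbb R^2\setminus\mathcal T'$ pulls back under $g$ (using $g\simeq\mathrm{id}$ and properness-type control at infinity of $g$, which holds since $g$ is proper — preimages of bounded sets are bounded because $g$ moves points boundedly... actually $g$ may not be proper since it collapses rays; a ray maps to a point, so $g^{-1}(\text{point})$ can be unbounded). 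This lack of properness is the real technical crux, and I expect the proof to handle it by working with the specific geometry of $\mathcal S$ and the fibers rather than abstract degree theory — i.e., explicitly tracking how the collapsing rays interact with the bounded region $U$.
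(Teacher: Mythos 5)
Your instinct for the \emph{second bullet} is exactly the paper's proof, but you then talk yourself out of it for no reason. You write that ``the fiber could in principle stay inside $U$'' — it cannot: by the fiber description in Lemma~\ref{lem:relu}, $g^{-1}(x')$ contains a full ray emanating from $x' \in \partial\mathcal S$ through $x$, hence is unbounded. Since $U$ (the component of $\mathbb R^2\setminus\mathcal T$ containing $x$) is bounded, the ray is a connected set containing $x$ that is not contained in $U$; if it avoided $\mathcal T$ entirely it would lie in a single component of $\mathbb R^2\setminus\mathcal T$, namely $U$, a contradiction. So the ray meets $\mathcal T$, and that intersection point maps to $x'$, giving $x'\in\mathcal T'$. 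No degree theory, no homotopy needed. You should trust the simple argument.

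For the \emph{first bullet} there is a genuine gap. You correctly flag that the winding-number approach is problematic — $\partial U$ is merely a closed subset of $\overline{\mathcal T}$, not a loop you can apply $g$ to and compute a degree of; and $g$ is not proper (it collapses rays), so you cannot pull back the unbounded component of $\mathbb R^2\setminus\mathcal T'$ by generic topological arguments. You correctly guess that the resolution must ``explicitly track how the collapsing rays interact with $U$,'' but you never produce that argument. The paper's proof is a direct \emph{path-lifting by contradiction} that sidesteps all of these issues: suppose $x'=x\in\mathcal S$ is in the unbounded component of $\mathbb R^2\setminus\mathcal T'$, and let $\mathcal P$ be a path from $x'$ to infinity avoiding $\mathcal T'$. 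If $\mathcal P\subset\mathrm{int}(\mathcal S)$, then since $g$ is the identity on $\mathrm{int}(\mathcal S)$ and the only preimages of points in $\mathrm{int}(\mathcal S)$ are themselves, $\mathcal P$ avoids $\mathcal T$ as well, contradicting boundedness of $U$. Otherwise, take the first point $x^*\in\mathcal P\cap\partial\mathcal S$; the subpath $\mathcal P^\dagger$ of $\mathcal P$ from $x'$ up to (but excluding) $x^*$ lies in $\mathrm{int}(\mathcal S)$ and hence avoids $\mathcal T$ for the same reason, while the fiber ray $\mathcal R=g^{-1}(x^*)$ must avoid $\mathcal T$ (else $x^*\in\mathcal T'$, contradicting $\mathcal P\cap\mathcal T'=\emptyset$). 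Concatenating $\mathcal P^\dagger$ with $\mathcal R$ yields a path from $x$ to infinity avoiding $\mathcal T$, the desired contradiction. This argument works at the level of individual paths and fibers, requires no loop inside $\mathcal T$, and never invokes properness or homotopy invariance of degree — exactly the ``explicit fiber-tracking'' you anticipated but did not carry out.
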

Lemma \ref{lem:relu} follows from the fact that output of $\relu$ is identity to nonnegative coordinates, and is zero to negative coordinates. In particular, $a_1,b_1$ and $a_2,b_2$ in Lemma \ref{lem:relu} correspond to the axes of the ``modified'' coordinate system before applying $\sigma$.
Under the same property of $\relu$, Lemma \ref{lem:topology} states that if a point $x$ is surrounded by a set $\mathcal T$, after applying $\phi^{-1}\circ\sigma\circ\phi$, either the point stays at the same position and surrounded by the image of $\mathcal T$ or
intersects with the image of $\mathcal T$.
Based on these observations, we are now ready to introduce our counterexample.
\begin{figure*}[t]
\centering
\subfigure[]{
\centering
\includegraphics[width=0.25\textwidth]{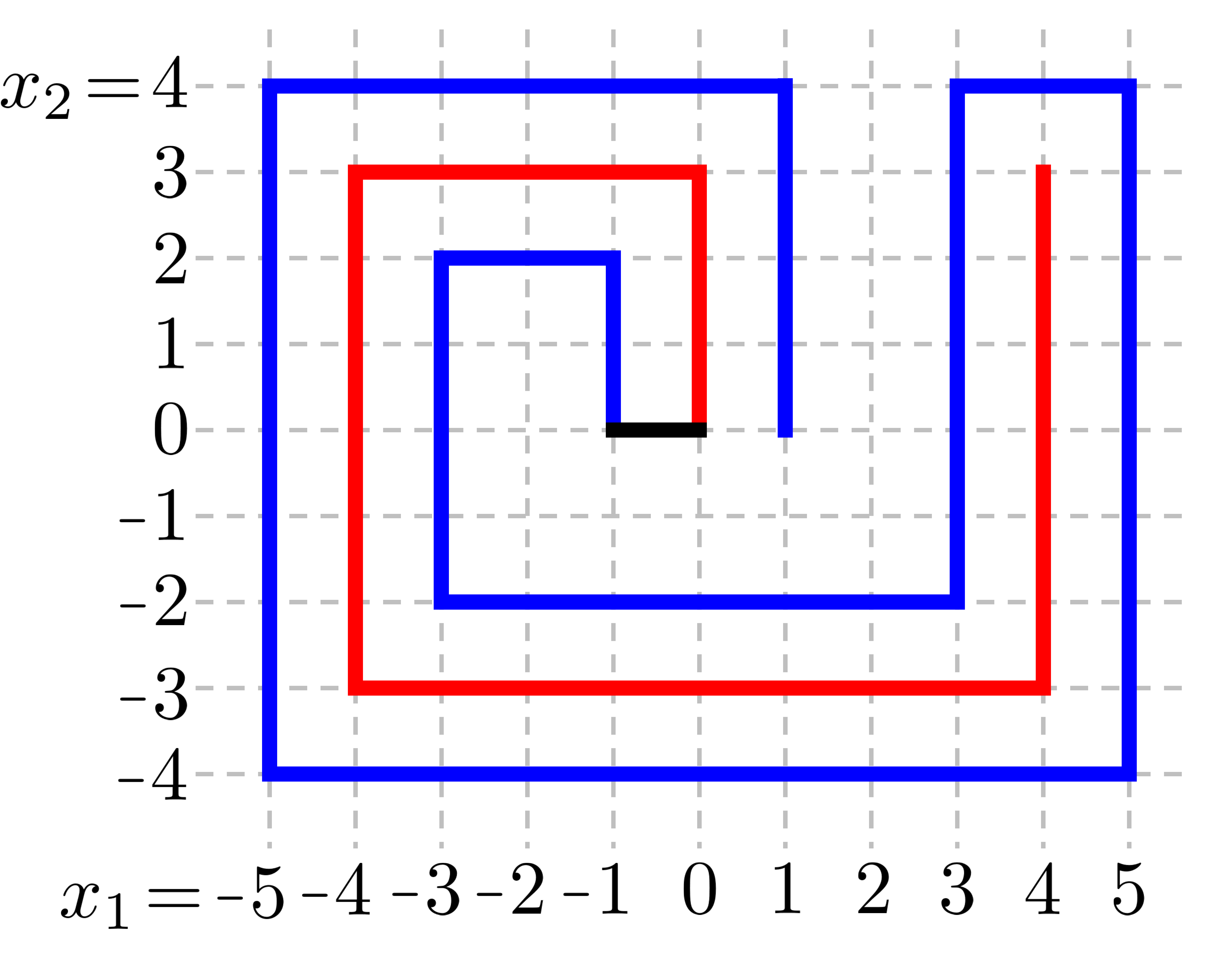}
\label{fig:counter1}
}
\subfigure[]{
\centering
\includegraphics[width=0.25\textwidth]{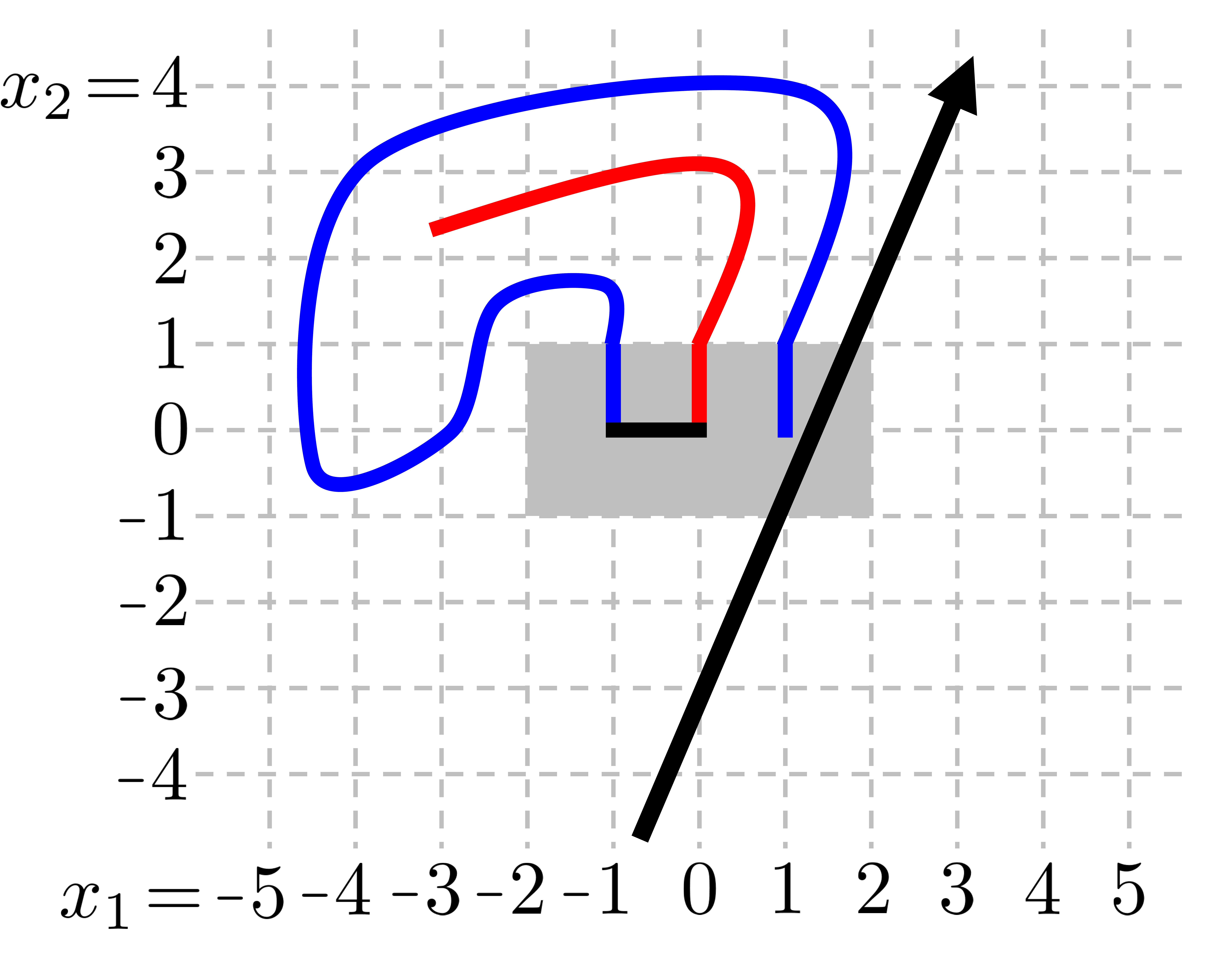}
\label{fig:counter2}
}
\subfigure[]{
\centering
\includegraphics[width=0.25\textwidth]{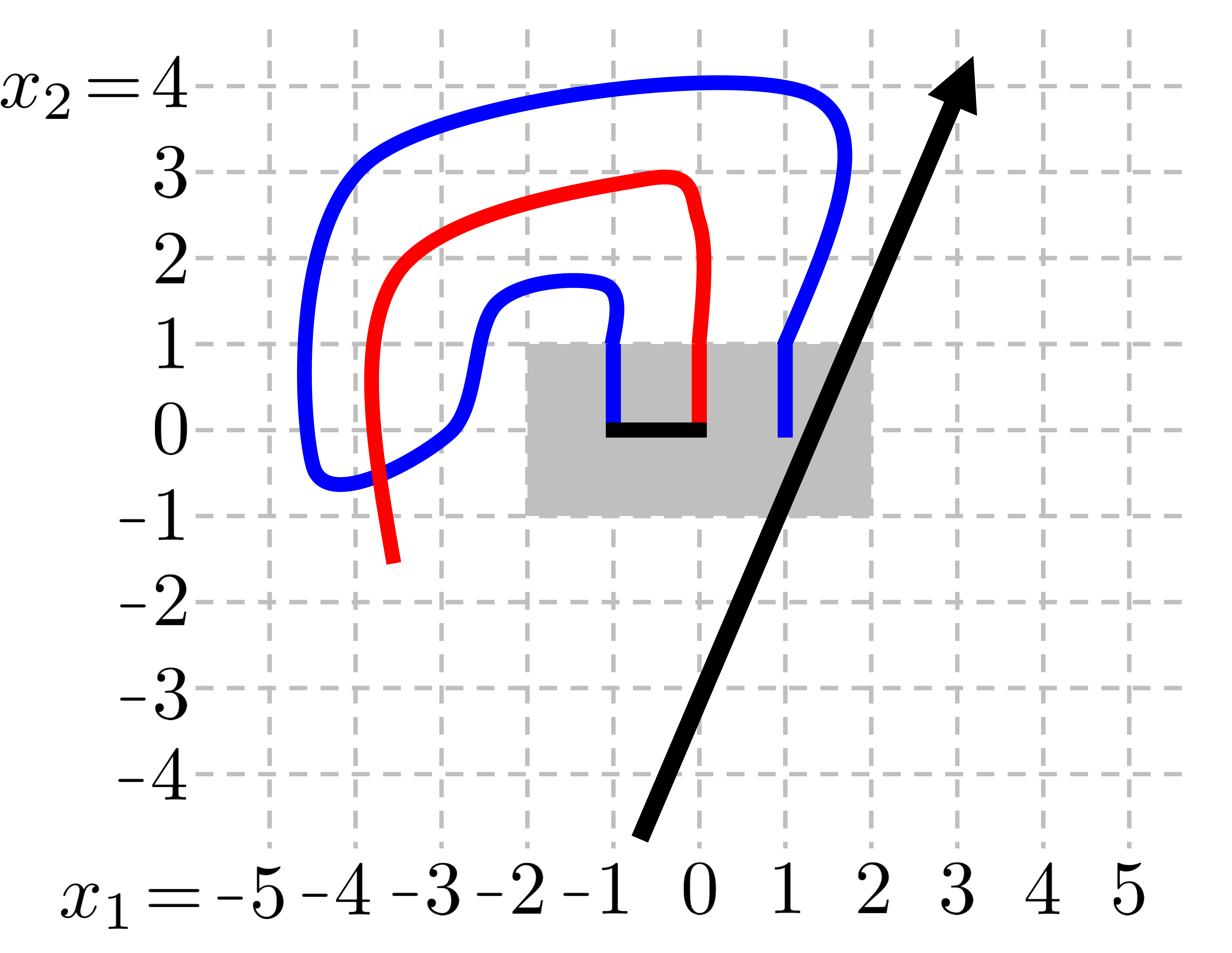}
\label{fig:counter3}
}

\caption{(a) Illustration of the image of  $f^*:[0,1]\to\mathbb{R}^2$ (b, c) Examples of $g_{\ell^*}([0,1])$.}
\label{fig:counter}
\end{figure*}
\subsection{Counterexample}\label{sec:counter}
Our counterexample $f^*:[0,1]\rightarrow\mathbb{R}^2$ is illustrated in Figure \ref{fig:counter1} where $f^*([0,p_1])$ is drawn in red from $(4,3)$ to $(0,0)$, $f^*((p_1,p_2))$ is drawn in black from $(0,0)$ to $(-1,0)$, and $f^*([p_2,1])$ is drawn in blue from $(-1,0)$ to $(1,0)$, for some $0<p_1<p_2<1$, e.g., $p_1=\frac13,p_2=\frac23$.
In this section, we suppose for contradiction that there exists a $\relu$ network $f$ of width 2 such that $\|f^*-f\|_\infty\le\frac1{100}$.
To this end, consider the mapping by the first $\ell$ layers of $f$:
\begin{align*}
    g_\ell:=(\phi_{\ell}^{-1}\circ\sigma\circ\phi_\ell)\circ\cdots\circ(\phi_1^{-1}\circ\sigma\circ\phi_1)\circ t^\dagger.
\end{align*}
Our proof is based on the fact if $g_\ell(x)=g_\ell(x^\prime)$, then $f(x)=f(x^\prime)$. Thus, the following must hold:
\begin{equation}
\label{eq:nointersect}
\text{if }
\|f^*-f\|_\infty\le\tfrac1{100},
\text{ then } 
g_\ell([0,p_1])\cap g_\ell([p_2,1])=\emptyset
\text{ for all }\ell \geq 1.
\end{equation}

Let $\mathcal B:=(-2,2)\times(-1,1)$ (the gray box in Figure~\ref{fig:counter2}) and $\ell^*\in\mathbb N$ be the largest number such that $\phi_\ell^{-1}\circ\sigma\circ\phi_\ell(\mathcal B) \neq \mathcal B$. This means that after the $\ell^*$-th layer, everything inside the box $\mathcal B$ never gets affected by $\relu$ operations. 
By the definition of $\ell^*$ and Lemma \ref{lem:relu}, there exists a line (e.g., the arrow in  Figure~\ref{fig:counter2}) intersecting with $\mathcal B$, such that the image $g_{\ell^*}([0,1])$ lies in one side of the line.
Since the image of the entire network $f([0,p_1])$ is on both sides of the line, we have $g_{\ell^*}(\left[0,p_1\right])\ne f(\left[0,p_1\right])$, which implies that the remaining layers $\ell^*+1, \dots, L-1$ must have moved the image $g_{\ell^*}([0,p_1])\setminus \mathcal B$ to $f([0,p_1])\setminus \mathcal B$; this also implies $g_{\ell^*}([0,p_1])\setminus\mathcal B \neq \emptyset$. 
A similar argument gives $g_{\ell^*}([p_2,1])\setminus\mathcal B \neq \emptyset$.

Since $\mc B$ cannot be modified after layer $\ell^*$, $f([0,1])\cap\mathcal B$ must have been constructed in the first $\ell^*$ layers.
This means that, as illustrated in Figures~\ref{fig:counter2} and \ref{fig:counter3}, the boundary $\partial \mathcal B$ intersects with $g_{\ell^*}([p_2,1])$ (the blue line) near points $(-1,1)$ and $(1,1)$, hence $\mc T \defeq g_{\ell^*}([p_2,1]) \cup \mathcal B$ forms a ``closed loop.'' 
Also, $\partial \mathcal B$ intersects with $g_{\ell^*}([0,p_1])$ near the point $(0,1)$, so there must exist a point in $g_{\ell^*}([0,p_1]) \setminus \mc B$ that is ``surrounded'' by $\mc T$.
Given these observations, we have the following lemma.
The proof of Lemma \ref{lem:topology2} is presented in Appendix \ref{sec:pflem:topology2}.
\begin{lemma}\label{lem:topology2}
The image $g_{\ell^*}([0,p_1])\setminus\mathcal B$ is contained in a bounded path-connected component of $\mathbb R^2\setminus\mc T$
unless $g_{\ell^*}([0,p_1])\cap g_{\ell^*}([p_2,1])\ne\emptyset$.
\end{lemma}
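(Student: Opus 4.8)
The plan is to prove the lemma by induction on the layer index, tracking the topological relationship between $g_\ell([0,p_1])\setminus\mathcal B$ and the ``loop'' $\mathcal T_\ell \defeq g_\ell([p_2,1]) \cup \mathcal B$ as $\ell$ increases from some base layer up to $\ell^*$. The invariant to be maintained is: either $g_\ell([0,p_1])\cap g_\ell([p_2,1])\neq\emptyset$ (in which case we are done by the ``unless'' clause), or every point of $g_\ell([0,p_1])\setminus\mathcal B$ lies in a bounded path-connected component of $\mathbb R^2\setminus\mathcal T_\ell$. First I would establish the base case: pick the first layer $\ell_0$ at which the blue curve $g_{\ell_0}([p_2,1])$ together with $\mathcal B$ already ``encloses'' part of the red curve in the sense described before the lemma statement (the crossings of $\partial\mathcal B$ near $(\pm1,1)$ by the blue curve, and near $(0,1)$ by the red curve). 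At that layer the enclosure is geometrically explicit from the construction of $f^*$ and the $\frac1{100}$ approximation bound, so the invariant holds.

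For the inductive step, I would pass from layer $\ell$ to layer $\ell+1$ by applying $\phi_{\ell+1}^{-1}\circ\sigma\circ\phi_{\ell+1}$ and invoking Lemma~\ref{lem:topology}. Fix a point $x$ in $g_\ell([0,p_1])\setminus\mathcal B$ that by the inductive hypothesis sits in a bounded path-connected component of $\mathbb R^2\setminus\mathcal T_\ell$; set $\mathcal T\defeq\mathcal T_\ell$ in the notation of Lemma~\ref{lem:topology}, so that $\mathcal T'=\phi_{\ell+1}^{-1}\circ\sigma\circ\phi_{\ell+1}(\mathcal T_\ell)$. Note $\mathcal T'$ contains $g_{\ell+1}([p_2,1])$; it may not literally equal $\mathcal T_{\ell+1}$ because $\mathcal B$ is mapped somewhere, but since $\ell+1\le\ell^*$ is at or before the last layer that disturbs $\mathcal B$, I would argue that the relevant enclosure persists — either $\mathcal B$ is still fixed by this layer (so $\mathcal T'\supseteq\mathcal T_{\ell+1}$ in the relevant region), or it is being moved but remains connected to the blue curve. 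Lemma~\ref{lem:topology} then gives two cases: if $x'=x$ and $x'\notin\mathcal T'$, then $x'$ stays in a bounded component of $\mathbb R^2\setminus\mathcal T'$, hence of $\mathbb R^2\setminus\mathcal T_{\ell+1}$ after accounting for $\mathcal B$'s image; if $x'\neq x$, then $x'\in\mathcal T'$, meaning $x'$ lies on the image of $g_\ell([p_2,1])\cup\mathcal B$, and I would show this forces $g_{\ell+1}([0,p_1])$ to meet $g_{\ell+1}([p_2,1])$ (since a point of the red image landing on $\mathcal B$'s image contradicts $g_{\ell+1}([0,p_1])\setminus\mathcal B\neq\emptyset$ unless it landed on the blue part), triggering the ``unless'' escape clause.

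The main obstacle I anticipate is bookkeeping the role of $\mathcal B$ carefully: $\mathcal T$ in Lemma~\ref{lem:topology} is a single set whose image we track, but $\mathcal T_\ell$ has two pieces (the blue curve and the fixed box $\mathcal B$) that behave differently — the box is \emph{invariant} up through layer $\ell^*$ by the very definition of $\ell^*$, while the blue curve is reshaped at each layer. So I would want to set up the induction on $g_\ell([p_2,1])\cup\mathcal B$ as a whole, using that $\phi_\ell^{-1}\circ\sigma\circ\phi_\ell(\mathcal B)=\mathcal B$ for all $\ell<\ell^*$ (and handling $\ell=\ell^*$ as the terminal step where $\mathcal B$ is exactly the region guaranteed never to change afterward). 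A secondary subtlety is verifying the hypothesis of Lemma~\ref{lem:topology} that $x'\notin\mathcal T'$ in the ``good'' branch — this needs that distinct points of $[0,1]$ with the same $g_{\ell+1}$-image would already have the same $f$-image, contradicting \eqref{eq:nointersect}, so any coincidence between the red and blue images at layer $\ell+1$ is exactly the ``unless'' clause and not an obstruction. Once these are in place, the conclusion at $\ell=\ell^*$ is precisely the statement of the lemma.
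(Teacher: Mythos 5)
Your proposal hinges on an incorrect reading of $\ell^*$: you write that ``the box is invariant up through layer $\ell^*$ by the very definition of $\ell^*$'' and propose to use $\phi_\ell^{-1}\circ\sigma\circ\phi_\ell(\mathcal B)=\mathcal B$ for all $\ell<\ell^*$. But $\ell^*$ is defined as the \emph{largest} index with $\phi_{\ell^*}^{-1}\circ\sigma\circ\phi_{\ell^*}(\mathcal B)\ne\mathcal B$; the invariance of $\mathcal B$ holds only for $\ell>\ell^*$, and for $\ell\le\ell^*$ the box may be disturbed arbitrarily often. So the object $\mathcal T_\ell=g_\ell([p_2,1])\cup\mathcal B$ that you want to carry through the induction has no reason to behave coherently: the image $\mathcal T'$ produced by Lemma~\ref{lem:topology} at step $\ell\to\ell+1$ contains $\phi_{\ell+1}^{-1}\circ\sigma\circ\phi_{\ell+1}(\mathcal B)$, which need not be $\mathcal B$, and you have no control relating $\mathcal T'$ back to $\mathcal T_{\ell+1}$.

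There is also no viable base case for the induction. The map $t^\dagger$ sends $[0,1]$ to a line segment, and each $\phi_\ell^{-1}\circ\sigma\circ\phi_\ell$ only folds a piecewise-linear curve (indeed, after the first such layer the whole image sits in a single closed quadrant determined by Lemma~\ref{lem:relu}, so nothing is enclosed by anything). The enclosure you need does not exist at early layers; it is a property that must be established \emph{at} $\ell^*$ using the definition of $\ell^*$ (nothing in $\mathcal B$ moves afterward, so $f([0,1])\cap\mathcal B=g_{\ell^*}([0,1])\cap\mathcal B$) together with the $\frac1{100}$ approximation constraint, which pins down where the red and blue curves cross $\partial\mathcal B$. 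Lemma~\ref{lem:topology} is the tool the paper uses \emph{after} Lemma~\ref{lem:topology2}, to push enclosure forward through layers $\ell>\ell^*$ (where $\mathcal B$ genuinely is fixed); it is not suited to manufacturing the enclosure at $\ell^*$ from nothing. The paper instead argues directly at layer $\ell^*$, extracting a simple polygon $\mathcal U\subset g_{\ell^*}([p_2,1])\cup\mathcal L(g_{\ell^*}(p_2),g_{\ell^*}(1))$, choosing a concrete interior witness $g_{\ell^*}(q)$ with $f^*(q)=(0,\tfrac12)$, and showing $g_{\ell^*}(q)$ lies in the bounded Jordan component by a ray-parity (even--odd) count; path-connectedness of $g_{\ell^*}([0,q])$ then carries the conclusion to the whole set $g_{\ell^*}([0,p_1])\setminus\mathcal B$ unless the red and blue images already meet. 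You would need to replace the inductive scaffold with some such direct topological argument at $\ell^*$.
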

Figures~\ref{fig:counter2} and \ref{fig:counter3} illustrates the two possible cases of Lemma~\ref{lem:topology2}.
If $g_{\ell^*}([0,p_1])\cap g_{\ell^*}([p_2,1])\ne\emptyset$ (Figure~\ref{fig:counter3}), this contradicts \eqref{eq:nointersect}. Then, $g_{\ell^*}([0,p_1])\setminus\mathcal B$ must be contained in a bounded path-connected component of $\mathbb R^2\setminus\mc T$.
Recall that $g_{\ell^*}([0,p_1])\setminus\mathcal B$ has to move to $f([0,p_1])\setminus\mathcal B$ by layers $\ell^*+1, \dots, L-1$. However, by Lemma \ref{lem:topology}, if any point in $g_{\ell^*}([0,p_1])\setminus\mathcal B$ moves, then it must intersect with the image of $\mc T$.
If it intersects with the image of $g_{\ell^*}([p_2,1])$, then \eqref{eq:nointersect} is violated, hence a contradiction.
If it intersects with $\mathcal B$ at the $\ell^\dagger$-th layer for some $\ell^\dagger>\ell^*$, it violates the definition of $\ell^*$ as $\phi^{-1}_{\ell^\dagger}\circ\sigma\circ\phi_{\ell^\dagger}(\mathcal B)\ne\mathcal B$ by Lemma \ref{lem:relu}.
Hence, the approximation by $f$ is impossible in any cases.
This completes the proof of Theorem~\ref{thm:reluuniflower}.

\section{Conclusion}\label{sec:conclusion}
The universal approximation property of width-bounded networks is one of the fundamental problems in the expressive power theory of deep learning.
Prior arts attempt to characterize the minimum width sufficient for universal approximation; however, they only provide upper and lower bounds with large gaps.
In this work, we provide the first exact characterization of the minimum width of $\relu$ networks and $\relu$+$\step$ networks.
In addition, we observe interesting dependence of the minimum width on the target function classes and activation functions, in contrast to the minimum depth of classical results.
We believe that our results and analyses would contribute to a better understanding of the performance of modern deep and narrow network architectures.

\section*{Acknowledgements}
CY acknowledges financial supports from NSF CAREER Grant Number 1846088 and Korea Foundation for Advanced Studies.

\bibliography{reference}

\begin{thebibliography}{28}
\providecommand{\natexlab}[1]{#1}
\providecommand{\url}[1]{\texttt{#1}}
\expandafter\ifx\csname urlstyle\endcsname\relax
  \providecommand{\doi}[1]{doi: #1}\else
  \providecommand{\doi}{doi: \begingroup \urlstyle{rm}\Url}\fi

\bibitem[Baum(1988)]{baum88}
Eric~B. Baum.
\newblock On the capabilities of multilayer perceptrons.
\newblock \emph{Journal of Complexity}, 4\penalty0 (3):\penalty0 193–215,
  1988.
\newblock ISSN 0885-064X.

\bibitem[Cybenko(1989)]{cybenko89}
George Cybenko.
\newblock Approximation by superpositions of a sigmoidal function.
\newblock \emph{Mathematics of Control, Signals and Systems}, 2\penalty0
  (4):\penalty0 303--314, 1989.

\bibitem[Devlin et~al.(2019)Devlin, Chang, Lee, and Toutanova]{devlin19}
Jacob Devlin, Ming-Wei Chang, Kenton Lee, and Kristina Toutanova.
\newblock {BERT}: Pre-training of deep bidirectional transformers for language
  understanding.
\newblock In \emph{Conference of the North American Chapter of the Association
  for Computational Linguistics: Human Language Technologies}, 2019.

\bibitem[Eldan and Shamir(2016)]{eldan16}
Ronen Eldan and Ohad Shamir.
\newblock The power of depth for feedforward neural networks.
\newblock In \emph{Conference on Learning Theory}, 2016.

\bibitem[G{\'o}mez-Bombarelli et~al.(2018)G{\'o}mez-Bombarelli, Wei, Duvenaud,
  Hern{\'a}ndez-Lobato, S{\'a}nchez-Lengeling, Sheberla, Aguilera-Iparraguirre,
  Hirzel, Adams, and Aspuru-Guzik]{gomez18}
Rafael G{\'o}mez-Bombarelli, Jennifer~N Wei, David Duvenaud, Jos{\'e}~Miguel
  Hern{\'a}ndez-Lobato, Benjam{\'\i}n S{\'a}nchez-Lengeling, Dennis Sheberla,
  Jorge Aguilera-Iparraguirre, Timothy~D Hirzel, Ryan~P Adams, and Al{\'a}n
  Aspuru-Guzik.
\newblock Automatic chemical design using a data-driven continuous
  representation of molecules.
\newblock \emph{ACS Central Science}, 4\penalty0 (2):\penalty0 268--276, 2018.

\bibitem[Goodfellow et~al.(2014)Goodfellow, Pouget-Abadie, Mirza, Xu,
  Warde-Farley, Ozair, Courville, and Bengio]{goodfellow14}
Ian Goodfellow, Jean Pouget-Abadie, Mehdi Mirza, Bing Xu, David Warde-Farley,
  Sherjil Ozair, Aaron Courville, and Yoshua Bengio.
\newblock Generative adversarial nets.
\newblock In \emph{Advances in Neural Information Processing Systems}, 2014.

\bibitem[Hacker(1962)]{hacker62}
Richard Hacker.
\newblock Certification of algorithm 112: position of point relative to
  polygon.
\newblock \emph{Communications of the ACM}, 5\penalty0 (12):\penalty0 606,
  1962.

\bibitem[Hanin and Sellke(2017)]{hanin17}
Boris Hanin and Mark Sellke.
\newblock Approximating continuous functions by {ReLU} nets of minimal width.
\newblock \emph{arXiv preprint arXiv:1710.11278}, 2017.

\bibitem[Hornik et~al.(1989)Hornik, Stinchcombe, White, et~al.]{hornik89}
Kurt Hornik, Maxwell Stinchcombe, Halbert White, et~al.
\newblock Multilayer feedforward networks are universal approximators.
\newblock \emph{Neural Networks}, 2\penalty0 (5):\penalty0 359--366, 1989.

\bibitem[Huang(2003)]{huang03}
Guang-Bin Huang.
\newblock Learning capability and storage capacity of two-hidden-layer
  feedforward networks.
\newblock \emph{IEEE Transactions on Neural Networks}, 14\penalty0
  (2):\penalty0 274--281, 2003.

\bibitem[Huang and Babri(1998)]{huang98}
Guang-Bin Huang and Haroon~A Babri.
\newblock Upper bounds on the number of hidden neurons in feedforward networks
  with arbitrary bounded nonlinear activation functions.
\newblock \emph{IEEE Transactions on Neural Networks}, 9\penalty0 (1):\penalty0
  224--229, 1998.

\bibitem[Jin et~al.(2018)Jin, Barzilay, and Jaakkola]{jin18}
Wengong Jin, Regina Barzilay, and Tommi Jaakkola.
\newblock Junction tree variational autoencoder for molecular graph generation.
\newblock In \emph{International Conference on Machine Learning}, 2018.

\bibitem[Johnson(2019)]{johnson19}
Jesse Johnson.
\newblock Deep, skinny neural networks are not universal approximators.
\newblock In \emph{International Conference on Learning Representations}, 2019.

\bibitem[Kidger and Lyons(2020)]{kidger19}
Patrick Kidger and Terry Lyons.
\newblock Universal approximation with deep narrow networks.
\newblock In \emph{Conference on Learning Theory}, 2020.
\newblock (accepted to appear).

\bibitem[Kingma and Welling(2013)]{kingma13}
Diederik~P. Kingma and Max Welling.
\newblock Auto-encoding variational bayes.
\newblock In \emph{International Conference on Learning Representations}, 2013.

\bibitem[Leshno et~al.(1993)Leshno, Lin, Pinkus, and Schocken]{leshno93}
Moshe Leshno, Vladimir~Ya Lin, Allan Pinkus, and Shimon Schocken.
\newblock Multilayer feedforward networks with a nonpolynomial activation
  function can approximate any function.
\newblock \emph{Neural Networks}, 6\penalty0 (6):\penalty0 861--867, 1993.

\bibitem[Lin et~al.(2017)Lin, Tegmark, and Rolnick]{lin17}
Henry~W Lin, Max Tegmark, and David Rolnick.
\newblock Why does deep and cheap learning work so well?
\newblock \emph{Journal of Statistical Physics}, 168\penalty0 (6):\penalty0
  1223--1247, 2017.

\bibitem[Liu et~al.(2019)Liu, Ott, Goyal, Du, Joshi, Chen, Levy, Lewis,
  Zettlemoyer, and Stoyanov]{liu19}
Yinhan Liu, Myle Ott, Naman Goyal, Jingfei Du, Mandar Joshi, Danqi Chen, Omer
  Levy, Mike Lewis, Luke Zettlemoyer, and Veselin Stoyanov.
\newblock {RoBERTa}: A robustly optimized {BERT} pretraining approach.
\newblock \emph{arXiv preprint arXiv:1907.11692}, 2019.

\bibitem[Lu et~al.(2017)Lu, Pu, Wang, Hu, and Wang]{lu17}
Zhou Lu, Hongming Pu, Feicheng Wang, Zhiqiang Hu, and Liwei Wang.
\newblock The expressive power of neural networks: A view from the width.
\newblock In \emph{Advances in Neural Information Processing Systems}, 2017.

\bibitem[Pinkus(1999)]{pinkus99}
Allan Pinkus.
\newblock Approximation theory of the {MLP} model in neural networks.
\newblock \emph{Acta Numerica}, 8:\penalty0 143--195, 1999.

\bibitem[Poggio et~al.(2017)Poggio, Mhaskar, Rosasco, Miranda, and
  Liao]{poggio17}
Tomaso Poggio, Hrushikesh Mhaskar, Lorenzo Rosasco, Brando Miranda, and Qianli
  Liao.
\newblock Why and when can deep-but not shallow-networks avoid the curse of
  dimensionality: A review.
\newblock \emph{International Journal of Automation and Computing}, 14\penalty0
  (5):\penalty0 503--519, 2017.

\bibitem[Qu and Wang(2019)]{qu19}
Yang Qu and Ming-Xi Wang.
\newblock Approximation capabilities of neural networks on unbounded domains.
\newblock \emph{arXiv preprint arXiv:1910.09293}, 2019.

\bibitem[Shimrat(1962)]{shimrat62}
Moshe Shimrat.
\newblock Algorithm 112: position of point relative to polygon.
\newblock \emph{Communications of the ACM}, 5\penalty0 (8):\penalty0 434, 1962.

\bibitem[Telgarsky(2016)]{telgarsky16}
Matus Telgarsky.
\newblock Benefits of depth in neural networks.
\newblock In \emph{Conference on Learning Theory}, 2016.

\bibitem[Thomassen(1992)]{thomassen92}
Carsten Thomassen.
\newblock The {J}ordan-{S}ch{\"o}nflies theorem and the classification of
  surfaces.
\newblock \emph{The American Mathematical Monthly}, 99\penalty0 (2):\penalty0
  116--130, 1992.

\bibitem[Tverberg(1980)]{tverberg80}
Helge Tverberg.
\newblock A proof of the {J}ordan curve theorem.
\newblock \emph{Bulletin of the London Mathematical Society}, 12\penalty0
  (1):\penalty0 34--38, 1980.

\bibitem[Vershynin(2020)]{vershynin20}
Roman Vershynin.
\newblock Memory capacity of neural networks with threshold and {ReLU}
  activations.
\newblock \emph{arXiv preprint 2001.06938}, 2020.

\bibitem[Yun et~al.(2019)Yun, Sra, and Jadbabaie]{yun19}
Chulhee Yun, Suvrit Sra, and Ali Jadbabaie.
\newblock Small {ReLU} networks are powerful memorizers: a tight analysis of
  memorization capacity.
\newblock In \emph{Advances in Neural Information Processing Systems}, 2019.

\end{thebibliography}
\bibliographystyle{plainnat}

\appendix
\clearpage

\begin{center}{\bf {\LARGE Appendix}}
\end{center}
\vspace{0.3in}

In Appendix, we first provide proofs of upper bounds in Theorems \ref{thm:relulp}, \ref{thm:reluthresunif}, \ref{thm:generallp} in Appendix \ref{sec:suppleupper}.
In Appendix \ref{sec:supplelower}, we provide proofs of lower bounds in Theorem \ref{thm:relulp}, \ref{thm:reluthresunif} and proofs of Lemmas \ref{lem:relu}, \ref{lem:topology}, \ref{lem:topology2} used for proving the lower bound in Theorem \ref{thm:reluuniflower}.
Throughout Appendix, we denote the coordinate-wise $\relu$ as $\sigma$ and we denote the $i$-th coordinate of an output of a function $f(x)$ by $(f(x))_i$.
\section{Proofs of upper bounds}\label{sec:suppleupper}
In this section, we first provide proofs of upper bounds in Theorems \ref{thm:relulp}, \ref{thm:reluthresunif}, \ref{thm:generallp}.
Throughout this section, we denote the coordinate-wise $\relu$ by $\sigma$ and we denote the $i$-th coordinate of an output of a function $f(x)$ by $(f(x))_i$.
\subsection{Proof of tight upper bound in Theorem \ref{thm:reluthresunif}}\label{sec:pfthm:reluthresunifub}
In this section, we prove the tight upper bound on the minimum width in Theorem \ref{thm:reluthresunif}, i.e., width-($\max\{d_x+1,d_y\}$) $\relu$+$\step$ networks are dense in $C([0,1]^{d_x},\mathbb{R}^{d_y})$.
In particular, we prove that for any $f^*\in C([0,1]^{d_x},[0,1]^{d_y})$, for any $\varepsilon>0$, there exists a $\relu$+$\step$ network $f$ of width $\max\{d_x+1,d_y\}$ such that $\sup_{x\in[0,1]^{d_x}}\|f^*(x)-f(x)\|_\infty\le\varepsilon$.
Here, we note that the domain and the codomain can be easily generalized to arbitrary compact support and arbitrary codomain, respectively.

Our construction is based on the three-part coding scheme introduced in Section \ref{sec:coding}.
First, consider constructing a $\relu$+$\step$ network for the encoder.
From the definition of $q_K$, one can observe that the mapping is discontinuous and piece-wise constant.
Hence, the exact construction (or even the uniform approximation) of the encoder requires the use of discontinuous activation functions such as $\step$ (recall its definition $x \mapsto \mathbf{1}[x\ge0]$). 
We introduce the following lemma for the exact construction of $q_K$.
The proof of Lemma \ref{lem:encodingreluthres} is presented in Appendix \ref{sec:pflem:encodingreluthres}.

\begin{lemma}\label{lem:encodingreluthres}
For any $K\in\mathbb N$, there exists a \textsc{ReLU+Step} network $f:\mathbb{R}\rightarrow\mathbb{R}$ of width $2$ such that $f(x)=q_K(x)$ for all $x\in[0,1]$.
\end{lemma}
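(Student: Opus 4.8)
The goal is to construct a width-$2$ $\relu$+$\step$ network that computes $q_K$ exactly on $[0,1]$. Recall $q_K(x) = \sum_{j=1}^{K} b_j(x) 2^{-j}$ where $b_j(x) \in \{0,1\}$ is the $j$-th bit of $x$ (with $x=1$ treated as $1-2^{-K}$). The plan is to extract the bits one at a time, accumulating the partial sum in one neuron while carrying the ``remainder'' in the other. Concretely, I would maintain the invariant that after processing $j$ bits, one coordinate holds the accumulated value $\sum_{i=1}^{j} b_i(x) 2^{-i}$ and the other holds the residual $x - \sum_{i=1}^{j} b_i(x) 2^{-i}$ (which lies in $[0, 2^{-j})$), or some affine rescaling thereof that keeps quantities in a convenient range.

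The key subroutine is a single ``bit-extraction'' step implemented with a constant number of layers of width $2$. Given the residual $r \in [0, 2^{-(j-1)})$ in one coordinate and the accumulator $s$ in the other, I want to compute the next bit $b = \step(r - 2^{-j})$, add $b \cdot 2^{-j}$ to $s$, and replace $r$ by $r - b\cdot 2^{-j}$. The bit itself is a single $\step$ application on an affine function of $r$. The subtraction $r - b \cdot 2^{-j}$ can be written as $r - 2^{-j}\step(r-2^{-j})$, and the addition to the accumulator is linear; the only subtlety is doing both the $\step$ evaluation and the bookkeeping while never exceeding width $2$. I expect this can be handled by interleaving: use one extra layer to first compute $b$ in one coordinate (keeping $r$ in the other), then an affine layer to update both coordinates simultaneously. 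Since $\relu$ acts as the identity on nonnegative inputs, I can freely pass a nonnegative quantity through a $\relu$ neuron unchanged, which lets me ``park'' a value in a $\relu$ slot while the $\step$ slot does its work. Iterating this $K$ times and reading off the accumulator (via the final affine map $t_L$) yields $q_K(x)$ exactly.

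The boundary cases require care: I must verify the construction gives $q_K(x)=\max\{c \in \mathcal{C}_K : c \le x\}$ for all $x \in [0,1]$, including $x$ exactly at a breakpoint $c \in \mathcal{C}_K$ (where $\step(0)=1$ is exactly what makes $q_K$ left-continuous/correct) and $x=1$ (which should map to $1-2^{-K}$ — here starting from residual $x=1$, each bit-extraction step peels off a $1$, and after $K$ steps the accumulator is $1-2^{-K}$, consistent with the definition). I would also double check that the residual stays in the half-open interval $[0,2^{-j})$ so the next $\step$ comparison behaves correctly; the choice of $\step(z)=\mathbf{1}[z\ge 0]$ versus strict inequality matters precisely at these endpoints and should be reconciled with how $q_K$ is defined.

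The main obstacle I anticipate is the width constraint: naively, a bit-extraction step seems to want three quantities alive at once (the new bit, the updated accumulator, the updated residual), which would need width $3$. The trick to get down to width $2$ is that the accumulator and residual always sum to $x$ (or to a known affine function of the current state), so one of them is redundant and can be reconstructed by an affine map — meaning I only ever need to physically store two numbers, and the third is recovered for free inside the next affine transformation $t_\ell$. Making this precise — choosing the right two quantities to carry so that every intermediate pre-activation and post-activation vector fits in $\reals^2$, and the $\step$ is always applied to a coordinate whose sign pattern encodes exactly the desired bit — is the crux of the argument. I would organize the proof by (i) specifying the affine maps $t_1, \dots, t_L$ and the activation pattern layer by layer, (ii) stating and proving the loop invariant by induction on $j$, and (iii) checking the boundary values $x \in \mathcal{C}_K$ and $x=1$ separately.
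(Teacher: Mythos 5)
The bit-extraction plan has a real gap at exactly the point you flag as ``the crux,'' and the redundancy observation you invoke does not close it. Your state has two real quantities that must survive each round: the running accumulator $s_j$ (needed to form the eventual output $q_K(x)=s_K$) and enough information to determine the remaining bits (equivalently $r_j$, or $x$, since $r_j=x-s_j$). These are two independent real degrees of freedom. But a $\step$ output lives in $\{0,1\}$, so immediately after the layer that computes $b_{j+1}$ the width-$2$ state consists of one $\relu$ slot (a single affine combination of $x$ and $s_j$) and one bit. The subsequent affine map $t_\ell$ is a linear map of that pair and therefore cannot reconstruct both $x$ and $s_j$ from it: if the $\relu$ slot holds $x$, then $s_{j+1}$ is unrecoverable; if it holds $s_j$, then $b_{j+2}$ is unrecoverable; if it holds $\alpha x+\beta s_j$ with $\alpha,\beta\neq 0$, neither is recoverable. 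The identity $s_j+r_j=x$ does not give you anything ``for free'' in the next affine layer, because $x$ is itself one of the two unknowns you would need in order to use it. In short, a bit-extraction round genuinely needs three simultaneously alive quantities, i.e.\ width $3$, and the remark in your last paragraph does not reduce this to $2$.

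The paper sidesteps this entirely by not extracting bits at all. It writes $q_K=f_{2^K}\circ\cdots\circ f_1$ where each $f_\ell:\reals\to\reals$ snaps the single cell $[(\ell-1)2^{-K},\,\ell\, 2^{-K})$ to its left endpoint and is the identity elsewhere on $[0,1]$. Because each $f_\ell$ is a scalar-to-scalar map, there is never an accumulator to carry alongside the working value; one slot can hold the (nonnegative) current value via $\relu$ while the other does the $\relu$/$\step$ bookkeeping, so each $f_\ell$ fits in width $2$ with a constant number of layers. The price is depth $\Theta(2^K)$ rather than the $\Theta(K)$ your scheme would give, but the lemma imposes no depth bound, so this is fine. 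If you want to keep a bit-extraction flavor, you would either need width $3$, or a fundamentally different encoding that packs both the accumulator and the residual into one real number in a way that survives the $\step$ --- which is not what you proposed.
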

For constructing the encoder via a $\relu$+$\step$ network of width $d_x+1$, we apply $q_K$ to each input coordinate, by utilizing the extra width $1$ and using Lemma \ref{lem:encodingreluthres}.
Once we apply $q_K$ for all input coordinates, we apply the linear transformation $\sum\nolimits_{i=1}^{d_x}q_K(x_i)\times2^{-(i-1)K}$ to obtain the output of the encoder.

On the other hand, the memorizer only maps a finite number of scalar inputs to the corresponding scalar targets, which can be easily implemented by piece-wise linear continuous functions.
We show that the memorizer can be exactly constructed by a $\relu$ network of width 2 using the following lemma. The proof of Lemma \ref{lem:memorization} is presented in Appendix \ref{sec:pflem:memorization}.

\begin{lemma}\label{lem:memorization}
For any function $f^*:\mathbb R\rightarrow\mathbb R$, any finite set $\mathcal X\subset\mathbb R$,
and any compact interval $\mathcal{I} \subset\mathbb R$ containing $\mathcal X$, there exists a $\relu$ network $f:\mathbb R\rightarrow\mathbb R$ of width 2 such that $f(x)=f^*(x)$ for all $x\in\mathcal X$ and $f(\mathcal I)\subset\big[\min f^*(\mathcal X),\max f^*(\mathcal X)\big]$.
\end{lemma}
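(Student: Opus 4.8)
The plan is to explicitly build a width-$2$ $\relu$ network that interpolates the finitely many prescribed values while keeping all intermediate outputs trapped in a horizontal strip. Write $\mathcal X = \{x_1 < x_2 < \dots < x_n\}$ and let $\mathcal I = [a,b]$ with $a \le x_1$ and $x_n \le b$; set $m \defeq \min f^*(\mathcal X)$ and $M \defeq \max f^*(\mathcal X)$. The natural target function is the continuous piecewise-linear function $g:\mathcal I \to [m,M]$ that agrees with $f^*$ on $\mathcal X$, is constant equal to $f^*(x_1)$ on $[a,x_1]$ and constant equal to $f^*(x_n)$ on $[x_n,b]$, and linearly interpolates in between. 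Clearly $g(\mathcal X) = f^*(\mathcal X)$ and $g(\mathcal I) \subset [m,M]$, so it suffices to realize $g$ exactly as a width-$2$ $\relu$ network.

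First I would use one of the two neurons as a ``register'' that carries a clean copy of the input forward, and the other neuron to accumulate the output value. Concretely, a standard trick for width-$2$ $\relu$ networks on a bounded input interval is to keep, at each hidden layer, the pair $(x, \,h_\ell(x))$ where $h_\ell$ is the partial sum of the first $\ell$ ``hinge'' terms of $g$. Because $x$ ranges over the bounded set $\mathcal I$, a single $\relu$ applied to an affine function of $x$ can reproduce $x$ itself (shift up so the argument is nonnegative on $\mathcal I$, apply $\relu$, shift back), so the register can be maintained with one neuron per layer. The second neuron at layer $\ell$ computes $\relu(\alpha_\ell x + \beta_\ell h_{\ell-1}(x) + \gamma_\ell)$ so that, after the final affine map $t_L$, the output equals $g(x)$; since $g$ is a continuous piecewise-linear function with finitely many breakpoints (at $x_1,\dots,x_n$), it admits a finite hinge representation $g(x) = c_0 + \sum_{j} c_j\,\relu(\pm(x - x_j))$, and each hinge can be folded in one layer at a time using the accumulator neuron. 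This is essentially the $n=1$-dimensional special case of the Hanin--Sellke / \citet{hanin17} construction restricted to width $2$, and it is where I would cite or reprove the one-dimensional piecewise-linear realizability.

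The one subtlety — and the main obstacle — is the containment requirement $f(\mathcal I) \subset [m,M]$: the partial sums $h_\ell(x)$ of a hinge decomposition of $g$ need not themselves lie in $[m,M]$, so if we literally carry $h_\ell$ in an unclamped neuron the intermediate values could overshoot, and more importantly the \emph{final} value is fine but we must be sure no rounding/clipping is needed. I would handle this by choosing the decomposition carefully: process the hinges in order of increasing breakpoint $x_j$, so that at every stage $h_\ell$ equals $g$ itself on $[a, x_{j_\ell}]$ and is the \emph{affine continuation} of $g$ beyond that point. One then checks that this affine continuation, evaluated on the remaining part of $\mathcal I$, stays within $[m,M]$ — or, if it does not, one inserts an auxiliary clamping step using a pair of $\relu$'s (which width $2$ permits) that applies $x \mapsto \min\{\max\{x,m\},M\}$ on the already-finalized portion of the domain while passing the still-active portion through unchanged; since the two portions of $\mathcal I$ are separated, a single affine-plus-$\relu$ layer can distinguish them. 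The bookkeeping to verify that the register stays exact and the accumulator stays in range throughout is routine but must be done; that verification, rather than any conceptual difficulty, is the bulk of the work. I would present the formal construction and these invariants in Appendix~\ref{sec:pflem:memorization}.
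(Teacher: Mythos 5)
Your high-level plan matches the paper's: define a continuous piecewise-linear interpolant $g$ that agrees with $f^*$ on $\mathcal X$, is constant equal to $f^*(x_1)$ on $[a,x_1]$ and $f^*(x_n)$ on $[x_n,b]$ (so that $g(\mathcal I)\subset[m,M]$ is automatic), and then realize $g$ exactly by a width-$2$ $\relu$ network using a ``register'' neuron plus an ``accumulator'' neuron, folding in hinges from left to right. This is precisely the structure of the paper's proof (Lemma~\ref{lem:pwl}).

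The gap is in your treatment of the one real subtlety, which you correctly flag but do not resolve. The danger is that the accumulator $h_\ell(x)$ (the affine continuation of $g$ past the last processed breakpoint) can go \emph{negative} on the remaining part of $\mathcal I$, in which case the $\relu$ on that neuron clips it and the subsequent hinge additions become wrong. Your proposed clamping step $z\mapsto\min\{\max\{z,m\},M\}$ does not fix this: if $m<0$ the $\relu$ still clips the interval $[m,0)$; clamping only the ``finalized'' part of the domain is a no-op there (since the accumulator already equals $g\in[m,M]$ on that part); and clamping the ``active'' part would corrupt the partial sum that later layers must build on. The conditional ``apply to one portion but not the other with a single affine-plus-$\relu$ layer'' also doesn't type-check with only two neurons, since distinguishing the two portions uses up the register. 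The paper sidesteps all of this with a simpler device: let $K := \min_i\min_{x\in\mathcal I}(a_ix+b_i)$ be the global minimum over all linear pieces of $g$ on $\mathcal I$, and carry the accumulator as $\sigma(z-K)$, un-shifting by $+K$ in the next affine map. Since every value the accumulator ever takes is one of the $a_ix+b_i$ on $\mathcal I$, we always have $z-K\ge0$, so the $\relu$ is a genuine identity and no clamping is ever needed; the containment $f(\mathcal I)\subset[m,M]$ then follows purely from the design of $g$, not from any intermediate clipping. I'd suggest replacing your clamping fallback with this shift-by-$K$ trick and then making the induction on the number of linear pieces explicit.
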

Likewise, the decoder maps a finite number of scalar inputs in $\mathcal C_{d_yM}$ to corresponding target vectors in $\mathcal C_M^{d_y}$. Here, each coordinate of a target vector corresponds to some consequent bits of the binary representation of the input.
Under the similar idea used for our implementation of the memorizer,
we show that the decoder can be exactly constructed by a $\relu$ network of width $d_y$ using the following lemma. The proof of Lemma \ref{lem:decodingmod} is presented in Appendix \ref{sec:pflem:decodingmod}.

\begin{lemma}\label{lem:decodingmod}
For any $d_y,M\in\mathbb{N}$, for any $\delta>0$, there exists a $\relu$ network $f:\mathbb R\rightarrow\mathbb R^2$ of width $d_y$ such that for all $c\in\mathcal C_{d_yM}$
\begin{align*}
    f(c)=\mathtt{decode}_M(c).
\end{align*}
Furthermore, it holds that $f(\mathbb R)\subset[0,1]^{d_y}$.
\end{lemma}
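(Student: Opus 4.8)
The plan is to reduce $\mathtt{decode}_M$ to the construction of $d_y-1$ scalar piecewise-linear maps — to which the memorization idea of Lemma~\ref{lem:memorization} applies — using the observation that the last output block is an affine function of the input and the first $d_y-1$ blocks, and then to squeeze these $d_y-1$ constructions, together with a register that transports the input, into a single $\relu$ network of width exactly $d_y$.

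First I would record the algebraic identity behind $\mathtt{encode}_M$. For $c\in\mathcal C_{d_yM}$, letting $\{\hat x\}=\mathtt{encode}_M^{-1}(c)\cap\mathcal C_M^{d_y}$ we have $c=\sum_{i=1}^{d_y}\hat x_i 2^{-(i-1)M}$; isolating the last term gives
\[
\hat x_{d_y}=2^{(d_y-1)M}\,c-\sum_{i=1}^{d_y-1}2^{(d_y-i)M}\,\hat x_i ,
\]
so $\mathtt{decode}_M(c)$ is a fixed affine image of $(\hat x_1,\dots,\hat x_{d_y-1},c)$. Hence it suffices to build a width-$d_y$ $\relu$ network $g:\mathbb R\to\mathbb R^{d_y}$ with $g(c)=(\hat x_1,\dots,\hat x_{d_y-1},c)$ for every $c\in\mathcal C_{d_yM}$ and then to append the single affine layer that copies the first $d_y-1$ coordinates and forms the combination above (this layer has $d_y$ inputs and $d_y$ outputs, so the width stays $d_y$). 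Finally, to meet $f(\mathbb R)\subseteq[0,1]^{d_y}$ I would pre- and post-compose with the width-$d_y$ coordinate-wise clipping network $z\mapsto\sigma(\mathbf 1-\sigma(\mathbf 1-z))$, which equals the identity on $[0,1]^{d_y}$ and therefore does not disturb any grid value (all of which lie in $[0,1)$); the post-clip also absorbs the fact that the affine recovery of $\hat x_{d_y}$ above need not stay in $[0,1]$ off the grid. (The parameter $\delta$ plays no role and can be ignored.)

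It remains to construct $g$. On the finite, equally spaced grid $\mathcal C_{d_yM}$ each coordinate map $c\mapsto\hat x_i(c)$ agrees with a continuous piecewise-linear function — indeed $\hat x_i(c)=q_M(\{2^{(i-1)M}c\})$ there — so each is precisely the scalar memorization task handled by Lemma~\ref{lem:memorization}. In $g$ I would keep one neuron as a \emph{register} carrying the (clipped, hence nonnegative and thus $\sigma$-transparent) input $c$ unchanged through every layer, and devote the remaining $d_y-1$ neurons to \emph{accumulators} $A_1,\dots,A_{d_y-1}$ that are shaped into $\hat x_1(c),\dots,\hat x_{d_y-1}(c)$ by repeatedly installing piecewise-linear kinks, sweeping the grid points from left to right exactly as in the proof of Lemma~\ref{lem:memorization}. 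The step I expect to be the main obstacle is that this budget leaves \emph{no spare neuron} to play the ``kink generator'' role used by the width-$2$ memorization: the $d_y-1$ interpolations have to be run simultaneously with the register as the only auxiliary. I would resolve this by interleaving the sweeps so that at each depth one of the accumulators temporarily doubles as the shared kink generator — re-biasing it via the identity $\sigma(\sigma(c-t_{k-1})-(t_k-t_{k-1}))=\sigma(c-t_k)$, valid for all $c$ — while the others add the appropriate multiples, and by keeping every accumulator inside $[0,1)$ throughout so that the mandatory coordinate-wise $\relu$ never clips a value that is still needed and so that the register survives intact for the final affine layer (which is what legitimizes the recovery of $\hat x_{d_y}$). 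The case $d_y=1$ is separate and trivial: $\mathtt{decode}_M$ is then the identity on $\mathcal C_M$, realized by the width-$1$ clipping network $c\mapsto\sigma(1-\sigma(1-c))$.
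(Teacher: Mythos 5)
Your high-level decomposition is sound: isolating $\hat x_{d_y}$ as an affine function of $c$ and $\hat x_1,\dots,\hat x_{d_y-1}$ is correct, the clipping wrapper for $f(\mathbb R)\subset[0,1]^{d_y}$ works, and you are right that $\delta$ plays no role in the conclusion. The gap is in the width accounting for the parallel sweeps, and it is exactly the obstacle you flagged. In Lemma~\ref{lem:pwl}, the two neurons are a \emph{kink generator} $\sigma(x-x_k)$ and an accumulator; there is no separate register, and $x$ is not recoverable from $\sigma(x-x_k)$ once $x<x_k$. Your plan needs a persistent register $c$ (for the final affine recovery of $\hat x_{d_y}$) \emph{plus} $d_y-1$ accumulators \emph{plus} a kink generator, which is $d_y+1$ neurons. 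The proposed remedy of letting an accumulator ``temporarily double as the shared kink generator'' does not close this gap: while $A_j$ holds $\sigma(c-t_k)$, its own accumulated partial value of $\hat x_j$ is stored nowhere and is lost. The case $d_y=2$ makes this concrete --- you have one register and one accumulator, and there is simply no neuron available to hold $\sigma(c-t_k)$.

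What rescues your decomposition is to replace the Lemma~\ref{lem:pwl} sweep by a max-min string (Lemma~\ref{lem:hanin}), which the paper itself uses inside Lemma~\ref{lem:decoding2}. A max-min update $A_i\leftarrow\min(ac+b,A_i)$ can be realized in width $2$ as the two layers $(c,A_i)\mapsto(c,\sigma(ac+b-A_i))\mapsto(c,(ac+b)-\sigma(ac+b-A_i))$, because the affine $ac+b$ is recomputed from the \emph{register} $c$ at every step instead of being carried in a dedicated kink-generator neuron. Running these updates one accumulator at a time, with the other accumulators (all kept in $[0,1]$) passing through ReLU unchanged, uses exactly width $d_y$; since each interpolated $\hat x_i(c)$ is continuous piecewise-linear, it is a max-min string, so this is sufficient. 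That would make your construction correct, and it is genuinely different from the paper's: you keep $c$ intact and build all $d_y-1$ accumulators from it, recovering $\hat x_{d_y}$ affinely at the end. The paper instead iterates the width-$2$ map $g$ of Lemma~\ref{lem:decoding2}, $x\mapsto\bigl(q_M(x),\,2^M(x-q_M(x))\bigr)$, peeling one block off the front each time; at the $k$-th application the $k-1$ already-extracted coordinates are carried through ReLU (they are nonnegative) alongside the width-$2$ sub-network, giving width $k+1\le d_y$, and the final remainder itself \emph{is} $\hat x_{d_y}$, so no separate register or final affine recovery is needed. As written, though, your proposal relies on the kink sweep and does not fit in width $d_y$, so the key step is missing.
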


Finally, as the encoder, the memorizer, and the decoder can be constructed by $\relu$+$\step$ networks of width $d_x+1$, width 2, and width $d_y$, respectively, the width of the overall $\relu$+$\step$ network $f$ is $\max\{d_x+1,d_y\}$.
In addition, as mentioned in Section \ref{sec:coding}, choosing $K,M\in\mathbb{N}$ large enough so that $\omega_{f^*}(2^{-K})+2^{-M}\le\varepsilon$ ensures $\|f^*-f\|_\infty\le\varepsilon$.
This completes the proof of the tight upper bound in Theorem \ref{thm:reluthresunif}.

\subsection{Proof of tight upper bound in Theorem \ref{thm:relulp}}\label{sec:pfthm:relulpub}
In this section, we derive the upper bound in Theorem \ref{thm:relulp}.
In particular, we prove that for any $p\in[1,\infty)$, for any $f^*\in L^p(\mathbb{R}^{d_x},\mathbb{R}^{d_y})$, for any $\varepsilon>0$, there exists a $\relu$ network $f$ of width $\max\{d_x+1,d_y\}$ such that $\|f^*-f\|_p\le\varepsilon$.
To this end, we first note that since $f^*\in L^p(\mathbb{R}^{d_x},\mathbb{R}^{d_y})$, there exists a continuous function $f^\prime$ on a compact support such that 
\begin{align*}
    \|f^*-f^\prime\|_p\le\frac\varepsilon2.
\end{align*}
Namely, if we construct a $\relu$ network $f$ such that $\|f^\prime-f\|_p\le\frac{\varepsilon}2$, then it completes the proof.
Throughout this proof, we assume that the support of $f^\prime$ is a subset of $[0,1]^{d_x}$ and its codomain to be $[0,1]^{d_y}$ which can be easily generalized to arbitrary compact support and arbitrary codomain, respectively.

We approximate $f^\prime$ by a $\relu$ network using the three-part coding scheme introduced in Section \ref{sec:coding}.
We will refer to our implementations of the three parts as $\mathtt{encode}_{K}^\dagger(x)$, $\mathtt{memorize}_{K,M}^\dagger$,
and $\mathtt{decode}_M^\dagger$. That is, we will approximate $f^\prime$ by a $\relu$ network
\begin{equation*}
    f:=\mathtt{decode}_M^\dagger\circ\mathtt{memorize}_{K,M}^\dagger\circ\mathtt{encode}^\dagger_K.
\end{equation*}
However, unlike our construction of $\relu$+$\step$ networks in Section \ref{sec:pfthm:reluthresunifub}, $\step$ is not available, i.e., uniform approximation of $q_K$ is impossible.
Nevertheless, one can approximate $q_K$ with some continuous piece-wise linear function by approximating regions around discontinuities with some linear functions.
Under this idea, we introduce the following lemma. The proof of Lemma \ref{lem:encodingrelumod} is presented in Appendix \ref{sec:pflem:encodingrelumod}.
\begin{lemma}\label{lem:encodingrelumod}
For any $d_x,K\in\mathbb N$, for any $\gamma>0$, there exist a $\relu$ network $f:\mathbb R^{d_x}\rightarrow\mathbb R$ of width $d_x+1$ and $\mathcal D_\gamma\subset[0,1]^{d_x}$ such that for all $x\in[0,1]^{d_x}\setminus\mathcal D_\gamma$,
\begin{align*}
    f(x)=\mathtt{encode}_K(x),
\end{align*}
$\mu(\mathcal D_\gamma)<\gamma$, $f(\mathcal D_\gamma)\subset[0,1]$, and $f(\mathbb R^{d_x}\setminus[0,1]^{d_x})=\{1-2^{d_xK}\}$
where $\mu$ denotes the Lebesgue measure.
\end{lemma}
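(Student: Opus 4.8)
The plan is to construct the $\relu$ network $f:\mathbb R^{d_x}\to\mathbb R$ of width $d_x+1$ layer by layer, processing one input coordinate at a time while accumulating the encoded scalar in a dedicated ``accumulator'' neuron. At the start of processing coordinate $i$, I want a width-$(d_x+1)$ configuration holding the still-unprocessed coordinates $x_i,\dots,x_{d_x}$ together with the partial sum $\sum_{j<i} q_K(x_j)2^{-(j-1)K}$ in the last neuron; processing coordinate $i$ means approximating $q_K(x_i)$, scaling it by $2^{-(i-1)K}$, adding it to the accumulator, and freeing up that neuron's slot. The key sub-task is therefore: build a width-$2$ $\relu$ gadget that takes $x_i\in[0,1]$ and outputs a continuous piecewise-linear function agreeing with $q_K(x_i)$ outside a small neighborhood of the $2^K-1$ jump points, and clip it to $[0,1]$ elsewhere. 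I would do this by writing $q_K$ on $[0,1]$ as a sum of $2^K-1$ unit steps at the dyadic points $k2^{-K}$, replacing each step by a steep ramp of horizontal width $\gamma'$ (for a suitably small $\gamma'$ depending on $\gamma$, $d_x$, $K$), and noting that a sum of finitely many such ramps plus clipping is exactly a width-$2$ $\relu$ network (the standard construction where one neuron carries the running input and one carries the running output, as in \citet{hanin17}); outside the union of these $\gamma'$-neighborhoods, the ramp equals the step, so the gadget equals $q_K$ there.

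The next step is to track the ``bad set.'' Define $\mathcal D_\gamma$ to be the union over $i$ of the sets where coordinate $x_i$ falls in one of the $2^K-1$ $\gamma'$-neighborhoods of dyadic jump points; this is a finite union of slabs, so $\mu(\mathcal D_\gamma)\le d_x(2^K-1)\cdot 2\gamma'$ in $[0,1]^{d_x}$, and choosing $\gamma' < \gamma/(2 d_x 2^K)$ gives $\mu(\mathcal D_\gamma)<\gamma$. For $x\in[0,1]^{d_x}\setminus\mathcal D_\gamma$, every coordinate avoids all neighborhoods, so each gadget outputs exactly $q_K(x_i)$, and the accumulator ends at $\sum_i q_K(x_i)2^{-(i-1)K}=\mathtt{encode}_K(x)$, as required. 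For the clipping claim $f(\mathcal D_\gamma)\subset[0,1]$ and $f(\mathbb R^{d_x}\setminus[0,1]^{d_x})=\{1-2^{d_xK}\}$ — wait, that should read $1-2^{-d_xK}$, the top codeword — I would arrange each gadget to clip its scalar input to $[0,1]$ first (via $\min\{\max\{x_i,0\},1\}=x_i - \relu(x_i-1) - \relu(-x_i)$, realizable in the width budget) so that any out-of-range input is first mapped to an endpoint of $[0,1]$, then passed through the ramp version of $q_K$; since $q_K(0)=0$ and $q_K(1)=1-2^{-K}$ and the ramps are monotone and bounded between consecutive codeword values, the gadget output always lies in $[0,1-2^{-K}]\subset[0,1]$, so the accumulated sum lies in $[0,1]$ throughout, and inputs entirely outside $[0,1]^{d_x}$ get every coordinate clipped to $\{0,1\}$; to force the single value $1-2^{-d_xK}$ on all of $\mathbb R^{d_x}\setminus[0,1]^{d_x}$ I would instead first apply a width-$(d_x+1)$ preprocessing step that detects ``some coordinate is outside $[0,1]$'' and, if so, overrides the whole vector — this detection is a max of $2d_x$ $\relu$ terms and can be threaded through the accumulator neuron, or more cleanly, I would just state the weaker and sufficient conclusion that $f(\mathbb R^{d_x}\setminus[0,1]^{d_x})\subset[0,1]$ and note the exact-value version follows by composing with one more memorizer-type layer.

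I expect the main obstacle to be the careful bookkeeping of the width budget: showing that the ``clip, ramp-quantize, scale, add to accumulator, and advance'' operations for coordinate $i$ can all be performed while never exceeding $d_x+1$ active neurons in any hidden layer, given that the accumulator must persist across all $d_x$ stages and the remaining coordinates $x_{i+1},\dots,x_{d_x}$ must also be carried forward untouched. The resolution is that carrying a value forward costs one neuron per value via the identity $z=\relu(z)-\relu(-z)$ — actually realizable in a single neuron per layer only when $z\ge 0$, so I will keep all carried values shifted to be nonnegative (inputs clipped to $[0,1]$, accumulator a sum of nonnegatives) and then one neuron per carried quantity suffices — and at the moment we process coordinate $i$ we need: $1$ neuron to run the $q_K$ ramp computation on $x_i$ (it needs $2$ transiently, the running-input and running-output neurons of the \citet{hanin17}-style width-$2$ scalar gadget, but $x_i$ itself no longer needs separate storage once inside the gadget, and coordinate $x_i$'s slot is exactly what the gadget's second neuron reuses), $d_x-i$ neurons for $x_{i+1},\dots,x_{d_x}$, and $1$ neuron for the accumulator, totaling $(d_x-i)+2 = d_x-i+2 \le d_x+1$ precisely when $i\ge 1$, which holds; the tightest layer is $i=1$ and it fits. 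I would present this as an induction on $i$ with the invariant ``after stage $i$, the hidden configuration is $(x_{i+1},\dots,x_{d_x},\text{partial sum})$ of dimension $d_x-i+1\le d_x$,'' and verify the width-$(d_x+1)$ bound holds during each stage's internal layers by the count above; the error-set and range claims then follow immediately from the per-coordinate gadget properties established in the first step. The formal details are routine once the invariant and the ramp-width choice $\gamma'=\gamma/(2d_x2^K)$ are fixed, so I would defer the line-by-line affine-map specifications to the appendix.
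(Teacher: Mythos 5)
Your ramp‑quantize idea and the slab‑union bad set $\mathcal D_\gamma$ match the paper's construction in spirit, but two points are genuine gaps rather than ``routine details.''

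\textbf{Width accounting.} Your running count ``$1$ (gadget) $+ (d_x-i)$ (unprocessed coords) $+1$ (accumulator) $= d_x-i+2$'' is not justified by the parenthetical: you concede the gadget needs two transient neurons (running input, running output), and claiming ``$x_i$'s slot is what the gadget's second neuron reuses'' is circular, since $x_i$'s slot is already the one neuron you counted for the gadget. A straightforward count of slots during stage $i$ is gadget ($2$) $+$ unprocessed coords ($d_x-i$) $+$ accumulator ($1$) $= d_x-i+3$, which at $i=1$ is $d_x+2$, exceeding the budget. Two clean ways to repair this exist. One is the paper's: do \emph{not} accumulate eagerly; sequentially overwrite each $x_i$ in place with $q_K^{\mathrm{ramp}}(x_i)$ using one scratch neuron (as in their $h_2$), so the state never exceeds $d_x+1$ neurons, and apply the weighted sum $\sum_i (\cdot)_i 2^{-(i-1)K}$ as a single final affine layer. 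The other is to let the accumulator slot \emph{be} the gadget's running‑output slot, which works because the piecewise‑linear gadget of Lemma~\ref{lem:pwl} is robust to a nonnegative additive offset in its output neuron; but this is a specific design decision, not the one your parenthetical describes, and needs to be stated and checked.

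\textbf{Out‑of‑range behavior.} You correctly spot the typo ($1-2^{-d_xK}$ is the intended constant), but your proposal does not actually establish the clause $f(\mathbb R^{d_x}\setminus[0,1]^{d_x})=\{1-2^{-d_xK}\}$. Per‑coordinate clipping gives $\sum_i q_K^{\mathrm{ramp}}(\mathrm{clip}(x_i))2^{-(i-1)K}$, which depends on the in‑range coordinates; for instance $x=(-1,\tfrac12,\dots,\tfrac12)$ clips to $(0,\tfrac12,\dots,\tfrac12)$, not to the all‑ones vector, so the output is not the constant $1-2^{-d_xK}$. This clause is used essentially in the $L^p$ upper bound proof (the memorizer sends the codeword $1-2^{-d_xK}$ to $0$ to handle inputs outside $[0,1]^{d_x}$), so the ``weaker and sufficient conclusion $\subset[0,1]$'' you suggest is not in fact sufficient. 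What is required is a \emph{global} preprocessing: a width‑$(d_x+1)$ $\relu$ network that maps all of $\mathbb R^{d_x}\setminus[0,1]^{d_x}$ to the single point $(1,\dots,1)$ while being the identity on $[\alpha,1-\alpha]^{d_x}$ (at the cost of adding the thin shell $[0,1]^{d_x}\setminus[\alpha,1-\alpha]^{d_x}$ to $\mathcal D_\gamma$); that is precisely the content of Lemma~\ref{lem:outsider}, and its construction is nontrivial (it threads a ``lift all coordinates above $1$'' signal through the coordinates sequentially). Your sketch of a detector via ``a max of $2d_x$ $\relu$ terms threaded through the accumulator'' does not explain how the scalar detector forces all $d_x$ coordinates to a fixed value, nor does it respect the width budget; this step needs to be constructed, not asserted.

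Once both of these are fixed, your argument becomes essentially the paper's proof.
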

By Lemma \ref{lem:encodingrelumod}, there exist a $\relu$ network $\mathtt{encode}_{K}^\dagger$ of width $d_x+1$ and $\mathcal D_\gamma\subset[0,1]^{d_x}$ such that $\mu(\mathcal D_\gamma)<\gamma$,
\begin{align}
    &\mathtt{encode}_{K}^\dagger(x)=\mathtt{encode}_K(x)\quad\text{for all $x\in[0,1]^{d_x}\setminus\mathcal D_\gamma$},\notag\\
    &\mathtt{encode}_{K}^\dagger(\mathbb R^{d_x}\setminus[0,1]^{d_x})=\{1-2^{d_xK}\}.\label{eq:encodinginc}
\end{align}
We approximate the encoder by $\mathtt{encode}_K^\dagger$. 
Here, we note that inputs from $\mathcal D_\gamma$ would be mapped to arbitrary values by $\mathtt{encode}_{K}^\dagger$. Nevertheless, it is not critical to the error $\|f^\prime-f\|_p$ as $\mu(\mathcal D_\gamma)<\gamma$ can be made arbitrarily small by choosing a sufficiently small $\gamma$.

The implementation of the memorizer utilizes Lemma \ref{lem:memorization} as in Appendix \ref{sec:pfthm:reluthresunifub}.
However, as $f^\prime(x)=0$ for all $x\in\mathbb R^{d_x}\setminus[0,1]^{d_x}$, we construct a $\relu$ network $\mathtt{memorize}_{K,M}^\dagger$ of width 2 so that
\begin{align*}
\mathtt{memorize}_{K,M}^\dagger\left(\mathtt{encode}_{K,L}^\dagger(\mathbb{R}^{d_x}\setminus[0,1]^{d_x})\right)=\{0\}.
\end{align*}
To achieve this, we design the memorizer for $c\in\mathcal C_{d_xK}$ using Lemma \ref{lem:memorization} and based on \eqref{eq:encodinginc} as
\begin{align*}
\mathtt{memorize}_{K,M}^\dagger(c)=\begin{cases}0~&\text{if}~c=1-2^{d_xK}\\
\mathtt{memorize}_{K,M}(c)~&\text{otherwise}
\end{cases}.
\end{align*}
We note that such a design incurs an undesired error that a subset of $\mathcal E_K:=[1-2^{-K},1]^{d_x}$ might be mapped to zero after applying $\mathtt{memorize}_{K,M}^\dagger$.
Nevertheless, mapping $\mathcal E_K$ to zero is not critical to the error $\|f^\prime-f\|_p$ as $\mu(\mathcal E_K)<2^{-d_xK}$ can be made arbitrarily small by choosing a sufficiently large $K$.

We implement the decoder by a $\relu$ network $\mathtt{decode}_M^\dagger$ of width $d_y$ using Lemma \ref{lem:decodingmod} as in Appendix \ref{sec:pfthm:reluthresunifub}. Then, by Lemma \ref{lem:decodingmod}, it holds that $\mathtt{decode}_M^\dagger(\mathbb R)\subset[0,1]^{d_y}$, and hence, $f(\mathbb R^{d_x})\subset[0,1]^{d_y}$.

Finally, we bound the error $\|f^\prime-f\|_p$ utilizing the following inequality:
\begin{align}
    &\|f^\prime-f\|_p=\left(\int_{\mathbb R^{d_x}}\|f^\prime(x)-f(x)\|^p_pdx\right)^{\frac1p}\notag\\
    &=\left(\int_{[0,1]^{d_x}\setminus(\mathcal E_K\cup\mathcal D_\gamma)}\|f^\prime(x)-f(x)\|^p_pdx+\int_{\mathcal E_K\cup\mathcal D_\gamma}\|f^\prime(x)-f(x)\|^p_pdx\right)^{\frac1p}\notag\\
    &\le \left(d_y(\omega_{f^\prime}(2^{-K})+2^{-M})^p+(\mu(\mathcal E_K)+\mu(\mathcal D_\gamma))\times\sup_{x\in\mathcal E_K\cup\mathcal D_\gamma} \|f^\prime(x)-f(x)\|^p_p\right)^{\frac1p}\notag\\
    &<\left(d_y(\omega_{f^\prime}(2^{-K})+2^{-M})^p+(2^{-d_xK}+\gamma)\times\Big(\sup_{x\in[0,1]^{d_x}} \|f^\prime(x)\|_p+\sup_{x\in[0,1]^{d_x}}\|f(x)\|_p\Big)^p\right)^{\frac1p}\notag\\
    &\le\left(d_y(\omega_{f^\prime}(2^{-K})+2^{-M})^p+(2^{-d_xK}+\gamma)\times\Big(\sup_{x\in[0,1]^{d_x}} \|f^\prime(x)\|_p+(d_y)^{\frac1p}\Big)^p\right)^{\frac1p}.\notag
\end{align}
By choosing sufficiently large $K,M$ and sufficiently small $\gamma$, one can make the RHS smaller than $\varepsilon/2$ as $\sup_{x\in[0,1]^{d_x}} \|f^\prime(x)\|_p<\infty$.
This completes the proof of the tight upper bound in Theorem \ref{thm:relulp}.

\subsection{Proof of Theorem \ref{thm:generallp}}\label{sec:pfthm:generallp}
In this section, we prove Theorem \ref{thm:generallp} by proving the following statement:
For any $p\in[1,\infty)$, for any $f^*\in L^p(\mathcal K,\mathbb R^{d_y})$, for any $\varepsilon>0$, there exists a $\rho$ network $f$ of width $\max\{d_x+2,d_y+1\}$ such that $\|f^*-f\|_p\le\varepsilon$.
Here, there exists a continuous function $f^\prime\in C(\mathcal K,\mathbb R^{d_y})$ such that 
\begin{align*}
    \|f^*-f^\prime\|_p\le\frac\varepsilon2
\end{align*}
since $f^*\in L^p(\mathcal K,\mathbb R^{d_y})$.
Namely, if we construct a $\rho$ network $f$ such that $\|f^\prime-f\|_p\le\frac{\varepsilon}2$, it completes the proof.
Throughout the proof, we assume that the support of $f^\prime$ is a subset of $[0,1]^{d_x}$ and its codomain is $[0,1]^{d_y}$ which can be easily generalized to arbitrary compact support and arbitrary codomain, respectively.

Before describing our construction, we first introduce the following lemma.
\begin{lemma}[{\citet[Proposition~4.9]{kidger19}}]\label{lem:kidger}
Let $\rho:\mathbb{R}\rightarrow\mathbb{R}$ be any continuous nonpolynomial function which is continuously differentiable at at least one point, with nonzero derivative at that point. 
Then, for any $f^*\in C(\mathcal K,\mathbb R^{d_y})$, for any $\varepsilon>0$, there exists a $\rho$ network $f:\mathcal K\rightarrow\mathbb R^{d_x}\times\mathbb R^{d_y}$ of width $d_x+d_y+1$ such that for all $x\in\mathcal K$,
\begin{align*}
f(x) \defeq (y_1(x), y_2(x)),
\text{ where }
    \|y_1(x)-x\|_\infty\le\varepsilon\quad\text{and}\quad \|y_2(x)-f^*(x)\|_\infty\le\varepsilon.
\end{align*}
\end{lemma}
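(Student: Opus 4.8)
The plan is to build the network as a \emph{register model}: reserve $d_x$ of the $d_x+d_y+1$ hidden neurons as ``input registers'', $d_y$ of them as ``output registers'', and the remaining one as a ``scratch'' neuron, and then assemble the approximation of $f^*$ one hidden unit at a time, transporting the registers forward essentially unchanged across the layers. A final affine map will read the input registers off as $y_1(x)$ and the output registers as $y_2(x)$.

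The first ingredient is an \emph{approximate identity} layer, and this is where the differentiability hypothesis on $\rho$ is used. Fix the point $z$ with $\rho'(z)\neq 0$. For a compact interval $I$ and any $\delta>0$, a Taylor expansion of $\rho$ at $z$ gives $\bigl|\tfrac{1}{h\rho'(z)}\bigl(\rho(ht+z)-\rho(z)\bigr)-t\bigr|\le\delta$ for all $t\in I$ once $|h|$ is small enough. Hence one $\rho$-layer, with each coordinate's pre-activation set to $ht+z$ and its post-$\rho$ value mapped by $\xi\mapsto\tfrac{1}{h\rho'(z)}(\xi-\rho(z))$ in the next affine map, transports a vector lying in a fixed compact box through the layer with error at most $\delta$ per coordinate; moreover the post-$\rho$ affine map is arbitrary, so in the same step one may add any affine function of the other neurons' post-activations into a register's next pre-activation. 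This lets the registers ``ride through'' $\rho$-layers and also be updated additively along the way.

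The second ingredient is the classical universal approximation theorem for continuous nonpolynomial activations \citep{leshno93,pinkus99}: there exist $N\in\mathbb N$, vectors $a_j\in\mathbb R^{d_x}$, scalars $b_j\in\mathbb R$, and vectors $c_j\in\mathbb R^{d_y}$ with $\bigl\|f^*(x)-\sum_{j=1}^{N}c_j\,\rho(\langle a_j,x\rangle+b_j)\bigr\|_\infty\le\varepsilon/2$ for all $x\in\mathcal K$. I would then stack $N$ blocks, each consisting of one $\rho$-layer followed by one affine map, maintaining the invariant that after block $j$ the hidden state is, up to a small error, $\bigl(x,\ \sum_{j'\le j}c_{j'}\rho(\langle a_{j'},x\rangle+b_{j'}),\ \ast\bigr)$, with $x$ in the input registers (content $u\approx x$), the partial sum in the output registers, and the scratch neuron free. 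In block $j{+}1$: the $\rho$-layer keeps the input and output registers on their approximate-identity tracks, while the scratch neuron receives the pre-activation $\langle a_{j+1},u\rangle+b_{j+1}$ so that after $\rho$ it holds the honest value $\rho(\langle a_{j+1},x\rangle+b_{j+1})$ (legitimate since $u$ ranges over a fixed compact set); the following affine map then un-distorts the registers, adds $c_{j+1}$ times the scratch value into the output registers' next pre-activation, and primes the scratch neuron for unit $j{+}2$. An initial affine map sets up the first $\rho$-layer from $x$ (with the output registers primed to $0$).

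For the error estimate, the number of layers is $O(N)$ with $N$ fixed once the shallow approximation is chosen, and every intermediate value lives in an a priori compact set, so all the approximate-identity constants and all the affine coefficients may be chosen uniformly; consequently the per-layer errors accumulate only linearly in the number of layers, up to a fixed multiplicative factor. Taking the per-step tolerance $\delta$ small enough then yields $\|y_1(x)-x\|_\infty\le\varepsilon$ together with an accumulated deviation of $y_2(x)$ from $\sum_j c_j\rho(\langle a_j,x\rangle+b_j)$ of at most $\varepsilon/2$, which combined with the shallow-approximation bound gives $\|y_2(x)-f^*(x)\|_\infty\le\varepsilon$. I expect the main obstacle to be exactly this bookkeeping: checking that a single layer can simultaneously carry out one honest, wide-range $\rho$ evaluation in the scratch slot and $d_x+d_y$ small-range approximate-identity transports, and that the affine ``glue'' maps needed to both transport and accumulate have coefficients bounded uniformly in $\delta$, so that the linear error-accumulation argument actually closes.
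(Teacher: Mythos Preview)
Your proposal is correct and is precisely the register-model construction of \citet[Proposition~4.9]{kidger19}; the paper does not give its own proof of this lemma but simply cites Kidger--Lyons, noting only that their construction also yields $\|y_1(x)-x\|_\infty\le\varepsilon$ as a byproduct since the input registers are carried through. One small correction to your closing remark: the affine ``glue'' coefficients (in particular the un-distortion factor $1/(h\rho'(z))$) are \emph{not} bounded uniformly in $\delta$, but this is harmless---what you actually need, and what does hold, is that the register error after one block satisfies $e_{j+1}\le e_j+C\delta$ for a constant $C$ independent of $\delta$, which follows because the large coefficient $1/(h\rho'(z))$ is applied to a quantity whose error is $O(h\,e_j)$, so the blow-up cancels.
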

We note that Proposition 4.9 by \cite{kidger19} only ensures $\|y_2(x)-f^*(x)\|_\infty\le\varepsilon$; however, its proof provides $\|y_1(x)-x\|_\infty\le\varepsilon$ as well.

The proof of Theorem \ref{thm:generallp} also utilizes our coding scheme; here, we approximate $\relu$ network constructions $\mathtt{encode}_K^\dagger$, $\mathtt{memorize}_{K,M}^\dagger$, and $\mathtt{decode}_M^\dagger$ in Appendix~\ref{sec:pfthm:relulpub} by $\rho$ networks.
Using Lemma~\ref{lem:kidger}, for any $\varepsilon_1>0$, we approximate $\mathtt{encode}_K^\dagger$ by a $\rho$ network $\mathtt{encode}_K^\ddagger$ of width $d_x+2$ so that
\begin{align*}
    \left\|\mathtt{encode}_K^\ddagger(x)-\mathtt{encode}_K^\dagger(x)\right\|_\infty\le\varepsilon_1\quad\text{for all}\quad x\in\mathbb [0,1]^{d_x}\setminus\mathcal D_\gamma
\end{align*}
and $\mathtt{encode}_K^\ddagger([0,1]^{d_x})\subset[-\varepsilon_1,1+\varepsilon_1]$.
We note that $\mathtt{encode}_K^\ddagger([0,1]^{d_x})\subset[-\varepsilon_1,1+\varepsilon_1]$ is possible as $\mathtt{encode}_K^\dagger(\mathbb R^{d_x})\subset[0,1]$ by Lemma \ref{lem:encodingrelumod}.

Approximating the memorizer can be done in a similar manner.
Using Lemma \ref{lem:kidger}, for any compact interval $\mathcal I_2\subset\mathbb R$ containing $\mathcal C_{d_xK}$, for any $\varepsilon_2>0$, we approximate $\mathtt{memorize}_{K,M}^\dagger$ by a $\rho$ network $\mathtt{memorize}_{K,M}^\ddagger$ of width $3$ so that 
\begin{align*}
    \left\|\mathtt{memorize}_{K,M}^\ddagger(c)-\mathtt{memorize}_{K,M}^\dagger(c)\right\|_\infty\le\varepsilon_2\quad\text{for all}\quad c\in\mathcal C_{d_xK}
\end{align*}
and $\mathtt{memorize}_{K,M}^\ddagger(\mathcal I_2)\subset[-\varepsilon_2,1+\varepsilon_2]$.
We note that $\mathtt{memorize}_{K,M}^\ddagger(\mathcal I_2)\subset[-\varepsilon_2,1+\varepsilon_2]$ is possible as there exists $\mathtt{memorize}_{K,M}^\dagger$ (i.e., a $\relu$ network) such that $\mathtt{memorize}_{K,M}^\dagger(\mathcal I_2)\subset[0,1]$ by Lemma \ref{lem:memorization}.

For approximating the decoder, we introduce the following lemma. The proof of Lemma \ref{lem:decodingrho} is presented in Appendix \ref{sec:pflem:decodingrho}.

\begin{lemma}\label{lem:decodingrho}
For any $d_y,M\in\mathbb N$, for any $\varepsilon>0$, for any compact interval $\mathcal I\subset\mathbb R$ containing $[0,1]$, there exists a $\rho$ network $f:\mathbb R\rightarrow\mathbb R^{d_y}$ of width $d_y+1$ such that for all $c\in\mathcal I$, 
\begin{align*}
    \|f(c)-\mathtt{decode}_M^\dagger(c)\|_\infty\le\varepsilon.
\end{align*}
Namely, $f(\mathcal I)\subset[-\varepsilon,1+\varepsilon]^{d_y}$.
\end{lemma}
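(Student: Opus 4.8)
\textbf{Proof proposal for Lemma~\ref{lem:decodingrho}.}

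The plan is to mimic the width-$d_y$ $\relu$ construction behind $\mathtt{decode}_M^\dagger$ (Lemma~\ref{lem:decodingmod}), spending one extra neuron to absorb the approximation of the activation $\rho$ by $\relu$-like pieces. First I would recall what $\mathtt{decode}_M^\dagger$ does: it is a $\relu$ network of width $d_y$ that, for every scalar codeword $c\in\mathcal C_{d_yM}$, reads off the $d_y$ blocks of $M$ bits in the binary expansion of $c$ and outputs the corresponding vector in $\mathcal C_M^{d_y}\subset[0,1]^{d_y}$, and moreover maps all of $\mathbb R$ into $[0,1]^{d_y}$. Since $\mathtt{decode}_M^\dagger$ restricted to the compact interval $\mathcal I$ is a continuous (piecewise-linear) function from $\mathbb R$ to $\mathbb R^{d_y}$, one could invoke Lemma~\ref{lem:kidger} directly, but that costs width $d_x+d_y+1=1+d_y+1=d_y+2$ (here the ``input dimension'' is $1$), which is one too many. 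So the point of the lemma is to shave off that extra neuron by exploiting the specific layered structure of the decoder.

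The key steps, in order, would be: (i) write $\mathtt{decode}_M^\dagger = T_{d_y}\circ\sigma\circ T_{d_y-1}\circ\sigma\circ\cdots\circ\sigma\circ T_1$ as a width-$d_y$ $\relu$ network with affine maps $T_i$, as produced by the proof of Lemma~\ref{lem:decodingmod}; (ii) approximate each scalar $\relu$ node by a $\rho$-based gadget — using the hypothesis that $\rho$ is continuously differentiable at some point $z$ with $\rho'(z)\neq 0$, the standard trick is that $\tfrac{\rho(z+hx)-\rho(z)}{h\rho'(z)}\to x$ uniformly on compacts as $h\to 0$, and a second difference of $\rho$ at $z$ approximates $\relu$ (this is exactly the mechanism in \citet{kidger19}); (iii) observe that implementing this gadget layer-by-layer needs only one auxiliary neuron at a time, because we can compute the $j$-th new coordinate while still retaining the other $d_y-1$ coordinates to be updated — interleaving updates so that the working width never exceeds $d_y+1$; (iv) propagate the per-layer $\ell^\infty$ errors through the $O(M d_y)$ affine maps (whose Lipschitz constants are fixed once $M,d_y$ are fixed), and choose the gadget accuracy small enough that the total error on $\mathcal I$ is at most $\varepsilon$; (v) finally, since $\mathtt{decode}_M^\dagger(\mathcal I)\subset[0,1]^{d_y}$, the bound $\|f(c)-\mathtt{decode}_M^\dagger(c)\|_\infty\le\varepsilon$ immediately gives $f(\mathcal I)\subset[-\varepsilon,1+\varepsilon]^{d_y}$, which is the last claim.

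The main obstacle I expect is step (iii)–(iv): arranging the $\rho$-gadgets so the width genuinely stays at $d_y+1$ rather than $d_y+2$, while simultaneously keeping the error propagation under control. Concretely, one has to be careful that each layer of the original $\relu$ network is replaced by a \emph{block} of $\rho$-layers that uses the extra neuron as scratch space, emptied before the next block begins; verifying that the bookkeeping closes (no coordinate is overwritten before it is used, and the extra neuron is always free at block boundaries) is the delicate part. The error analysis itself is routine once the architecture is fixed: it is a finite composition of maps with constants depending only on $M$ and $d_y$, so a geometric-type bound on the accumulated error suffices, and we just pick the gadget tolerance accordingly.
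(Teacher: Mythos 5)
The high-level plan---reuse the $\relu$ decoder's structure, replace pieces by $\rho$ gadgets via \citet{kidger19}, and propagate the error through a finite composition---is indeed the right idea, and step~(v) (the $[-\varepsilon,1+\varepsilon]^{d_y}$ containment) is exactly as the paper argues. But two ingredients of the plan are off, and they are not cosmetic: they sit precisely at the ``delicate part'' you flag.

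First, the claim that ``a second difference of $\rho$ at $z$ approximates $\relu$'' is not true. Under the stated hypotheses on $\rho$, the first-difference quotient gives an approximate \emph{identity}; a second difference (to the extent it converges at all) gives something like $\rho''(z)x^2$, not $\relu(x)$. There is no single-neuron $\rho$ gadget for $\relu$. What \citet{kidger19} actually use is the classical universal approximation theorem (continuous nonpolynomial $\rho$ $\Rightarrow$ shallow $\rho$-nets are dense in $C(\mathcal K)$) plus a ``register'' construction, and for a scalar-to-scalar target this already requires width $d_x+d_y+1=3$: one slot to carry the input (via the identity-approximation gadget), one slot to accumulate the partial sum $\sum_i c_i\rho(a_iu+b_i)$, and one ``register'' slot to evaluate each term.

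Second, because of this, your step~(iii) bookkeeping does not close at width $d_y+1$ if you replace $\relu$ \emph{node by node}. Holding $d_y-1$ coordinates via identity and running a width-$3$ scalar-$\relu$ gadget costs $d_y+2$; moreover, since the affine maps in a generic width-$d_y$ $\relu$ net mix coordinates, you must keep both the old and new value of whichever coordinate you are replacing alive while the others still reference the old one, which again pushes you to width $d_y+2$. The fix, and what the paper actually does, is to approximate not individual $\relu$ nodes but the entire width-$2$ recursive unit $g$ from Lemma~\ref{lem:decoding2}: apply Lemma~\ref{lem:kidger} to the \emph{scalar-to-scalar} map $x\mapsto q_M(x)$, giving a width-$(1+1+1)=3$ $\rho$-net whose output also retains $x$ approximately; then $(g(x))_2=2^M(x-q_M(x))$ is obtained by an affine combination of those two outputs at no extra width. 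Iterating this width-$3$ gadget $d_y-1$ times while passing the $k-1$ already-produced coordinates through width-$1$ identity gadgets peaks at width $(d_y-2)+3=d_y+1$, and then the modulus-of-continuity chaining you describe in step~(iv) is exactly how the paper controls $\varepsilon$. So the granularity of the replacement is what makes or breaks the width count, and your node-level scheme is at the wrong granularity.
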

By Lemma \ref{lem:decodingrho}, for any compact interval $\mathcal I_3\subset\mathbb R$ containing $[0,1]$, for any $\varepsilon_3>0$, there exists a $\rho$ network $\mathtt{decode}_M^\ddagger$ of width $d_y+1$ such that 
\begin{align*}
\left\|\mathtt{decode}_M^\ddagger(c)-\mathtt{decode}_M^\dagger(c)\right\|_\infty\le\varepsilon_3\quad\text{for all}\quad c\in\mathcal C_{d_yM}
\end{align*}
and $\mathtt{decode}_M^\ddagger(\mathcal I_3)\in[-\varepsilon_3,1+\varepsilon_3]^{d_y}$. 

We approximate $f^\prime$ by a $\rho$ network $f$ of width $\max\{d_x+2,d_y+1\}$ defined as
\begin{equation*}
    f:=\mathtt{decode}_M^\ddagger\circ\mathtt{memorize}_{K,M}^\ddagger\circ\mathtt{encode}^\ddagger_K.
\end{equation*}
Here, for any $\eta>0$, by choosing sufficiently large $K,M$, sufficiently large $\mathcal I_2,\mathcal I_3$, and sufficiently small $\varepsilon_1,\varepsilon_2,\varepsilon_3$ so that $\omega_{f^\prime}(2^{-K})+2^{-M}\le\frac\eta2$ and $\omega_{\mathtt{decode}_{M}^\ddagger}\big(\omega_{\mathtt{memorize}_{K,M}^\ddagger}(\varepsilon_1)+\varepsilon_2\big)+\varepsilon_3\le\frac\eta2$, we have
\begin{align}
    \sup_{x\in[0,1]^{d_x}\setminus\mathcal D_\gamma}\|f^\prime(x)-f(x)\|_\infty\le\eta\quad\text{and}\quad f([0,1]^{d_x})\subset[-\tfrac\eta2,1+\tfrac\eta2]^{d_x}\label{eq:generallpapprox}
\end{align}
where $\omega_{\mathtt{memorize}_{K,M}^\ddagger}$ and $\omega_{\mathtt{decode}_{M}^\ddagger}$ are defined on $\mathcal I_2$ and $\mathcal I_3$, respectively.

Finally, we bound the error $\|f^\prime-f\|_p$ utilizing the following inequality:
\begin{align*}
    \|f^\prime&-f\|_p=\left(\int_{[0,1]^{d_x}}\|f^\prime(x)-f(x)\|^p_pdx\right)^{\frac1p}\\
    &=\left(\int_{[0,1]^{d_x}\setminus\mathcal D_\gamma}\|f^\prime(x)-f(x)\|^p_pdx+\int_{\mathcal D_\gamma}\|f^\prime(x)-f(x)\|^p_pdx\right)^{\frac1p}\\
    &\le\left(\sup_{x\in[0,1]^{d_x}\setminus\mathcal D_\gamma}\|f^\prime(x)-f(x)\|_p^p+\mu(\mathcal D_\gamma)\times\sup_{x\in\mathcal D_\gamma}\|f^\prime(x)-f(x)\|_p^p\right)^{\frac1p}\\
    &\le\left(\sup_{x\in[0,1]^{d_x}\setminus\mathcal D_\gamma}\|f^\prime(x)-f(x)\|_p^p+\gamma\times\Big(\sup_{x\in[0,1]^{d_x}}\|f^\prime(x)\|_p+\sup_{x\in[0,1]^{d_x}}\|f(x)\|_p\Big)^p\right)^{\frac1p}.
\end{align*}
By choosing sufficiently small $\varepsilon_1,\varepsilon_2,\varepsilon_3,\gamma$, sufficiently large $K,M$, and sufficiently large $\mathcal I_2,\mathcal I_3$, one can make the RHS smaller than $\varepsilon/2$ due to \eqref{eq:generallpapprox} and the fact that $\sup_{x\in[0,1]^{d_x}}\|f^\prime(x)\|_p<\infty$.
This completes the proof of Theorem \ref{thm:generallp}.

\subsection{Proof of Lemma \ref{lem:encodingreluthres}}\label{sec:pflem:encodingreluthres}
We construct $f: \reals \to \reals$ as $f(x):=f_{2^{K}}\circ\cdots\circ f_1(x)$ where each $f_\ell: \reals \to \reals$ is defined for $x\in[0,1]$ as
\begin{align*}
    f_\ell(x)&:=\begin{cases}
    (\ell-1)\times2^{-K}~&\text{if}~x\in[(\ell-1)\times2^{-K},\ell\times2^{-K})\\
    x~&\text{if}~x\notin[(\ell-1)\times2^{-K},\ell\times2^{-K})
    \end{cases}\\
    &~=g_{\ell3}\circ g_{\ell2}\circ g_{\ell1}(x)
\end{align*}
where $g_{\ell1}: \reals \to \reals^2$, $g_{\ell2}: \reals^2 \to \reals^2$, and $g_{\ell3}: \reals^2 \to \reals$ are defined as
\begin{align*}
    g_{\ell1}(x)&:=\big(\sigma(x),\sigma(x-\ell)\big)\\
    g_{\ell2}(x,z)&:=\big(\sigma(x+z),-\sigma(x-\ell+1)\big)\\
    g_{\ell3}(x,z)&:=\sigma(x+z)+\mathbf{1}[x\ge\ell].
\end{align*}
This directly implies that
$f(x)=q_K(x)$ for all $x\in[0,1]$ and completes the proof of Lemma \ref{lem:encodingreluthres}.

\subsection{Proof of Lemma \ref{lem:memorization}}\label{sec:pflem:memorization}
Let $x^{(1)},\dots,x^{(N)}$ be distinct elements of $\mathcal X$ in an increasing order, i.e., $x^{(i)}<x^{(j)}$ if $i<j$.
Let $x^{(0)}:=\min\mathcal I$ and $x^{(N+1)}:=\max\mathcal I$. 
Here, $x^{(0)}\le x^{(1)}$ and $x^{(N)}\le x^{(N+1)}$ as $\mathcal X\subset\mathcal I$.
Without loss of generality, we assume that $x^{(0)}=0$.
Consider a continuous piece-wise linear function $f^\dagger:[x^{(0)},x^{(N+1)}]\rightarrow\mathbb{R}$ of $N+1$ linear pieces defined as
\begin{align*}
    f^\dagger(x):=\begin{cases}
    f^*(x^{(1)})~&\text{if}~x\in[x^{(0)},x^{(1)})\\
    f^*(x^{(i)})+\frac{f^*(x^{(i+1)})-f^*(x^{(i)})}{x^{(i+1)}-x^{(i)}}(x-x^{(i)})~&\text{if}~x\in[x^{(i)},x^{(i+1)})~\text{for some $1\le i\le N-1$}\\
    f^*(x^{(N)})~&\text{if}~x\in[x^{(N)},x^{(N+1)}]
    \end{cases}.
\end{align*}
Now, we introduce the following lemma. 
\begin{lemma}\label{lem:pwl}
For any compact interval $\mathcal I\subset\mathbb R$, for any continuous piece-wise linear function $f^*:\mathcal I\rightarrow\mathbb{R}$ with $P$ linear pieces, there exists a $\relu$ network $f$ of width 2 such that $f^*(x)=f(x)$ for all $x\in\mathcal I$.
\end{lemma}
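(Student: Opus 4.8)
The plan is to realize any continuous piece-wise linear function $f^*:\mathcal I \to \mathbb R$ with $P$ pieces as a telescoping sum of ``corrections,'' implemented by a width-$2$ $\relu$ network in which one coordinate carries a running partial sum of the output while the other coordinate carries (an affine image of) the input, to be consumed one breakpoint at a time. Concretely, write $\mathcal I = [z_0, z_P]$ with breakpoints $z_0 < z_1 < \dots < z_P$ and let $s_i$ denote the slope of $f^*$ on $[z_{i-1},z_i]$. First I would express
\begin{align*}
f^*(x) = f^*(z_0) + s_1 (x - z_0) + \sum_{i=2}^{P} (s_i - s_{i-1}) \, \sigma(x - z_{i-1}),
\end{align*}
which is the standard hinge-function decomposition and is an exact identity for all $x \in \mathcal I$ (the leading affine term handles the first piece, and each subsequent $\relu$ term switches on exactly at $z_{i-1}$ to correct the slope). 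The work is then purely to show this sum can be computed while never using more than $2$ neurons in any hidden layer.

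The key construction is a layer-by-layer accumulation. I would maintain the invariant that after processing breakpoint $z_{i-1}$, the two hidden neurons hold a pair from which both the accumulated output-so-far and the quantity $x$ (or a known affine function of $x$) are affinely recoverable. At each step, one neuron recomputes $\sigma(\text{affine}(x))$ to produce the next hinge term $\sigma(x - z_{i})$, the affine map into that neuron is chosen so the pre-activation is nonnegative on the whole compact interval whenever we merely want to ``pass through'' a value (so $\relu$ acts as the identity there), and the other neuron adds the appropriately scaled hinge into the running sum; a final affine layer $t_L$ reads off the answer. This is exactly the ``changing coordinates, applying $\relu$, changing back'' viewpoint used around \eqref{eq:reform}, specialized to the one-dimensional input case, and it mirrors the $n+m = 1+1 = 2$ bound one gets from \citet{hanin17}. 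Since slopes, breakpoints, and values of $f^*$ are all finite and $\mathcal I$ is compact, there is a uniform constant by which we can shift pre-activations to keep everything in the nonnegative region when needed, so the identity-through-$\relu$ trick is always available.

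The main obstacle — really the only delicate point — is bookkeeping: I must exhibit explicit affine maps $t_1,\dots,t_L$ (with the intermediate dimensions all equal to $2$) whose composition, after the intervening coordinate-wise $\relu$'s, equals the hinge decomposition above on $\mathcal I$, and verify at each layer that the chosen affine shifts really do keep the relevant pre-activations nonnegative on the compact set $\mathcal I$ so that no information is clipped. Ordering the breakpoints and peeling them off left-to-right makes this routine but notation-heavy. Once that verification is in place, $P$ is arbitrary, so we obtain a width-$2$ $\relu$ network agreeing with $f^*$ on all of $\mathcal I$, completing the proof of Lemma~\ref{lem:pwl}; feeding in the specific piece-wise linear interpolant $f^\dagger$ then yields Lemma~\ref{lem:memorization}.
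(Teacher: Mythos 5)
Your proposal is correct and essentially mirrors the paper's proof, which sets up the same left-to-right hinge accumulation by induction on $P$, carrying the running partial sum in one coordinate and using the ``shift below a lower bound $K$, apply $\relu$, shift back'' trick to pass that sum through each hidden layer unchanged. The one imprecision to tighten is your stated invariant: the other coordinate cannot carry an affine image of $x$ itself (that would be incompatible with the clipping needed to form each new hinge within width $2$), but rather the already-clipped quantity $\sigma(x - z_{i-1})$ --- which, fortunately, is exactly enough to produce $\sigma(x - z_j)$ for every later breakpoint $z_j > z_{i-1}$, as the paper's explicit maps $h_1, h_2$ make precise.
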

Then, from Lemma \ref{lem:pwl}, there exists a $\relu$ network $f$ of width 2 such that $f^\dagger(x)=f(x)$ for all $x\in\mathcal X$. Since $\mathcal X\subset[x^{(0)},x^{(N+1)}]=\mathcal I$ and $f^\dagger(\mathcal I)\subset\big[\min f^*(\mathcal X),\max f^*(\mathcal X)\big]$, this completes the proof of Lemma \ref{lem:memorization}.

\begin{proof}[Proof of Lemma \ref{lem:pwl}]
Suppose that $f^*$ is linear on intervals $[\min\mathcal I,x_1),[x_1,x_2),\dots,[x_{P-1},\max\mathcal I]$ and parametrized as
\begin{align*}
    f^*(x)=\begin{cases}
    a_1\times x+b_1~&\text{if}~x\in[\min\mathcal I,x_1)\\
    a_2\times x+b_2~&\text{if}~x\in[x_1,x_2)\\
    &\vdots\\
    a_P\times x+b_P~&\text{if}~x\in[x_{P-1},\max\mathcal I]
    \end{cases}
\end{align*}
for some $a_i,b_i\in\mathbb{R}$ satisfying $a_i\times x_i+b_i=a_{i+1}\times x_i+b_{i+1}$.
Without loss of generality, we assume that $\min\mathcal I=0$.

Now, we prove that for any $P \geq 1$, there exists a $\relu$ network $f:\mathcal I\rightarrow\mathbb{R}^2$ of width 2 such that $(f(x))_1=\sigma(x-x_{P-1})$ and $(f(x))_2=f^*(x)$. 
Then, $(f(x))_2$ is the desired $\relu$ network and completes the proof. 
We use the mathematical induction on $P$ for proving the existence of such $f$. If $P=1$, choosing $(f(x))_1=\sigma(x)$ and $(f(x))_2=a_1\times\sigma(x)+b_1$ completes the construction of $f$.
Now, consider $P>1$.
From the induction hypothesis, there exists a $\relu$ network $g$ of width 2 such that
\begin{align*}
    (g(x))_1&=\sigma(x-x_{P-2})\\
    (g(x))_2&=\begin{cases}
    a_1\times x+b_1~&\text{if}~x\in[\min\mathcal I,x_1)\\
    a_2\times x+b_2~&\text{if}~x\in[x_1,x_2)\\
    &\vdots\\
    a_{P-1}\times x+b_{P-1}~&\text{if}~x\in[x_{P-2},\max\mathcal I]
    \end{cases}.
\end{align*}
Then, the following construction of $f$ completes the proof of the mathematical induction:
\begin{align*}
    f(x)&=h_2\circ h_1\circ g(x)\\
    h_1(x,z)&=\big(\sigma(x-x_{P-1}+x_{P-2}),\sigma(z-K)+K\big)\\
    h_2(x,z)&=\big(x,z+(a_{P-1}-a_{P-2})\times x\big)
\end{align*}
where $K:=\min_i\min_{x\in\mathcal I}\{a_i\times x+b_i\}$.
This completes the proof of Lemma \ref{lem:pwl}.
\end{proof}

\subsection{Proof of Lemma \ref{lem:decodingmod}}\label{sec:pflem:decodingmod}
Before describing our proof, we first introduce the following lemma. The proof of Lemma \ref{lem:decoding2} is presented in Appendix \ref{sec:pflem:decoding2}.
\begin{lemma}\label{lem:decoding2}
For any $M\in\mathbb{N}$, for any $\delta>0$, 
there exists a $\relu$ network $f:\mathbb R\rightarrow\mathbb R^2$ of width $2$ such that for all $x\in[0,1]\setminus\mathcal D_{M,\delta}$,
\begin{align}
    f(x):=(y_1(x),y_2(x)),\quad\text{where}\quad y_1(x)=q_M(x),\quad
    y_2(x)=2^M\times(x-q_M(x))\label{eq:decoding},
\end{align}
and $\mathcal D_{M,\delta}:=\bigcup_{i=1}^{2^M-1}(i\times2^{-M}-\delta,i\times 2^{-M}).$ Furthermore, it holds that
\begin{align}
    f(\mathbb R)&\subset[0,1-2^{-M}]\times[0,1].\label{eq:decoding2}
\end{align}
\end{lemma}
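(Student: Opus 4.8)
The plan is to realize the map $x\mapsto(q_M(x),2^M(x-q_M(x)))$ (off the bad set $\mathcal D_{M,\delta}$) by a narrow $\relu$ network that peels off the binary digits of $x$ one stage at a time. I would begin with two reductions. First, since $2^M(x-q_M(x))=2^Mx-2^Mq_M(x)$ is affine in $(x,q_M(x))$ and the output map $t_L$ of a $\relu$ network is affine, it suffices to produce in the last hidden layer a two‑register state from which $(q_M(x),x)$ — equivalently the desired pair — is recovered by an affine readout; since $q_M(x)\ge0$ and $x-q_M(x)\ge0$ on $[0,1]$, the readout values are automatically nonnegative, so the earlier activations do not clip them. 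Second, I would prepend the width‑$2$ clamping block $x\mapsto\sigma(x)-\sigma(x-1)$, so that it suffices to treat $x\in[0,1]$, while the required box containment $f(\mathbb R)\subset[0,1-2^{-M}]\times[0,1]$ follows once I verify the intermediate quantities stay in these ranges.

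The core is the bit‑extraction recursion. Maintain the $k$‑bit remainder $r_k:=2^k(x-q_k(x))\in[0,1]$ (with $q_k$ the $k$‑bit quantization) together with an accumulator $\hat q_k$; one stage sets $\hat q_{k+1}=\hat q_k+2^{-(k+1)}\beta(r_k)$ and $r_{k+1}=2r_k-\beta(r_k)$, where $\beta$ is the clamped ramp $\beta(r)=\sigma\big(\tfrac{r-1/2+\delta}{\delta}\big)-\sigma\big(\tfrac{r-1/2}{\delta}\big)$, a continuous surrogate of $\mathbf{1}[r\ge1/2]$ that ramps over $(1/2-\delta,1/2)$. Two things then need to be checked. (Exactness.) If $x\notin\mathcal D_{M,\delta}$ then, for every $k<M$, whenever the $(k{+}1)$‑st digit of $x$ is $0$ the point $x$ lies strictly below the dyadic point $q_k(x)+2^{-(k+1)}=(2j+1)2^{-(k+1)}=i2^{-M}$ (with $i\in\{1,\dots,2^M-1\}$), hence by $x\notin(i2^{-M}-\delta,i2^{-M})$ we get $x\le i2^{-M}-\delta$, so $r_k=2^k(x-q_k(x))\le\tfrac12-2^k\delta\le\tfrac12-\delta$; when the digit is $1$, $r_k\ge\tfrac12$ by definition. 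Thus $\beta(r_k)\in\{0,1\}$ exactly at every stage, no quantization error accumulates, and after $M$ stages $\hat q_M=q_M(x)$ and $r_M=2^M(x-q_M(x))$ exactly on $[0,1]\setminus\mathcal D_{M,\delta}$. (Ranges.) The relation $\hat q_k+2^{-k}r_k=x$ is invariant (it is preserved by the stage update, for \emph{any} value of $\beta(r_k)$ in $[0,1]$), and the interval $[0,1]$ is invariant under $u\mapsto2u-\beta(u)$, while $\hat q_M\le\sum_{j=1}^M2^{-j}=1-2^{-M}$; hence the affine readout $(\,\hat q_M,\;2^M(x-\hat q_M)\,)=(\hat q_M,r_M)$ lands in $[0,1-2^{-M}]\times[0,1]$ for \emph{all} $x$.

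The hard part — and the only genuinely nontrivial step — is implementing all of this in width $2$: a naive realization would carry the remainder $r_k$, the accumulator $\hat q_k$ (or the clamped input $x$), and scratch for $\beta(r_k)$, i.e.\ three neurons per layer. I would resolve this exactly along the lines of the proof of Lemma~\ref{lem:pwl}: keep a $\relu$‑shifted auxiliary in one register and the evolving quantity in the other, and arrange the affine maps of each stage so that its two updates (remainder and accumulator) share the same pair of $\relu$ units — using, for instance, that $\beta(r)$ is the clamp of $\sigma\big(\tfrac{r-1/2+\delta}{\delta}\big)$ to $[0,1]$, so one of the units needed for $\beta$ at a stage can double as the passthrough consumed by the next stage, and deferring extraction of the accumulator to the final affine map. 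Verifying that these affine maps exist at width $2$, and that the sign conditions $r_k\ge0$, $\hat q_k\ge0$ (guaranteed by the ranges above) make every intermediate $\sigma$ act as intended, completes the construction; everything after the network is written down is the short bookkeeping sketched in the previous paragraph.
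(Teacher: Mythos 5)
Your bit-extraction recursion is mathematically sound as a scalar recursion: the exactness argument (that $x\notin\mathcal D_{M,\delta}$ forces $\beta(r_k)\in\{0,1\}$ at every stage, via the observation that the relevant dyadic $q_k(x)+2^{-(k+1)}$ is of the form $i2^{-M}$) and the range bookkeeping (the invariant $\hat q_k+2^{-k}r_k=x$, the invariance of $[0,1]$ under $u\mapsto 2u-\beta(u)$, and $\hat q_M\le 1-2^{-M}$) all check out, and would give an $O(M)$-layer construction, exponentially shallower than what the paper uses. However, the step you yourself flag as the hard part — realizing it at width $2$ — is a genuine gap, and I do not believe it can be patched in the way you suggest. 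Each stage must simultaneously (i) evaluate the two-kink clamp $\beta(r_k)$, which consumes a full ReLU unit even when written as the composition $1-\sigma\bigl(1-\sigma((r_k-1/2+\delta)/\delta)\bigr)$, (ii) retain $r_k$ itself to form $r_{k+1}=2r_k-\beta(r_k)$, and (iii) retain $x$ (equivalently $\hat q_k$), because the final affine readout must produce \emph{both} $q_M(x)$ and $r_M$, and neither is affine in the other. The invariant $\hat q_k+2^{-k}r_k=x$ does not let you drop a register, since $x$ is the network input, not a constant. ``Deferring extraction of the accumulator to the final affine map'' cannot work: $r_M$ alone does not determine $q_M(x)$ even up to an affine map (e.g.\ $x=0$ and $x=2^{-M}$ give $r_M=0$ but $q_M=0$ vs.\ $2^{-M}$), so if $x$ is not carried forward the information is irrecoverably lost. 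The analogy to Lemma~\ref{lem:pwl} also does not transfer: there, each new linear piece is affine in the \emph{original} scalar input, so the frozen register carries the input while the other register accumulates one kink per layer; in your recursion the stage-$k$ pieces are affine in $r_k$, which is already a piecewise-linear function of $x$ with $\Theta(2^k)$ pieces, so expressing them as affine-in-$x$ pieces brings you back to $\Theta(2^M)$ layers.

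The paper avoids this entirely by a different route: it writes the clamped second output $(g_{2^M}(x))_2$ directly as a max-min string of $2^M$ \emph{affine functions of $x$} (alternating $\max\{\cdot,\cdot\}$ with a new affine piece and $\min\{\cdot,\cdot\}$ with a constant), then invokes \citet[Proposition~2]{hanin17} (Lemma~\ref{lem:hanin}), which gives a width-$(d_x{+}d_y)=2$ ReLU network of $2^M$ layers that outputs $(x,(g_{2^M}(x))_2)$; the pair $(q_M(x),2^M(x-q_M(x)))$ is then an affine readout of this, and the containment $f(\mathbb R)\subset[0,1-2^{-M}]\times[0,1]$ follows from the clamping prefix and $(g(x))_2\le x$. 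So the paper trades depth (exponential in $M$) for the width-$2$ guarantee, whereas your proposal trades the other way — but the width-$2$ side of your trade is exactly the part that is not established, and without it the lemma as stated (width $2$) is not proved. If you are willing to allow width $3$, your recursion does go through and is genuinely shallower; to stay at width $2$, you should either supply an explicit two-register construction of one stage (I believe none exists that also preserves $x$), or fall back to the max-min-string route as the paper does.
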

In Lemma \ref{lem:decoding2}, one can observe that $\mathcal{C}_{d_yM}\subset[0,1]\setminus\mathcal D_{M,\delta}$ for $\delta<2^{-d_yM}$ , i.e., there exists a $\relu$ network $g$ of width 2 satisfying \eqref{eq:decoding} on $\mathcal C_{d_yM}$ and \eqref{eq:decoding2}.
$g$ enables us to extract the first $M$ bits of the binary representation of $c\in \mathcal{C}_{d_yM}$. 
Consider outputs of $g(c)$: $(g(c))_1$ for $c\in \mathcal{C}_{d_yM}$ is the first coordinate of $\mathtt{decode}_M(c)$ while $(g(c))_2\in\mathcal C_{(d_y-1)M}$ contains information on other coordinates of $\mathtt{decode}_M(c)$.
Now, consider further applying $g$ to $(g(c))_2$ and passing though the output $(g(c))_1$ via the identity function ($\relu$ is identity for positive inputs). Then, $\big(g\big((g(c))_2\big)\big)_1$ is the second coordinate of $\mathtt{decode}_M(c)$ while $\big(g\big((g(c))_2\big)\big)_2$ contains information on coordinates other than the first and the second ones of $\mathtt{decode}_M(c)$.
Under this observation, if we iteratively apply $g$ to the second output of the prior $g$ and pass through all first outputs of previous $g$'s, then we recover all coordinates of $\mathtt{decode}_M(c)$ within $d_y-1$ applications of $g$.
Note that both the first and the second outputs of the $(d_y-1)$-th $g$ correspond to the second last and the last coordinate of $\mathtt{decode}_M(c)$, respectively.
Our construction of $f$ is such an iterative $d_y-1$ applications of $g$ which can be implemented by a $\relu$ network of width $d_y$.
Here, \eqref{eq:decoding2} in Lemma \ref{lem:decoding2} enables us to achieve $f(\mathbb R)\subset[0,1]^{d_y}$.
This completes the proof of Lemma \ref{lem:decodingmod}.

\subsection{Proof of Lemma \ref{lem:encodingrelumod}}\label{sec:pflem:encodingrelumod}
To begin with, we introduce the following Lemma. The proof of Lemma \ref{lem:outsider} is presented in Appendix \ref{sec:pflem:outsider}.
\begin{lemma}\label{lem:outsider}
For any $d_x$, for any $\alpha\in(0,0.5)$, there exists a $\relu$ network $f:\mathbb R^{d_x}\rightarrow\mathbb R^{d_x}$ of width $d_x+1$ such that $f(x)=(1,\dots,1)$ for all $x\in\mathbb R^{d_x}\setminus[0,1]^{d_x}$, $f(x)=x$ for all $x\in[\alpha,1-\alpha]^{d_x}$, and $f(\mathbb R^{d_x})\subset[0,1]^{d_x}$.
\end{lemma}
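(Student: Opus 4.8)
We must build a width-$(d_x+1)$ $\relu$ network $f:\mathbb R^{d_x}\to\mathbb R^{d_x}$ that (i) collapses everything outside the unit cube to the single point $(1,\dots,1)$, (ii) is the identity on the slightly shrunk cube $[\alpha,1-\alpha]^{d_x}$, and (iii) maps all of $\mathbb R^{d_x}$ into $[0,1]^{d_x}$. The plan is to handle the coordinates one at a time, sweeping through $i=1,\dots,d_x$, using the one spare neuron as a ``flag'' that records whether the point has already been detected to lie outside $[0,1]^{d_x}$ in some earlier coordinate.

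**Construction.** First I would realize the scalar clipping map $\mathrm{clip}(t):=\min\{\max\{t,0\},1\}=\sigma(t)-\sigma(t-1)$, which is a width-$2$ $\relu$ gadget, and a scalar ``outside-ness'' detector $\mathrm{out}_\alpha(t):=\tfrac1\alpha\big(\sigma(-t)+\sigma(t-1)\big)$, which is $0$ for $t\in[0,1]$, is $\ge 1$ whenever $t\notin(\alpha-\cdots)$... more precisely is $0$ on $[0,1]$ and grows linearly outside, and in particular is $\ge 1$ as soon as $t\le-\alpha$ or $t\ge 1+\alpha$; but since we only need the behaviour ``identity on $[\alpha,1-\alpha]^{d_x}$'' and ``constant on $\mathbb R^{d_x}\setminus[0,1]^{d_x}$,'' it is cleanest to detect $t\notin[0,1]$ exactly. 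Carry a flag $z$ initialized to $0$; at stage $i$ update $z\leftarrow\min\{1,\ z+\mathrm{out}(x_i)\}$ using $\sigma$'s (the $\min$ with $1$ via $1-\sigma(1-(z+\mathrm{out}(x_i)))$, valid since the argument stays nonnegative), simultaneously replace $x_i$ by $\mathrm{clip}(x_i)$, and pass the remaining coordinates through by the identity $\sigma(\,\cdot\,)$ after an affine shift to keep them nonnegative (a standard trick: add a large constant $C$ before the layer, subtract it after). After all $d_x$ stages, $z=0$ exactly when $x\in[0,1]^{d_x}$ and $z\ge 1$ whenever $x\notin[0,1]^{d_x}$, while the stored coordinates hold $\mathrm{clip}(x)\in[0,1]^{d_x}$. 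A final affine-plus-$\relu$ layer outputs, coordinatewise, $\mathrm{clip}(x_i) + \text{(something)}\cdot z$ arranged so that $z=0$ leaves $\mathrm{clip}(x_i)=x_i$ (which equals $x_i$ on $[\alpha,1-\alpha]^{d_x}$, and in general lies in $[0,1]$) and $z\ge 1$ forces the output to $(1,\dots,1)$; concretely $y_i:=\sigma\big(\mathrm{clip}(x_i) - z\big) + \sigma\big(\mathrm{clip}(x_i)+z-1\big)$-type combinations, or more transparently $y_i := \mathrm{clip}\big(\mathrm{clip}(x_i)(1-z)+z\big)$ using one more $\mathrm{clip}$, since for $z=0$ this is $x_i$ (clipped) and for $z\ge1$ it saturates to $1$. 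Each stage uses $d_x$ neurons for the coordinates plus $1$ for the flag, so width $d_x+1$ throughout; the total depth is $O(d_x)$, which is irrelevant here.

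**Verification.** The three required properties then follow directly: for $x\in[\alpha,1-\alpha]^{d_x}$ every $x_i\in[0,1]$ so $\mathrm{out}(x_i)=0$, hence $z$ stays $0$, $\mathrm{clip}(x_i)=x_i$, and $y_i=x_i$; for $x\notin[0,1]^{d_x}$ some coordinate triggers $\mathrm{out}(x_i)\ge1$, the flag locks at $1$ (the running $\min\{1,\cdot\}$ and nonnegativity of subsequent additions guarantee it never decreases), and the final layer outputs $(1,\dots,1)$; and for arbitrary $x$, every output coordinate is a value of $\mathrm{clip}$, hence lies in $[0,1]$, so $f(\mathbb R^{d_x})\subset[0,1]^{d_x}$. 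Care is needed only in checking that the pass-through of not-yet-processed coordinates never lets a negative number enter a $\relu$ where it would be zeroed prematurely — handled by the additive-constant shift — and that $z+\mathrm{out}(x_i)\le$ something manageable so the $\min$ formula $1-\sigma(1-\cdot)$ is legitimate (it is, since the argument $1-(z+\mathrm{out}(x_i))$ is bounded above by $1$, which is all $\sigma$ needs).

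**Main obstacle.** The genuinely delicate point is doing everything with only \emph{one} auxiliary neuron: a $\relu$ layer cannot hold both ``the clipped coordinate'' and ``the detector contribution of that coordinate'' for \emph{all} $i$ at once, so the sweep must be sequential, and one must verify that while coordinate $i$ is being clipped-and-tested the flag accumulates \emph{monotonically} and the untouched coordinates are transported without distortion. I expect the bookkeeping of the additive shifts (choosing the constant $C$ uniformly large enough over the relevant compact range, while noting points outside the cube can be far away — but those are exactly the points we are happy to send to $(1,\dots,1)$, so distortion of their intermediate values is harmless) to be the part that needs the most care to state cleanly; everything else is a routine composition of the width-$2$ $\relu$ gadgets $\mathrm{clip}$ and $\mathrm{out}$.
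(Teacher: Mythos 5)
Your plan — maintain a dedicated flag neuron $z$ recording whether some coordinate has been found outside $[0,1]$, then merge $z$ into the clipped coordinates at the end — is a genuinely different decomposition from the paper's. The paper avoids a flag altogether: it \emph{broadcasts} a detection amount $10\cdot h(x_\ell)$ (for a suitable ramp $h$) to \emph{every} coordinate simultaneously, so the coordinate slots themselves serve as a collective flag and no merge step is needed; that is precisely what makes the budget of $d_x+1$ neurons close. As written, though, your argument has a real gap. The detector $\mathrm{out}_\alpha(t)=\tfrac1\alpha\big(\sigma(-t)+\sigma(t-1)\big)$ vanishes on all of $[0,1]$ and only reaches $1$ at $t=-\alpha$ and $t=1+\alpha$. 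For an input like $x=(\tfrac12,\dots,\tfrac12,-\varepsilon)$ with $\varepsilon>0$ tiny, the flag rises only to $\varepsilon/\alpha<1$, so the final layer does \emph{not} output $(1,\dots,1)$ and the first required property fails. Your suggested remedy — ``detect $t\notin[0,1]$ exactly'' — is the indicator of an open set, hence discontinuous and not realizable by a $\relu$ network. The ramp must instead live \emph{inside} the shell: the detector should be $0$ on $[\alpha,1-\alpha]$ and $\ge1$ on $(-\infty,0]\cup[1,\infty)$, e.g.\ $\sigma(1-t/\alpha)+\sigma\big((t-1+\alpha)/\alpha\big)$. The paper's $h_1,h_2$ have exactly this shape.

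Even with a corrected detector, the width-$(d_x+1)$ bookkeeping is not established. One stage requires $\sigma(x_i)$ and $\sigma(x_i-1)$ for the clip plus two more $\relu$'s of affine functions of $x_i$ for the detector, while $z$ and the remaining $d_x-1$ coordinates must be preserved — naively $d_x+3$ neurons. Spreading this over sub-layers forces the single spare slot to hold $z$ \emph{and} a scratch $\relu$ of $x_i$ at the same time, and a natural attempt such as storing $\sigma(\alpha z+\alpha-x_i)$ there can \emph{erase} a set flag: for $\alpha<\tfrac13$, $z=1$, $x_i$ near $1-\alpha$, the preactivation is about $3\alpha-1<0$. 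You acknowledge this is the delicate point but do not resolve it. The merge step is also not $\relu$-realizable as stated: $\mathrm{clip}\big(\mathrm{clip}(x_i)(1-z)+z\big)$ is a product of two network quantities, and $\sigma(\mathrm{clip}(x_i)-z)+\sigma(\mathrm{clip}(x_i)+z-1)$ returns $\mathrm{clip}(x_i)$ rather than $1$ when $z=1$. These particular formulas are fixable (e.g.\ $\sigma(\mathrm{clip}(x_i)-z)+z=\max\{\mathrm{clip}(x_i),z\}$ does the merge once $z\in[0,1]$ is maintained), but carrying the flag through the sweep inside width $d_x+1$ is exactly the step your sketch leaves open — and exactly the step the paper's broadcast device was designed to sidestep.
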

By Lemma \ref{lem:outsider}, there exists a $\relu$ network $h_1$ of width $d_x+1$ such that $h_1(x)=(1,\dots,1)$ for all $x\in\mathbb R^{d_x}\setminus[0,1]^{d_x}$, $h_1(x)=x$ for all $x\in[\alpha,1-\alpha]^{d_x}$, and $h_1(\mathbb R^{d_x})\subset[0,1]^{d_x}$. 
Furthermore, by Lemma \ref{lem:decoding2}, for any $\delta>0$, 
there exists a $\relu$ network $g:\mathbb R\rightarrow\mathbb R$ of width $2$ such that $g(c)=q_K(c)$ for all $c\in[0,1]\setminus\mathcal D_{K,\delta}$.

We construct a network $h_2:\mathbb R^{d_x}\rightarrow\mathbb R^{d_x}$ of width $d_x+1$ by sequentially applying $g$ for each coordinate of an input $x\in\mathbb R^{d_x}$, utilizing the extra width $1$.
Then, $h_2(x)=q_K(x)$ for all $x\in[0,1]^{d_x}\setminus\mathcal D_{K,\delta,d_x}$ 
where
\begin{align*}
\mathcal D_{K,\delta,d_x}:=\{x\in\mathbb R^{d_x}:x_i\in\mathcal D_{K,\delta}~\text{for some}~i\}.
\end{align*}
Note that we use $q_K(x)$ for denoting the coordinate-wise $q_K$ for a vector $x$.

Now, we define $\mathcal D_\gamma:=([0,1]^{d_x}\setminus[\alpha,1-\alpha]^{d_x})\cup\mathcal D_{K,\delta,d_x}\subset[0,1]^{d_x}$. Then, from constructions of $h_1$ and $h_2$, we have
\begin{align*}
h_2\circ h_1(x)&=q_K(x)\qquad\qquad\qquad\qquad~\hspace{1pt}\text{for all}~x\in[0,1]^{d_x}\setminus\mathcal D_\gamma\\
    h_2\circ h_1(x)&=(1-2^{-K},\dots,1-2^{-K})~~\,\text{for all}~x\in\mathbb{R}^{d_x}\setminus[0,1]^{d_x}\\
    h_2\circ h_1(x)&\subset[0,1-2^{-K}]^{d_x}\qquad~~~\hspace{1pt}\qquad\text{for all}~x\in\mathcal D_\gamma
\end{align*}
where we use the fact that $(1,\dots,1)\notin\mathcal D_{K,\delta,d_x}$ and $q_K((1,\dots,1))=(1-2^{-K},\dots,1-2^{-K})$.

Finally, we construct a $\relu$ network $f$ of width $d_x+1$ as
\begin{align*}
    f(x):=\sum_{i=1}^{d_x}(h_2\circ h_1(x))_i\times2^{-(i-1)K}.
\end{align*}
Then, it holds that
\begin{align*}
f(x)&=\mathtt{encode}_K(x)\quad~\text{for all}~x\in[0,1]^{d_x}\setminus\mathcal D_\gamma\\
    f(x)&=1-2^{d_xK}\qquad~~\,\text{for all}~x\in\mathbb{R}^{d_x}\setminus[0,1]^{d_x}\\
    f(x)&\subset[0,1]\qquad~~\qquad\text{for all}~x\in\mathcal D_\gamma.
\end{align*}
In addition, if we choose sufficiently small $\alpha$ and $\delta$ so that $\mu(\mathcal D_\gamma)<\gamma$, then $f$ satisfies all conditions in Lemma \ref{lem:encodingrelumod}.
This completes the proof of Lemma \ref{lem:encodingrelumod}.

\subsection{Proof of Lemma \ref{lem:decodingrho}}\label{sec:pflem:decodingrho}
The proof of Lemma \ref{lem:decodingrho} is almost identical to that of Lemma \ref{lem:decodingmod}.
In particular, we approximate the $\relu$ network construction of iterative $d_y-1$ applications of $g$ (see Appendix \ref{sec:pflem:decodingmod} for the definition of $g$) by a $\rho$ network of width $d_y+1$.
To this end, we consider a $\rho$ network $h$ of width $3$ approximating $g$ on some interval $\mathcal J$ within $\alpha$ error using Lemma \ref{lem:kidger}.
Then, one can observe that iterative $d_y-1$ applications of $h$ (as in Appendix \ref{sec:pflem:decodingmod}) results in a $\rho$ network $f$ of width $d_y+1$. Here,
passing through the identity function can be approximated using a $\rho$ network of width $1$, i.e., same width to $\relu$ networks (see Lemma 4.1 by \cite{kidger19} for details).
Furthermore, since $h$ is uniformly continuous on $\mathcal J$, it holds that $\|f(c)-\mathtt{decode}_M(c)\|_\infty\le\varepsilon$ for all $c\in\mathcal C_{d_yM}$ and $f(\mathcal I)\subset[-\varepsilon,1+\varepsilon]^{d_y}$ by choosing sufficiently large $\mathcal J$ and sufficiently small $\alpha$ so that $\omega_h(\cdots\omega_h(\omega_h(\alpha)+\alpha)\cdots)+\alpha\le\varepsilon$.\footnote{We consider $\omega_h$ on $\mathcal J$.}
This completes the proof of Lemma \ref{lem:decodingrho}.

\subsection{Proof of Lemma \ref{lem:decoding2}}\label{sec:pflem:decoding2}
We first clip the input to be in $[0,1]$ using the following $\relu$ network of width $1$.
\begin{align*}
\min\big\{\max\{x,0\},1\big\}=1-\sigma(1-\sigma(x))
\end{align*}
After that, we
apply $g_\ell:[0,1]\rightarrow[0,1]^2$ defined as
\begin{align}
    (g_\ell(x))_1&:=x\notag\\
    (g_\ell(x))_2&:=\begin{cases}
    0~&\text{if}~ x\in[0,2^{-M}-\delta]\\
    \delta^{-1}2^{-M}\times(x-2^{-M}+\delta)~&\text{if}~x\in(2^{-M}-\delta,2^{-M})\\
    2^{-M}~&\text{if}~x\in[2^{-M},2\times2^{-M}-\delta]\\
    \delta^{-1}2^{-M}\times(x-2\times2^{-M}+\delta)+2^{-M}~&\text{if}~x\in(2\times2^{-M}-\delta,2\times2^{-M})\\
    &\vdots\\
    (\ell-1)\times 2^{-M}~&\text{if}~x\in[(\ell-1)\times2^{-M},1]\\
    \end{cases}\label{eq:decodeg}.
\end{align}
From the above definition of $g_\ell$, one can observe that $(g_{2^M}(x))_2=q_M(x)$ for $x\in[0,1]\setminus\mathcal D_{K,\delta}$.
Once we implement a $\relu$ network $g$ of 
width 2 such that $g(x)=g_{2^M}(x)$, then, constructing $f$ as
\begin{align*}
    f(x)&:=\big((g(z))_2,{2^M}\times\big((g(z))_1-(g(z))_2\big)\big)\\
    z&:=\min\big\{\max\{x,0\},1\big\}
\end{align*}
completes the proof.
Note that as $(g(x))_2\le x=(g(x))_1$ for all $x\in[0,1]$, $f(x)\in[0,1-2^{-M}]\times[0,1]$.
Now, we describe how to construct $g_{2^M}$ by a $\relu$ network.
One can observe that $(g_1(x))_2=0$ and
\begin{align*}
    (g_{\ell+1}(x))_2=\min\Big\{\ell\times2^{-M},\max\big\{\delta^{-1}2^{-M}\times(x-\ell\times2^{-M}+\delta)+(\ell-1)\times2^{-M},g_{\ell}(x)\big\}\Big\}
\end{align*}
for all $x$, i.e., alternating applications of $\min\{\cdot,\cdot\}$ and $\max\{\cdot,\cdot\}$.
Finally, we introduce the following definition and lemma.

\begin{definition}[\cite{hanin17}]
$f:\mathbb{R}^{d_x}\rightarrow\mathbb{R}^{d_y}$ is a max-min string of length $L$ if there exist affine transformations $h_1,\dots,h_L$ such that
$$h(x)=\tau_{L-1}(h_L(x),\tau_{L-2}(h_{L-1}(x),\cdots,\tau_2(h_3(x),\tau_1(h_2(x),h_1(x))),\cdots)$$
where each $\tau_\ell$ is either a coordinate-wise $\max\{\cdot,\cdot\}$ or $\min\{\cdot,\cdot\}$. 
\end{definition}
\begin{lemma}[{\citet[Proposition~2]{hanin17}}]\label{lem:hanin}
For any max-min string $f^*:\mathbb{R}^{d_x}\rightarrow\mathbb{R}^{d_y}$ of length $L$, for any compact $\mathcal K\subset\mathbb{R}^{d_x}$, there exists a $\relu$ network $f:\mathbb{R}^{d_x}\rightarrow\mathbb{R}^{d_x}\times\mathbb{R}^{d_y}$ of $L$ layers and width $d_x+d_y$ such that for all $x\in\mathcal K$,
\begin{align*}
    f(x) = (y_1(x), y_2(x)), \quad\text{where}\quad y_1(x) =x\quad\text{and}\quad y_2(x) =f^*(x).
\end{align*}
\end{lemma}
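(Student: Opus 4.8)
The plan is to build the network layer by layer, maintaining after each hidden layer an activation vector that affinely encodes both the input $x$ and the ``partial'' max-min string accumulated so far. Set $s_1(x)\defeq h_1(x)$ and $s_\ell(x)\defeq\tau_{\ell-1}\big(h_\ell(x),s_{\ell-1}(x)\big)$ for $\ell=2,\dots,L$, so that $s_L=f^*$; each $s_\ell$ is piecewise affine, hence continuous and bounded on the compact set $\mathcal K$. The only algebra needed is $\max\{a,b\}=a+\sigma(b-a)$ and $\min\{a,b\}=a-\sigma(a-b)$. Writing $\tau_{\ell-1}$ coordinate-wise and putting $\epsilon_\ell\defeq+1$ if $\tau_{\ell-1}$ is a $\max$ and $\epsilon_\ell\defeq-1$ if it is a $\min$, this gives, for every coordinate $j$,
\[
(s_\ell(x))_j=(h_\ell(x))_j+\epsilon_\ell\,r_{\ell,j}(x),\qquad r_{\ell,j}(x)\defeq\sigma\!\big(\epsilon_\ell\big((s_{\ell-1}(x))_j-(h_\ell(x))_j\big)\big)\ge0 .
\]

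The invariant I would carry is that after $\ell$ hidden layers the activation equals $z_\ell(x)\defeq\big(x+C\mathbf{1},\ r_{\ell+1}(x)\big)\in\mathbb R^{d_x}\times\mathbb R^{d_y}$, where $C$ is chosen large enough that $x+C\mathbf{1}\ge0$ for every $x\in\mathcal K$. Two observations keep this consistent. First, from $z_\ell(x)$ one recovers $x$ by subtracting $C$, then $(h_{\ell+1}(x))_j$ (affine in $x$), then $(s_{\ell+1}(x))_j=(h_{\ell+1}(x))_j+\epsilon_{\ell+1}\,(z_\ell(x))_{d_x+j}$; hence $(x,s_{\ell+1}(x))$ is an affine image of $z_\ell(x)$. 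Second, it follows that each coordinate $r_{\ell+2,j}(x)=\sigma\big(\epsilon_{\ell+2}((s_{\ell+1}(x))_j-(h_{\ell+2}(x))_j)\big)$ is $\sigma$ applied to an affine function of $z_\ell(x)$, while $x+C\mathbf{1}$ is merely copied and passes through $\sigma$ unchanged (being nonnegative); so $z_{\ell+1}(x)$ is the image of $z_\ell(x)$ under one affine map followed by $\sigma$.

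Concretely: let $t_1$ be the affine map $x\mapsto\big(x+C\mathbf{1},\ \big(\epsilon_2((h_1(x))_j-(h_2(x))_j)\big)_{j=1}^{d_y}\big)$, after which $\sigma$ produces $z_1(x)$. For $\ell=1,\dots,L-2$ let $t_{\ell+1}$ realize the affine step of the second observation, so that $\sigma$ then produces $z_{\ell+1}(x)$. Finally let $t_L$ be the affine map carrying $z_{L-1}(x)$ to $(x,s_L(x))=(x,f^*(x))$, which is legitimate by the first observation. This network uses the $L$ affine maps $t_1,\dots,t_L$ and the $L-1$ activations $\sigma_1,\dots,\sigma_{L-1}$, i.e.\ it is an $L$-layer network, each hidden layer has exactly $d_x+d_y$ neurons, and all the stated identities hold exactly for $x\in\mathcal K$. (The case $L=1$ is trivial, with $f=t_1$ and $t_1(x)=(x,h_1(x))$.)

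The crux --- and essentially the only obstacle --- is arranging the bookkeeping so that a single width-$(d_x+d_y)$ ReLU layer realizes one $\max$/$\min$ combine and at the same time transports the input. The naive choice is to store the running value $s_\ell(x)$ in the last $d_y$ coordinates, but reconstructing $s_{\ell+1}$ from $s_\ell$ needs both $s_\ell$ and a fresh $\sigma$-output, costing $2d_y$ neurons. The resolution, used above, is to store the nonnegative residual $r_{\ell+1}(x)$ instead, exploiting that in $\max\{h(x),\cdot\}=h(x)+\sigma(\cdot)$ and $\min\{h(x),\cdot\}=h(x)-\sigma(\cdot)$ the ``outside'' term $h(x)$ is affine in the transported input and can therefore be re-synthesized by the next affine map rather than stored. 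The rest is routine verification of the two observations, the boundary maps $t_1$ and $t_L$, and the uniform choice of $C$ over $\mathcal K$.
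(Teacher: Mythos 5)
Your construction is correct, and it fills in a proof that the paper itself does not spell out: the paper simply cites \citet[Proposition~2]{hanin17} with a remark that the original proof also preserves $y_1=x$. Your argument is, as far as I can tell, the same construction that underlies the cited result, presented cleanly. The two essential ideas — shifting $x$ by a large constant so it survives $\relu$ on a compact set, and carrying the nonnegative residual $r_{\ell+1}=\sigma(\epsilon_{\ell+1}(s_\ell-h_{\ell+1}))$ rather than $s_\ell$ itself so that only $d_y$ extra neurons are needed (with $s_\ell$ re-synthesized affinely from the transported input at each affine step) — are exactly the right bookkeeping, and the layer and width counts check out, including the boundary cases $t_1$, $t_L$, and $L=1$.
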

We note that Proposition 2 by \cite{hanin17} itself only ensures $y_2=f^*(x)$; however, its proof provides $y_1=x$ as well.

From the definition of the max-min string, one can observe that $(g_{2^M}(x))_2$ is a max-min string. Hence, by Lemma \ref{lem:hanin}, there exists a $\relu$ network $g$ of width $2$ such that $g(x)=g_{2^M}(x)=q_M(x)$ for all $x\in\mathcal D_{K,\delta}$.
This completes the proof of Lemma \ref{lem:decoding2}.

\subsection{Proof of Lemma \ref{lem:outsider}}\label{sec:pflem:outsider}
Consider the following two functions from $\mathbb R$ to $\mathbb R$:
\begin{align}
    h_1(x):=&\begin{cases}
    0~&\text{if}~x\le1-\alpha\\
    \frac1\alpha(x-1+\alpha)~&\text{if}~x\in(1-\alpha,1)\\
    1~&\text{if}~x\ge1\\
    \end{cases}\notag\\
    =&\sigma(1-\sigma(1-x)/\alpha)\notag\\
    h_2(x):=&\begin{cases}
    1~&\text{if}~x\le0\\
    \frac1\alpha(\alpha-x)~&\text{if}~x\in(0,\alpha)\\
    0~&\text{if}~x\ge\alpha\\
    \end{cases}\notag\\
    =&1-\sigma(1-\sigma(\alpha-x)/\alpha).\label{eq:encodingrelumod_h}
\end{align}
Using $h_1$ and $h_2$, we first map all $x\in\mathbb R^{d_x}\setminus[0,1]^{d_x}$ to some vector whose coordinates are greater than one via $g:\mathbb R^{d_x}\rightarrow\mathbb R^{d_x}$, defined as
\begin{align*}
g&:=r_{d_x}\circ s_{d_x}\cdots\circ r_{1}\circ s_{1}\\
r_\ell(x)&:=\sigma(x+1)-1+10\times h_1(x_\ell)\\
s_\ell(x)&:=\sigma(x+1)-1+10\times h_2(x_\ell)
\end{align*}
where we use the addition between a vector and a scalar for denoting addition of the scalar to each coordinate of the vector.
Then, one can observe that if $x\in[\alpha,1-\alpha]^{d_x}$, then $g(x)=x$ and if $x\in\mathbb R^{d_x}\setminus[0,1]^{d_x}$, then each coordinate of $g(x)$ is  greater than one.
Furthermore, each $r_\ell$ (or $s_\ell$) can be implemented by a $\relu$ network of width $d_x+1$ (width $d_x$ for computing $\sigma(x+1)-1$ and width one for computing $h_1(x_\ell)$) due to \eqref{eq:encodingrelumod_h}. Hence, $g$ can be implemented by a $\relu$ network of width $d_x+1$.

Finally, we construct a $\relu$ network $f:\mathbb R^{d_x}\rightarrow\mathbb R^{d_x}$ of width $d_x+1$ as
\begin{align*}
    f(x):=&\min\big\{\max\{g(x),0\},1\big\}\\
    \min\big\{\max\{x,0\},1\big\}=&1-\sigma(1-\sigma(x)).
\end{align*}
Then, one can observe that if $x\in[\alpha,1-\alpha]^{d_x}$, then $f(x)=x$ and if $x\in\mathbb R^{d_x}\setminus[0,1]^{d_x}$, then $f(x)=(1,\dots,1)$, and $f(\mathbb R^{d_x})\subset[0,1]^{d_x}$.
This completes the proof of Lemma \ref{lem:outsider}.

\clearpage
\section{Proofs of lower bounds}\label{sec:supplelower}
\subsection{Proof of general lower bound}\label{sec:pflb:general}
In this section, we prove that neural networks of width $d_y-1$ is not dense in both $L^p(\mathcal K,\mathbb{R}^{d_y})$ and $C(\mathcal K,\mathbb{R}^{d_y})$, regardless of the activation functions.
\begin{lemma}\label{lem:dylb}
For any set of activation functions, networks of width $d_y-1$ are not dense in both $L^p(\mathcal K,\mathbb{R}^{d_y})$ and $C(\mathcal K,\mathbb{R}^{d_y})$.
\end{lemma}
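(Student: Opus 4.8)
The plan is to exhibit a single target function in $\reals^{d_y}$ that no width-$(d_y-1)$ network can approximate, using a dimension/measure argument on the image. The key observation is that every layer of a width-$(d_y-1)$ network factors through $\reals^{d_y-1}$: writing such a network as $f = t_L \circ \sigma_{L-1} \circ \cdots \circ \sigma_1 \circ t_1$, the output of the last \emph{hidden} layer lives in $\reals^{d_{L-1}}$ with $d_{L-1} \leq d_y - 1$, and $t_L : \reals^{d_{L-1}} \to \reals^{d_y}$ is affine. Hence the entire image $f(\mathcal K) \subseteq \reals^{d_y}$ is contained in an affine image of a set in $\reals^{d_y-1}$, so it lies in a countable union of $(d_y-1)$-dimensional affine subspaces (in fact a single one, since $t_L$ is a fixed affine map). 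Consequently $f(\mathcal K)$ is contained in a Lebesgue-null subset of $\reals^{d_y}$, and more strongly $f(\mathcal K)$ has empty interior and is nowhere dense.

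Given this, I would pick $f^* : \mathcal K \to \reals^{d_y}$ whose image has positive $d_y$-dimensional Lebesgue measure and in fact contains an open ball $B$ of some radius $r > 0$; for instance, after an affine change of coordinates assume $[0,1]^{d_y}$ (or a cube of it) is in the range, which is achievable by a continuous — indeed smooth — surjection from a compact $\mathcal K \subseteq \reals^{d_x}$ onto $[0,1]^{d_y}$ as long as $d_x \geq 1$ (a space-filling-type construction, or simply take $\mathcal K$ large enough and $f^*$ a coordinate projection composed with a fold when $d_x \ge d_y$; for $d_x < d_y$ use a Peano-type curve). Then set $\varepsilon := r/3$. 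For the $C(\mathcal K, \reals^{d_y})$ case: if some width-$(d_y-1)$ network $f$ had $\|f^* - f\|_\infty \leq \varepsilon$, then every point of $B$ would be within $\varepsilon$ of a point of $f(\mathcal K)$, forcing $f(\mathcal K)$ to be $\varepsilon$-dense in $B$; but $f(\mathcal K)$ lies in a fixed $(d_y-1)$-dimensional affine subspace $V$ (the affine hull of $t_L(\reals^{d_{L-1}})$), and no subset of $V$ can be $\varepsilon$-dense in a $d_y$-dimensional ball, a contradiction. For the $L^p$ case: choose $f^*$ bounded (say valued in $[0,1]^{d_y}$) and supported so that the preimage of $B$ has positive measure; if $\|f^* - f\|_p$ were tiny then on a set of positive measure $f$ would have to be close to values in $B$, but $f(\mathcal K) \subseteq V$ has positive distance from a positive-measure chunk of $B$'s interior — more carefully, one bounds $\|f^* - f\|_p^p$ from below by $\int_{f^{*-1}(B')} \mathrm{dist}(z, V)^p\,dz$ for a slightly shrunk ball $B' \Subset B$ whose closure misses $V$, which is a fixed positive constant independent of $f$.

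The steps in order: (1) record the factorization of any width-$(d_y-1)$ network through $\reals^{d_y-1}$ and conclude $f(\mathcal K)$ lies in a single $(d_y-1)$-dimensional affine subspace, hence is nowhere dense and null; (2) construct the counterexample $f^*$ with $d_y$-dimensional image content (containing an open ball), handling both regimes $d_x \geq d_y$ and $d_x < d_y$; (3) fix $\varepsilon$ and derive the contradiction for $\|\cdot\|_\infty$ via the density obstruction; (4) derive the contradiction for $\|\cdot\|_p$ via the lower bound on the integral over the region mapping into a ball disjoint from the affine subspace.

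The main obstacle I anticipate is step~(2): one must produce a genuine element of the target function class — continuous (for the $C$ statement) and in $L^p$ with bounded image (for the $L^p$ statement) — whose image has nonempty interior in $\reals^{d_y}$, and do so uniformly over all $d_x$, including $d_x < d_y$ where injectivity is impossible and a space-filling construction is needed; care is also required to make the argument genuinely \emph{quantitative} (a fixed $\varepsilon$ that works against \emph{every} width-$(d_y-1)$ network simultaneously), which is why isolating the affine subspace $V$ and the distance estimate, rather than a bare dimension count, is the right framing. A secondary technical point is confirming that $t_L$ need not be injective, so $V$ could have dimension strictly less than $d_y-1$ — but this only helps, since a lower-dimensional subspace is even further from being $\varepsilon$-dense in a $d_y$-ball.
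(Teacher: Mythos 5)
Your factorization step is exactly the paper's: a width-$(d_y-1)$ network is $t_L \circ g$ with $t_L$ affine and $g$ landing in $\reals^{k}$ for some $k \le d_y-1$, so $f(\mathcal K)$ lies in an affine subspace $V\subset\reals^{d_y}$ of dimension at most $d_y-1$, and your $C$-norm argument from this is sound. The gap is in the $L^p$ part. You lower-bound $\norm{f^*-f}_p^p$ by $\int_{f^{*-1}(B')} \mathrm{dist}(f^*(z),V)^p\,dz$ for a shrunk ball $B'\subset B$ whose closure misses $V$, and declare this to be ``a fixed positive constant independent of $f$.'' It is not: $V$ depends on $f$, so the choice of $B'$ (its center, its radius, and hence $\mu(f^{*-1}(B'))$) also depends on $f$. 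Worse, for a generic space-filling curve the preimage measure $\mu(f^{*-1}(B'))$ is not controlled at all, since such curves need not push Lebesgue measure on $\mathcal K$ forward to anything resembling Lebesgue measure on the cube, so the integral can be made arbitrarily small by an adversarial $V$. Repairing this requires (i) a measure-preserving Hilbert-type curve so that $\int_{\mathcal K}\mathrm{dist}(f^*(z),V)^p\,dz=\int_{[0,1]^{d_y}}\mathrm{dist}(y,V)^p\,dy$, and then (ii) a separate argument that the latter is bounded below uniformly over all hyperplanes $V$. You correctly flagged quantitative uniformity in $f$ as the crux, but the shrunk-ball device does not deliver it.

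The paper sidesteps both issues with a much simpler counterexample. Its $f^*$ depends only on the coordinate $x_1$ and piecewise-linearly traverses the $d_y+1$ vertices $\{v_0,\dots,v_{d_y}\}$ of a regular simplex $\Delta\subset\reals^{d_y}$, dwelling at each vertex on an interval of length $\tfrac1{2d_y+1}$. This is continuous, is in $L^p$, works for every $d_x\ge 1$ with no space-filling machinery, and makes the measure accounting trivial (the time spent at each vertex is an explicit constant). The uniform-over-$V$ lower bound is then delivered by an explicit volumetric inequality: for any hyperplane $\mathcal H$, $\mathrm{Vol}_{d_y}(\Delta)\le 2\cdot\mathrm{Vol}_{d_y-1}(\pi_{\mathcal H}(\Delta))\cdot\max_i\mathrm{dist}(v_i,\mathcal H)$, which produces a closed-form $\varepsilon$ depending only on $d_y$ and $p$. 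So while your high-level framing (image of the network confined to a hyperplane, target with genuinely $d_y$-dimensional content) is right, the paper's simplex-vertex construction and volumetric estimate supply exactly the two missing pieces of your $L^p$ argument, and do so far more elementarily.
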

\begin{proof}
In this proof, we show that networks of width $d_y-1$ are not dense in $L^p([0,1]^{d_x},\mathbb{R}^{d_y})$, which can be easily generalized to the cases of $L^p(\mathcal K,\mathbb{R}^{d_y})$ and hence, to $C(\mathcal K,\mathbb{R}^{d_y})$.
In particular, we prove that there exist $f^*\in L^p([0,1]^{d_x},\mathbb{R}^{d_y})$ and $\varepsilon>0$ such that for any network $f$ of width $d_y-1$, it holds that
\begin{align*}
    \|f^*-f\|_p>\varepsilon. 
\end{align*}

Let $\Delta$ be a $d_y$-dimensional regular simplex with sidelength $\sqrt{2}$, that is isometrically embedded into $\mathbb{R}^{d_y}$. The volume of this simplex is given as $\textup{Vol}_{d_y}(\Delta) = \frac{\sqrt{d_y+1}}{d_y!}$.\footnote{$\textup{Vol}_{d}(\mathcal S)$ denotes the volume of $\mathcal S$ in the $d$-dimensional Euclidean space.} We denote the vertices of this simplex by $\{v_0,\ldots,v_{d_y}\}$. Then, we can construct the counterexample as follows.
\begin{align*}
    f^*(x) = \begin{cases} v_i &\hspace{-7pt}\text{if}~x_1\in\big[\frac{2i}{2d_y+1},\frac{2i+1}{2d_y+1}\big] \text{ for some } i\\
    (2d_y+1)(v_{i+1}-v_i)x_1+(2i+2)v_i-(2i+1)v_{i+1}&\hspace{-7pt}\text{if}~ x_1\in \big[\frac{2i+1}{2d_y+1},\frac{2i+2}{2d_y+1}\big] \text{ for some } i
     \end{cases}.
\end{align*}
In other words, $f^*(x)$ travels the vertices of $\Delta$ sequentially as we move $x_1$ from $0$ to $1$, staying at each vertex over an interval of length $\frac{1}{2d_y+1}$ and traveling between vertices at a constant speed otherwise, i.e., $f^*$ is continuous and in $L^p([0,1]^{d_x},\mathbb R^{d_y})$.

Recalling \eqref{eq:neuralnet}, one can notice that any neural network $f$ of width less than $d_y$ and $L\ge2$ layers can be decomposed as $t_L \circ g$, where $t_L: \reals^{k} \to \reals^{d_y}$ is the last affine transformation and $g$ denotes all the preceding layers, i.e., $g = \sigma_{L-1} \circ t_{L-1} \circ \cdots \circ \sigma_1 \circ t_1$. 
Here, we consider $k={d_y-1}$ as it suffices to cover cases $k<{d_y-1}$.
Now, we proceed as
\begin{align*}
    \|f^*-f\|_p&=\left(\int_{[0,1]^{d_x}} \|f^*(x) - f(x)\|_p^p ~dx\right)^{\frac1p}\\
    &\geq \left(\int_{[0,1]^{d_x}} \|f^*(x) - t_L\circ g(x)\|_p^p ~dx\right)^{\frac1p}\\
    &\geq \left(\int_{[0,1]^{d_x}} \inf_{u^*(x) \in \mathbb{R}^{d_y-1}}\|f^*(x) - t_L(u^*(x))\|_p^p ~dx\right)^{\frac1p}\\
    &\geq \left(\frac{1}{2d_y+1}\right)^{\frac1p}\inf_{t: \text{ affine map}} \max_{i \in [d_y+1]} \inf_{u^*_i\in\mathbb{R}^{d_y-1}}\|v_i - t(u^*_i)\|_p\\
    &\geq \left(\frac{1}{2d_y+1}\right)^{\frac1p}\inf_{\mathcal{H} \in \mathfrak{H}} \max_{i \in [d_y+1]} \inf_{a_i \in \mathcal{H}}\|v_i - a_i\|_p,
\end{align*}
where $\mathfrak{H}$ denotes the set of all $(d_y-1)$-dimensional hyperplanes in $\reals^{d_y}$ and $[d_y+1]:=\{0,1,\dots,d_y\}$. As the vertices of $\Delta$ are $d_y + 1$ distinct points in a general position, $\inf_{\mathcal{H} \in \mathfrak{H}} \max_{i \in [d_y+1]} \inf_{a_i \in \mathcal{H}} \|v_i - a_i\|_p > 0$. To make this argument more concrete, we take a volumetric approach; for any $k$-dimensional hyperplane $\mathcal{H} \in \mathbb{R}^{d_y}$, we have
\begin{align*}
    \textup{Vol}_{d_y}(\Delta) \le 2 \cdot \textup{Vol}_{d_y-1}(\pi_\mathcal{H}(\Delta)) \cdot \max_{i \in [d_y+1]}\inf_{a_i\in\mathcal{H}}\|v_i - a_i\|_2,
\end{align*}
where $\pi_\mathcal{H}$ denotes the projection onto $\mathcal{H}$. As projection is contraction and the distance between any two points are at most $\sqrt{2}$, it holds that for any $\mathcal{H}$,
\begin{align*}
    \max_{i \in [d_y+1]}\inf_{a_i\in\mathcal{H}}\|v_i - a_i\|_2 &\ge \frac{\textup{Vol}_{d_y}(\Delta)}{2 \cdot \textup{Vol}_{d_y-1}\left(\{x\in\mathbb{R}^{d_y-1}:\|x\|_2\le1\}\right)}\\
    &= \frac{\sqrt{d_y + 1}}{2 \cdot d_y !} \cdot \Gamma\left(\frac{d_y + 1}{2}\right)\cdot \left(\frac{2}{\pi}\right)^{\frac{d_y-1}{2}}
\end{align*}
where $\Gamma$ denotes the gamma function, and we use the fact that $\textup{Vol}_{d_y-1}\left(\{x\in\mathbb{R}^{d_y-1}:\|x\|_2\le1\}\right)\ge\textup{Vol}_{d_y-1}(\pi_\mathcal{H}(\Delta))$ as $\Delta$ can be contained in a $d_y$-dimensional unit ball, and hence $\pi_\mathcal{H}(\Delta)$ can be contained in a $(d_y-1)$-dimensional unit ball.
Thus we have $\|f^*-f\|_p>\varepsilon$ with
\begin{align*}
    \varepsilon = \frac{d_y^{\frac{1}{p}-\frac{1}{2}}\sqrt{d_y + 1}}{2\cdot(2d_y + 1)^{\frac1p} \cdot d_y !} \cdot \Gamma\left(\frac{d_y + 1}{2}\right)\cdot \left(\frac{2}{\pi}\right)^{\frac{d_y-1}{2}},
\end{align*}
for $p \ge 2$ and
\begin{align*}
    \varepsilon = \frac{\sqrt{d_y + 1}}{2\cdot(2d_y + 1)^{\frac1p} \cdot d_y !} \cdot \Gamma\left(\frac{d_y + 1}{2}\right)\cdot \left(\frac{2}{\pi}\right)^{\frac{d_y-1}{2}},
\end{align*}
for $p < 2$.
This completes the proof of Lemma \ref{lem:dylb}.
\end{proof}

\subsection{Proof of tight lower bound in Theorem \ref{thm:reluthresunif}}\label{sec:pfthm:reluthresuniflb}
In this section, we prove the tight lower bound in Theorem \ref{thm:reluthresunif}.
Since we already have the width-$d_y$ lower bound by Lemma \ref{lem:dylb} and it is already proven that $\relu$ networks of width $d_x$ is not dense in $C(\mathcal K,\mathbb R^{d_y})$ \citep{hanin17}, we prove the tight lower bound in Theorem \ref{thm:reluthresunif} by showing the following statement: There exist $f^*\in C([0,1]^{d_x},\mathbb R)$ and $\varepsilon>0$ such that for any $\relu$+$\step$ network $f$ of width $d_x$ containing at least one $\step$, it holds that
\begin{align*}
    \|f^*-f\|_\infty>\varepsilon.
\end{align*}
Without loss of generality, we assume that $f$ has $d_x$ hidden neurons at each layer except for the output layer and all affine transformations in $f$ are invertible (see Section \ref{sec:topologicalproerty}).

Our main idea is to utilize properties of \textit{level sets} of width-$d_x$ $\relu$+$\step$ networks \citep{hanin17} defined as follows:
Given a network $f$ of width $d_x$, we call a connected component of $f^{-1}(y)$ for some $y$ as a level set.
Level sets of $\relu$+$\step$ networks have a property described by the following lemma.
We note that the statement and the proof of Lemma \ref{lem:level} is motivated by Lemma 6 of \citep{hanin17}.
\begin{lemma}\label{lem:level}
Let $f$ be a $\step$+$\relu$ network of width $d_x$ containing at least one $\step$. Then, for any level set $\mathcal S$ of $f$, $\mathcal S$ is unbounded unless it is empty.
\end{lemma}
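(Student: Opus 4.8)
The plan is to pull the target value $y$ back through $f$ one layer at a time and to track how each activation layer transforms preimages, following the strategy of Lemma~6 of \citet{hanin17} but exploiting the discontinuity of $\step$. First I would apply the stated normalizations: every hidden layer has width exactly $d_x$, and every affine map $t_1,\dots,t_{L-1}$ (each a map $\mathbb{R}^{d_x}\to\mathbb{R}^{d_x}$) is invertible. Writing $g_\ell := \sigma_\ell\circ t_\ell\circ\cdots\circ\sigma_1\circ t_1$ for the first-$\ell$-layers map, we have $f = t_L\circ g_{L-1}$; fix $y$ in the image of $f$, set $W := t_L^{-1}(\{y\})$, a nonempty affine subspace of $\mathbb{R}^{d_x}$, and let $\mathcal S$ be a nonempty connected component of $f^{-1}(y) = g_{L-1}^{-1}(W)$. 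Since invertible affine maps preserve boundedness and path-connectedness, it suffices to understand how $W$ behaves as it is pulled back successively through $\sigma_{L-1}, t_{L-1},\dots,\sigma_1,t_1$.

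The heart of the matter is the effect of one activation layer on an affine subspace. I would partition $\mathbb{R}^{d_x}$ into the finitely many activation cells determined by the sign pattern of the pre-activations (for each coordinate: $\relu$ active/inactive, or $\step$ outputting $1$/$0$); on each cell the layer map is affine, so the preimage of an affine subspace inside that cell is a polyhedron. For a \emph{pure} $\relu$ layer this polyhedron may be a single point, since each active $\relu$ neuron contributes one equality constraint and $d_x$ equalities can pin down a point --- this is precisely why the conclusion fails for $\relu$ alone, e.g.\ for $x\mapsto\min\{\max\{x,0\},1\}$ at value $\tfrac12$. A $\step$ neuron is different: whichever value it outputs, it contributes only the one-sided constraint ``pre-activation $\ge 0$'' or ``pre-activation $<0$'' and \emph{no} equality, and it stays constant along that whole unbounded half-line. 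Exploiting this, on the fibre $f^{-1}(y)$ a chosen $\step$ neuron (say coordinate $i^\star$ of layer $j$) is constant; freezing it to that constant yields a network $\tilde f$ that agrees with $f$ on the corresponding side of the hypersurface $\{(t_j\circ g_{j-1})_{i^\star}=0\}$ and that funnels through a layer of width $d_x-1$ --- hence through an affine map $\mathbb{R}^{d_x}\to\mathbb{R}^{d_x-1}$ all of whose fibres have dimension at least $1$. Pulling such an ``everywhere at-least-$1$-dimensional'' preimage back through the remaining layers, an induction on $L$ --- strengthening the inductive claim of \citet[Lemma~6]{hanin17} so that it governs preimages of arbitrary polyhedral complexes and simultaneously produces a recession direction --- shows that $\mathcal S$ is unbounded.

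The step I expect to be the main obstacle is the bookkeeping around connected components. A connected component of a fibre is generally a union of several activation cells, some of which can be individually bounded (think of a bounded segment bridging two unbounded rays), so one cannot argue cell by cell; the inductive hypothesis must be strong enough to ensure that every component of the preimage either is already unbounded or is chained \emph{inside the preimage} to an unbounded cell. Moreover this must be reconciled with two further operations: intersecting with the ``correct side'' of each $\step$-hypersurface (which in principle can carve an unbounded set into bounded pieces) and pulling back through $\relu$ layers (which can collapse dimensions). Making the inductive invariant precise enough to survive both operations, and verifying that the half-line direction contributed by the $\step$ neuron genuinely persists through all subsequent layers, is the crux of the proof.
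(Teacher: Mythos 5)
Your core intuition is right---a $\step$ neuron contributes a recession direction rather than an equality constraint---but there is a genuine gap in the plan: you propose to freeze an \emph{arbitrary} $\step$ neuron at some layer $j$ and then pull back ``through the remaining layers,'' which you correctly recognize forces you to strengthen Hanin--Sellke's Lemma~6 so that it controls ``preimages of arbitrary polyhedral complexes'' while simultaneously propagating a recession direction and surviving intersection with further $\step$-hypersurfaces. You flag this strengthening as the crux and leave it unresolved. The paper avoids it entirely with two observations you miss. First, choose $\ell^*$ to be the \emph{first} layer containing a $\step$; then layers $1,\dots,\ell^*-1$ are pure $\relu$, so \citet[Lemma~6]{hanin17} applies \emph{verbatim}, with no polyhedral-complex generalization: either the input point lies outside the open convex cell $\mathcal S$ on which all $\relu$'s are strictly active (in which case its $f_{\ell^*-1}$-level set is already unbounded), or it lies in $\mathcal S$, where $f_{\ell^*-1}$ is an invertible affine map, so the $\step$-ray at the output of layer $\ell^*$ either pulls back to a ray inside $\mathcal S$ or forces the level set to exit $\mathcal S$ (again unbounded by Lemma~6). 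In either case every nonempty level set of the \emph{truncated} network $f_{\ell^*}$ is unbounded.

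Second, you do not need to pull anything forward through layers $\ell^*+1,\dots,L$ at all. Since $f_{\ell^*}(x)=f_{\ell^*}(x')$ implies $f(x)=f(x')$, every (connected) level set of $f_{\ell^*}$ is contained in a level set of $f$; an unbounded subset makes the ambient set unbounded, so the conclusion for $f$ follows for free once you have it for $f_{\ell^*}$. This is exactly why the ``two further operations'' you worry about---intersecting with later $\step$-hypersurfaces, and dimension collapse in later $\relu$ layers---never arise. Without these two choices, your route requires proving a substantially stronger invariant that you acknowledge you have not established, so as written the proposal does not constitute a proof.
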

\begin{proof}[Proof of Lemma \ref{lem:level}]
Let $\ell^*$ be the smallest number such that $\step$ appears at the $\ell^*$-th layer. 
In this proof, we show that all level sets of the first $\ell^*$ layers of $f$ are either unbounded or empty.
Then the claim of Lemma \ref{lem:level} directly follows.
We prove this using the mathematical induction on $\ell^*$.
Recalling \eqref{eq:neuralnet}, we denote by $f_\ell$ the mapping of the first $\ell$ layers of $f$:
\begin{align*}
    f_\ell:=\sigma_\ell \circ t_\ell\circ\cdots\circ\sigma_1\circ t_1.
\end{align*}
First, consider the base case: $\ell^*=1$. Assume without loss of generality that the activation function of the first hidden node in $\sigma_1$ is $\step$.
Then for any $x$, the $\step$ activation maps the first component of $t_1(x)$ to $1$ if $(t_1(x))_1 \geq 0$, and to $0$ otherwise.
This means that there exists a ray $\mc R$
starting from $x$ such that $f_1(\mathcal R)=\{f_1(x)\}$. Hence, any level set of $f_1$ is either unbounded or empty.

Now, consider the case that $\ell^*>1$.
Then, until the $(\ell^*-1)$-th layer, the network only utilizes $\relu$. Here, the level sets of $f_{\ell^*-1}$ can be characterized using the following lemma.
\begin{lemma}[{\citet[Lemma~6]{hanin17}}]\label{lem:haninlevel}
Given a $\relu$ network $g$ of width $d_x$, let $\mathcal S\subset\mathbb R^{d_x}$ be a set such that $x\in\mathcal S$ if and only if inputs to all $\relu$ in $g$ are strictly positive, when computing $g(x)$.
Then, $\mathcal S$ is open and convex, $g$ is affine on $\mathcal S$, and any bounded level set of $g$ is contained in $\mathcal S$.
\end{lemma}
Consider $\mathcal S$ of $f_{\ell^*-1}$ as in Lemma \ref{lem:haninlevel} and consider a level set $\mathcal T$ of $f_{\ell^*}$ containing some $x$, i.e., $\mathcal T\ne\emptyset$.
If $x\notin\mathcal S$, then $\mathcal T$ is unbounded by Lemma \ref{lem:haninlevel}.
If $x\in\mathcal S$, we argue as the base case.
The preimage of $f_{\ell^*}(x)$ of the $\ell^*$-th layer (i.e., $\sigma\circ t_{\ell^*}$) contains a ray.
If this ray is contained in $f_{\ell^*-1}(\mathcal S)$, then $\mathcal T$ is unbounded as $f_{\ell^*-1}$ is invertible and affine on $\mathcal S$.
Otherwise, $\mathcal T\setminus\mathcal S\ne\emptyset$ and it must be unbounded as any level set of $f_{\ell^*-1}$ not contained in $\mathcal S$ is unbounded by Lemma \ref{lem:haninlevel}.
This completes the proof of Lemma \ref{lem:level}.
\end{proof}
Now, we continue the proof of the tight lower bound in Theorem \ref{thm:reluthresunif} based on Lemma \ref{lem:level}.
We note that our argument is also from the proof of the lower bound in Theorem 1 of \citep{hanin17}.

Consider $f^*:[0,1]^{d_x}\rightarrow\mathbb R$ defined as
\begin{align*}
    f^*(x_1,\dots,x_{d_x}):=\sum_{i=1}^{d_x}\left(x_i-\frac12\right)^2.
\end{align*}
Then, for $a=\frac14$ and $b=0$, one can observe that $(f^*)^{-1}(a)$ is a sphere of radius $\frac{1}{2}$ centered at $(f^*)^{-1}(b)=\{(\frac12,\dots,\frac12)\}$.
Namely, any path from $(f^*)^{-1}(b)$ to infinity must intersect with $(f^*)^{-1}(a)$.
Now, suppose that a $\relu$+$\step$ network $f$ of width $d_x$ satisfies that $\|f^*-f\|_\infty\le\frac1{16}$.
Then, the level set of $f$ containing $(\frac12,\dots,\frac12)$ must be unbounded by Lemma \ref{lem:level}, and hence, must intersect with $(f^*)^{-1}(a)$.
However, as $f^*\circ(f^*)^{-1}(a)=\frac14$ and $f^*\circ(f^*)^{-1}(b)=0$, this contradicts with $\|f^*-f\|_\infty\le\frac1{16}$.
This completes the proof of the tight lower bound $\max\{d_x+1,d_y\}$ in Theorem \ref{thm:reluthresunif}.

\subsection{Proof of tight lower bound in Theorem \ref{thm:relulp}}\label{sec:pfthm:relulplb}
In this section, we prove the tight lower bound in Theorem \ref{thm:relulp}.
Since we already have the $d_y$ lower bound by Lemma \ref{lem:dylb}, we prove the tight lower bound in Theorem \ref{thm:relulp} by showing the following statement: There exist $f^*\in L^p(\mathbb R^{d_x},\mathbb R)$ and $\varepsilon>0$ such that for any continuous function $f$ represented by a $\relu$ network of width $d_x$, it holds that
\begin{align*}
    \|f^*-f\|_\infty>\varepsilon.
\end{align*}
Note that this statement can be easily generalized to an arbitrary codomain.
To derive the statement, we prove a stronger statement: For any $\relu$ network $f$ of width $d_x$, either
\begin{align}
f\notin L^p(\mathbb R^{d_x},\mathbb R)\quad\text{or}\quad f=0\label{eq:lplevel}
\end{align}
where $f=0$ denotes that $f$ is a constant function mapping any input to zero.
Then it leads us to the desired result directly.
Without loss of generality, we assume that $f$ has $d_x$ hidden neurons at each layer except for the output layer and all affine transformations in $f$ are invertible (see Section \ref{sec:topologicalproerty}).

As in the proof of the tight upper bound in Theorem \ref{thm:reluthresunif}, we utilize properties of level sets of $f$ given by Lemma \ref{lem:haninlevel}.
Let $\mathcal S$ be a set defined in Lemma \ref{lem:haninlevel} of $f$.
By the definition of $\mathcal S$, one can observe that $\mathbb R^{d_x}\setminus\mathcal S\ne\emptyset$. Then, a level set $\mathcal T$ containing some $x\in\mathbb R^{d_x}\setminus\mathcal S$ must be unbounded by Lemma \ref{lem:haninlevel}.
Here, if $y:=f(x)>0$, then for $\delta:=\omega_f^{-1}(\frac{y}2)$, we have $f(x^\prime)\ge\frac{y}2$ for all \begin{align*}
x^\prime\in\mathcal T^\prime:=\{x^\prime\in\mathbb R^{d_x}:\exists x\in\mathcal T~\text{such that}~\|x^\prime-x\|_\infty\le\delta\}.
\end{align*}
Since $\mc T^\prime$ contains $\mc T$ which is an unbounded set, one can easily observe that $\mu(\mathcal T^\prime)=\infty$ and hence, $\int_{\mathcal T^\prime}|f(x)|^pd(x)=\infty$, i.e., $f\notin L^p(\mathbb R^{d_x},\mathbb R)$.\footnote{$\mu$ denotes the Lebesgue measure.}
One can derive the same result for $f(x)<0$.

Suppose that $f(x)=0$ for all $x\in\mathbb R^{d_x}\setminus\mathcal S$. Then, 
$f(x)=0$ for all $x\in\partial\mathcal S$ as $\mathcal S$ is open (see Lemma \ref{lem:haninlevel}).
Furthermore, we claim that $f(\mathcal S)=\{0\}$ or $f\notin L^p(\mathbb R^{d_x},\mathbb R)$.
For any $s\in\mathcal S$, consider any two rays of opposite directions starting from $s$.
If one ray is contained in $\mathcal S$ and $f\in L^p(\mathbb R^{d_x},\mathbb R)$, then its image for $f$ must be $\{0\}$. If the image of $f$ is not $\{0\}$, using the similar argument for the case $f(x)>0$ leads us to $f\notin L^p(\mathbb R^{d_x},\mathbb R)$.
Then, one can conclude that $f(s)=0$.
If both rays are not contained in $\mathcal S$, they must both intersect with $\partial\mathcal S$.
Then, since $f$ is affine on $\mathcal S$, $f(s)$ must be zero as $f(\partial\mathcal S)=\{0\}$.
Hence, we prove the claim.

This completes the proof of the tight upper bound in Theorem \ref{thm:relulp}.

\subsection{Proof of Lemma \ref{lem:relu}}\label{sec:pflem:relu}
Let $\phi(x)=Ax+b$ for some invertible matrix $A=[a_1,a_2]^\top\in\mathbb{R}^{2\times2}$ and for some vectors $a_1,a_2,b\in\mathbb R^2$.
Then, it is easy to see that if \begin{align*}
\langle a_1,x\rangle+b_1\ge0\quad\text{and}\quad\langle a_2,x\rangle+b_2\ge0,
\end{align*}
i.e., $x\in\mathcal S$, then $\phi^{-1}\circ\sigma\circ\phi(x)=x$. Hence, the first statement of Lemma \ref{lem:relu} holds.

Now, consider the second statement of Lemma \ref{lem:relu}.
Suppose that $\langle a_1,x\rangle+b_1\ge0$ but $\langle a_2,x\rangle+b_2<0$.
Then, one can easily observe that $\phi^{-1}\circ\sigma\circ\phi$ maps a ray 
\begin{align*}
\{x^\prime\in\mathbb R^2:\langle a_1,x^\prime\rangle=\langle a_1,x\rangle,\langle a_2,x^\prime\rangle+b_2<0\}
\end{align*}
containing $x$ to a single point $\phi^{-1}\big((\langle a_1,x\rangle+b_1,0)\big)$, which is on $\partial\mathcal S$.
In addition, similar arguments hold for cases that $\langle a_1,x\rangle+b_1<0$, $\langle a_2,x\rangle+b_2\ge0$ and $\langle a_1,x\rangle+b_1<0$, $\langle a_2,x\rangle+b_2<0$.
This completes the proof of Lemma \ref{lem:relu}.

\subsection{Proof of Lemma \ref{lem:topology}}\label{sec:pflem:topology}

We first prove the first statement of Lemma
\ref{lem:topology} using the proof by contradiction.
Suppose that $x^\prime=x$ and $x^\prime\notin\mathcal T^\prime$ but $x^\prime$ is not in a bounded path-connected component of $\mathbb R^2\setminus\mathcal T^\prime$.
Here, note that $x=x^\prime\in\mathcal S$ for $\mathcal S$ defined in Lemma \ref{lem:relu}.
Then, there exists a path $\mathcal P$ from $x^\prime$ to infinity such that $\mathcal P\cap\mathcal T^\prime=\emptyset$.
If $\mathcal P\subset\text{int}(\mathcal S)$\footnote{$\text{int}(\mathcal S)$ denotes the interior of $\mathcal S$.}, then the preimages of $\mathcal P$ and $\mathcal T^\prime\cap\text{int}(\mathcal S)$ under $\phi^{-1}\circ\sigma\circ\phi$ stay identical to their corresponding images, i.e., $\mathcal P$ and $\mathcal T^\prime\cap\text{int}(\mathcal S)$ (by Lemma \ref{lem:relu}). This contradicts the assumption that $x$ is in a bounded path-connected component of $\mathbb R^2\setminus\mathcal T$.
Hence, it must hold that $\mathcal P\not\subset\text{int}(\mathcal S)$.

Let $x^*\notin\mathcal T^\prime$ be the first point in $\mathcal P\cap\partial\mathcal S$ in the trajectory of $\mathcal P$ starting from $x^\prime$.
Then, the preimage of $x^*$ contains a ray $\mc R$ starting from $x^*$ (see the proof of Lemma \ref{lem:relu} for the details) which must not intersect with $\mathcal T$; had the ray $\mc R$ intersected with $\mc T$, then $\mc R \cap \mc T$ must have mapped to $x^*$, which contradicts $x^* \notin \mc T^\prime$ and the definition of $\mc P$.
Furthermore, from the definition of $x^*$, the subpath $\mc P^\dagger$ of $\mathcal P$ from $x^\prime$ to $x^*$ excluding $x^*$ satisfies $\mc P^\dagger \subset \text{int}(\mc S)$. Hence, the preimages of $\mc P^\dagger$ and $\mathcal T^\prime\cap\text{int}(\mathcal S)$ under $\phi^{-1}\circ\sigma\circ\phi$ stay identical by Lemma \ref{lem:relu}.
This implies that there exist a path $\mc P^\dagger$ from $x$ to $x^*$, and then a path $\mc R$ from $x^*$ to infinity, not intersecting with $\mathcal T$. This contradicts the assumption of Lemma \ref{lem:topology}.
This completes the proof of the first statement of Lemma \ref{lem:topology}.

Now, consider the second statement of Lemma \ref{lem:topology}.
By Lemma \ref{lem:relu}, $x\ne x^\prime$ implies that $x\notin\mathcal S$ and $x^\prime\in\partial\mathcal S$.
Here, as the preimage of $x^\prime$ contains a ray from $x^\prime$ containing $x$, this ray must intersect with $\mathcal T$ from the assumption of Lemma \ref{lem:topology}.
Hence, $x^\prime\in\mathcal T^\prime$ and this completes the proof of the second statement of Lemma \ref{lem:topology}.

By combining the proofs of the first and the second statements of Lemma \ref{lem:topology}, we complete the proof of Lemma \ref{lem:topology}.

\subsection{Proof of Lemma \ref{lem:topology2}}\label{sec:pflem:topology2}
Before starting our proof, we first introduce the following definitions and lemma. The proof of Lemma \ref{lem:topologypolygon} is presented in Appendix \ref{sec:pflem:topologypolygon}.
\begin{definition}
Definitions related to curves, loops, and polygons are listed as follows: For $\mathcal U\subset\mathbb R^2$ and $\mathcal F(\mathcal U):=\{f\in C([0,1],\mathbb R^2):f([0,1])=\mathcal U\}$,
\begin{list}{{\tiny$\bullet$}}{\leftmargin=1.8em}
  \setlength{\itemsep}{1pt}
  \vspace*{-4pt}
    \item $\mathcal U$ is a ``curve'' if there exists $f\in \mathcal F(\mathcal U)$.
    \item $\mathcal U$ is a  ``simple curve'' if there exists injective $f\in \mathcal F(\mathcal U)$.
    \item $\mathcal U$ is a ``loop'' if there exists $f\in \mathcal F(\mathcal U)$ such that $f(1)=f(0)$.
    \item $\mathcal U$ is a ``simple loop'' if there exists $f\in \mathcal F(\mathcal U)$ such that $f(1)=f(0)$ and $f$ is injective on $[0,1)$.
    
    \item $\mathcal U$ is a ``polygon'' if there exists piece-wise linear $f\in \mathcal F(\mathcal U)$ such that $f(1)=f(0)$.
    
    \item $\mathcal U$ is a ``simple polygon'' if there exists piece-wise linear $f\in \mathcal F(\mathcal U)$ such that $f(1)=f(0)$ and $f$ is injective on $[0,1)$.
\end{list}
\end{definition}
\begin{lemma}\label{lem:topologypolygon}
Suppose that $g_{\ell^*}([0,p_1])\cap g_{\ell^*}([p_2,1])=\emptyset$ and $g_{\ell^*}([0,p_1])\setminus\mathcal B$ is contained in a bounded path-connected component of $\mathbb R^2\setminus\mathcal U$ for some $\mathcal U\subset g_{\ell^*}([p_2,1])\cup\mathcal B$.
Then, $g_{\ell^*}([0,p_1])\setminus\mathcal B$ is contained in a bounded path-connected component of $\mathbb R^2\setminus(g_{\ell^*}([p_2,1])\cup\mathcal B)$.
\end{lemma}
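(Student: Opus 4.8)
The plan is to transfer the ``surrounding'' property from the auxiliary set $\mathcal U$ to the larger set $\mathcal T := g_{\ell^*}([p_2,1])\cup\mathcal B$ via one easy monotonicity observation and one delicate connectivity argument. Write $\mathcal A := g_{\ell^*}([0,p_1])$ and $\mathcal C := g_{\ell^*}([p_2,1])$, so $\mathcal T = \mathcal C\cup\mathcal B$ and, by hypothesis, $\mathcal A\cap\mathcal C=\emptyset$; let $V$ be the bounded path-connected component of $\mathbb R^2\setminus\mathcal U$ containing $\mathcal A\setminus\mathcal B$, where $\mathcal U\subseteq\mathcal T$. I would first record that $g_{\ell^*}$, being an alternating composition of affine maps and coordinate-wise $\relu$, is continuous and piecewise linear, so $\mathcal A$ and $\mathcal C$ are polygonal curves (finite unions of segments); this keeps every set in the argument piecewise linear and lets me avoid point-set pathologies.

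The easy part has two pieces. First, $\mathcal A\setminus\mathcal B\subseteq\mathbb R^2\setminus\mathcal T$, since $\mathcal A\setminus\mathcal B$ is disjoint from $\mathcal B$ by definition and from $\mathcal C$ because $\mathcal A\cap\mathcal C=\emptyset$. Second, since $\mathcal U\subseteq\mathcal T$ we have $\mathbb R^2\setminus\mathcal T\subseteq\mathbb R^2\setminus\mathcal U$, so any path lying in $\mathbb R^2\setminus\mathcal T$ is also a path in $\mathbb R^2\setminus\mathcal U$; hence for every $z\in\mathcal A\setminus\mathcal B\subseteq V$ the path-connected component of $z$ in $\mathbb R^2\setminus\mathcal T$ is contained in the bounded set $V$ (indeed it equals the path-component of $z$ in $V\setminus\mathcal T$), so every such component is bounded. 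The entire content of the lemma thus reduces to showing that $\mathcal A\setminus\mathcal B$ lies in a \emph{single} path-connected component of $\mathbb R^2\setminus\mathcal T$.

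For this last step: if $\mathcal A\cap\mathcal B=\emptyset$ then $\mathcal A\setminus\mathcal B=\mathcal A$ is path-connected and disjoint from $\mathcal T$ and we are done, so assume $\mathcal A\cap\mathcal B\neq\emptyset$. Since $\mathcal A\setminus\mathcal B$ is a finite union of segments and points it is locally connected, so each of its connected components $Z$ is path-connected; moreover $Z$ must meet $\partial\mathcal B$ (otherwise $Z$ would be clopen in the connected set $\mathcal A$, forcing $Z=\mathcal A$ against $\mathcal A\cap\mathcal B\neq\emptyset$), and any meeting point lies in $\partial\mathcal B\setminus\mathcal C$ because it belongs to $\mathcal A$. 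Hence it suffices to connect, inside $\mathbb R^2\setminus\mathcal T$, two arbitrary points $p,q\in(\partial\mathcal B\setminus\mathcal C)\cap\mathcal A$. To do so I would take a polygonal path $\gamma$ in $\mathcal A$ from $p$ to $q$, observe that $\gamma^{-1}(\mathcal B)$ is a finite union of open intervals and that on the complementary closed subintervals $\gamma$ already stays in $\mathcal A\setminus\mathcal B\subseteq\mathbb R^2\setminus\mathcal T$, with each ``excursion'' of $\gamma$ into $\mathcal B$ running between two points of $\partial\mathcal B\setminus\mathcal C$, and then replace every excursion by a detour that follows $\partial\mathcal B$ and makes small outward bypasses around the finitely many points and segments of $\mathcal C\cap\partial\mathcal B$.

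The main obstacle is exactly the realizability of these detours, and this is where the hypothesis on $\mathcal U$ is indispensable — two points of $\partial\mathcal B\setminus\mathcal C$ are not joinable in $\mathbb R^2\setminus\mathcal T$ in general, so without a hypothesis the lemma is false. I expect to resolve it using that $\mathcal A\setminus\mathcal B$ is trapped in the bounded complementary component $V$ of $\mathbb R^2\setminus\mathcal U$: from such a $\mathcal U$ one extracts a simple polygon $\mathcal U_0\subseteq\mathcal C\cup\mathcal B$ whose interior contains $\mathcal A\setminus\mathcal B$, and via the polygonal Jordan curve theorem one checks that the outward bypasses can be chosen to remain in the interior of $\mathcal U_0$, where they cannot be blocked by $\mathcal C$. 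Equivalently, one argues by contradiction and rules out the only bad scenario — an excursion of $\gamma$ that genuinely separates two distinct path-components of $\mathbb R^2\setminus\mathcal T$ — by noting that assembling such a separator from pieces of $\mathcal C$, $\mathcal B$ and $\partial\mathcal B$ would force $\mathcal A$ and $\mathcal C$ to cross, contradicting $g_{\ell^*}([0,p_1])\cap g_{\ell^*}([p_2,1])=\emptyset$. This bookkeeping with the polygonal Jordan curve theorem is the step I anticipate to be the delicate one.
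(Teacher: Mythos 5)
Your reduction of the lemma to two claims is correct and matches what's really at stake: (i) since $\mathcal U\subseteq\mathcal T:=g_{\ell^*}([p_2,1])\cup\mathcal B$, every path-component of $\mathbb R^2\setminus\mathcal T$ that meets $\mathcal A\setminus\mathcal B$ (with $\mathcal A:=g_{\ell^*}([0,p_1])$) is already contained in the bounded set $V$, hence bounded; and (ii) the real content is showing $\mathcal A\setminus\mathcal B$ lies in a \emph{single} such component. Point (i) is fine. Point (ii) is where the proposal has a genuine gap, because it attempts a purely topological argument from the hypotheses displayed in the lemma statement, and those hypotheses alone do not suffice.

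Concretely, the closing claim --- that a separator assembled from pieces of $\mathcal C:=g_{\ell^*}([p_2,1])$, $\mathcal B$ and $\partial\mathcal B$ ``would force $\mathcal A$ and $\mathcal C$ to cross'' --- is false. Consider $\mathcal C$ touching the top edge of $\partial\mathcal B$ at three points $(-a,1)$, $(0,1)$, $(a,1)$ while running strictly above that edge in between (an ``M'' shape with a pinch at $(0,1)$), and $\mathcal A$ exiting $\mathcal B$ through $\partial\mathcal B$ at $(-a/2,1)$ and again at $(a/2,1)$, never touching $\mathcal C$. Then the two components of $\mathcal A\setminus\mathcal B$ lie on opposite sides of the pinch, hence in distinct components of $\mathbb R^2\setminus\mathcal T$, yet one can choose a simple polygon $\mathcal U\subseteq\mathcal C\cup\mathcal B$ (the outer arc of $\mathcal C$ closed off by a polyline inside $\mathcal B$) whose bounded complementary component contains both. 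So the displayed hypotheses are satisfied while the conclusion fails: the lemma is not a free-standing topology fact. What makes it true in the paper is the standing context of Section~5.2 together with Lemma~\ref{lem:boundary}: since $g_{\ell^*}$ agrees with $f$ on $\partial\mathcal B$ and $\|f^*-f\|_\infty\le\tfrac{1}{100}$, \emph{every} component of $\mathcal A\setminus\mathcal B$ must meet $\partial\mathcal B$ inside the small $\ell_\infty$-ball $\mathcal O$ about $(0,1)$, while $g_{\ell^*}([p_2,1])$ is forbidden from entering $\partial\mathcal B\cap\mathcal O$. This localizes all the boundary exits of $\mathcal A$ to one short $\mathcal C$-free sub-arc of $\partial\mathcal B$, and the single-component claim then follows immediately --- no Jordan-curve detours or separator analysis are needed. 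Your proof should be reorganized around Lemma~\ref{lem:boundary} and the $\tfrac{1}{100}$ bound; without them, the argument cannot close.
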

In this proof, we prove that  if $g_{\ell^*}([0,p_1])\cap g_{\ell^*}([p_2,1])=\emptyset$, then $g_{\ell^*}([0,p_1])\setminus\mathcal B$ is contained in a bounded path-connected component of $\mathbb R^2\setminus\mathcal U$ for some simple polygon $\mathcal U\subset g_{\ell^*}([p_2,1])\cup\mathcal B$.
Then, the statement of Lemma \ref{lem:topology2} directly follows by Lemma \ref{lem:topologypolygon}.

To begin with, consider a loop  $g_{\ell^*}([p_2,1])\cup\mathcal L\big(g_{\ell^*}(p_2),g_{\ell^*}(1)\big)$ where $\mathcal L(x,x^\prime)$ denotes the line segment from $x$ to $x^\prime$, i.e.,
\begin{align*}
    \mathcal L(x,x^\prime):=\{\lambda\cdot x+(1-\lambda)\cdot x^\prime:\lambda\in[0,1]\}.
\end{align*}
Then, the loop consists of a finite number of line segments, as an image of an interval of a $\relu$ network is piece-wise linear as well as $\mathcal L\big(g_{\ell^*}(p_2),g_{\ell^*}(1)\big)$, i.e., the loop is a polygon.

Since $g_{\ell^*}([p_2,1])\cup\mathcal L\big(g_{\ell^*}(p_2),g_{\ell^*}(1)\big)$ consists of line segments, under the assumption $\|f^*-f\|_\infty\le\frac1{100}$, one can easily construct a simple loop $\mathcal U$ in $g_{\ell^*}([p_2,1])\cup\mathcal L\big(g_{\ell^*}(p_2),g_{\ell^*}(1)\big)$ so that $\mathcal U$ contains simple curves from the midpoint of $\mathcal L\big(g_{\ell^*}(p_2),g_{\ell^*}(1)\big)$ to a point near the point $(-1,1)$, then to a point near the point $(1,1)$, and finally to the midpoint of $\mathcal L\big(g_{\ell^*}(p_2),g_{\ell^*}(1)\big)$.
We note that $\mathcal U$ also consists of line segments, i.e., $\mathcal U$ is a simple polygon.
Figure \ref{fig:topology1} illustrates $\mathcal U$ where line segments from $g_{\ell^*}([p_2,1])$ is drawn in blue and line segments from $L\big(g_{\ell^*}(p_2),g_{\ell^*}(1)\big)$ indicated by dotted black line.

Now, choose $q\in(0,p_1)$ such that $f^*(q)=(0,\frac12)$. Since $\|f^*-f\|_\infty\le\frac1{100}$ and by the definition of $\ell^*$, 
\begin{align*}
f(q)=g_{\ell^*}(q)\in\{x \in \mathbb R^2:\|x-(0,\tfrac12)\|_\infty\le\tfrac1{100}\}
\end{align*}
which is illustrated by the red dot in Figure \ref{fig:topology}. 
Then, we claim the following statement:
\begin{align}
\text{$g_{\ell^*}(q)$ is contained in a bounded path-connected component of $\mathbb R^2\setminus\mathcal U$.}\label{eq:claim}
\end{align}
From the definition of $q$ and the path-connectedness of $g_{\ell^*}([0,q])$, one can observe that proving the claim \eqref{eq:claim} leads us to that $g_{\ell^*}([0,q])$ is contained in a bounded path-connected component of $\mathbb R^2\setminus\mathcal U$ unless $\mathcal U\cap g_{\ell^*}([0,q])\ne\emptyset$.
Since $g_{\ell^*}([0,p_1])\setminus\mathcal B\subset g_{\ell^*}([0,q])$ by the definitions of $q,\ell^*$ and the assumption that $\|f^*-f\|_\infty\le\frac1{100}$, 
this implies that if $g_{\ell^*}([0,p_1])\cap g_{\ell^*}([p_2,1])=\emptyset$, then $g_{\ell^*}([0,p_1])\setminus\mathcal B$ is contained in a bounded path-connected component of $\mathbb R^2\setminus\mathcal U$.
Hence, \eqref{eq:claim} implies the statement of Lemma \ref{lem:topology2}.

To prove the claim \eqref{eq:claim}, we first introduce the following lemma.
\begin{figure*}[t]
\centering
\subfigure[]{
\centering
\includegraphics[width=0.25\textwidth]{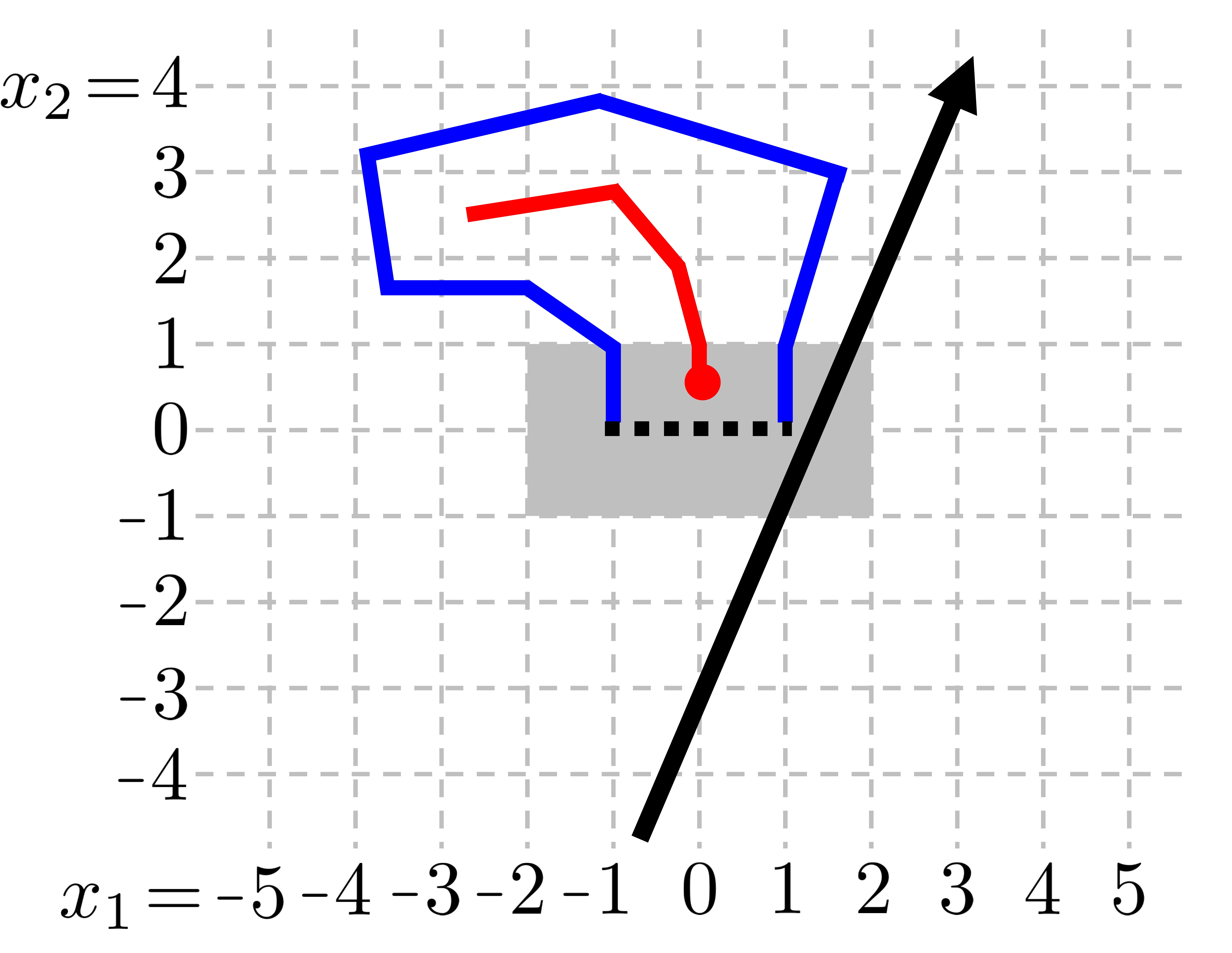}
\label{fig:topology1}
}
\subfigure[]{
\centering
\includegraphics[width=0.25\textwidth]{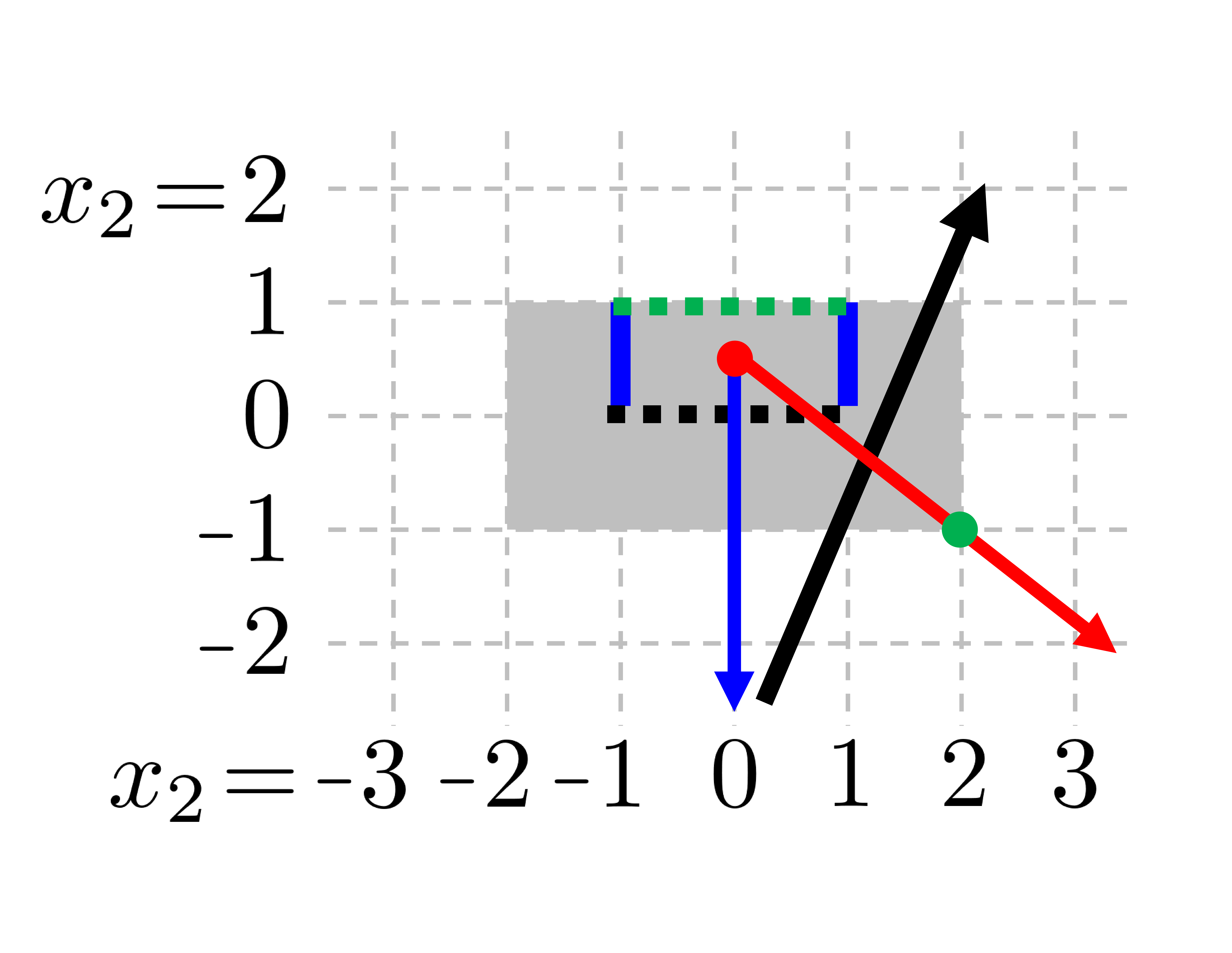}
\label{fig:topology2}
}
\subfigure[]{
\centering
\includegraphics[width=0.25\textwidth]{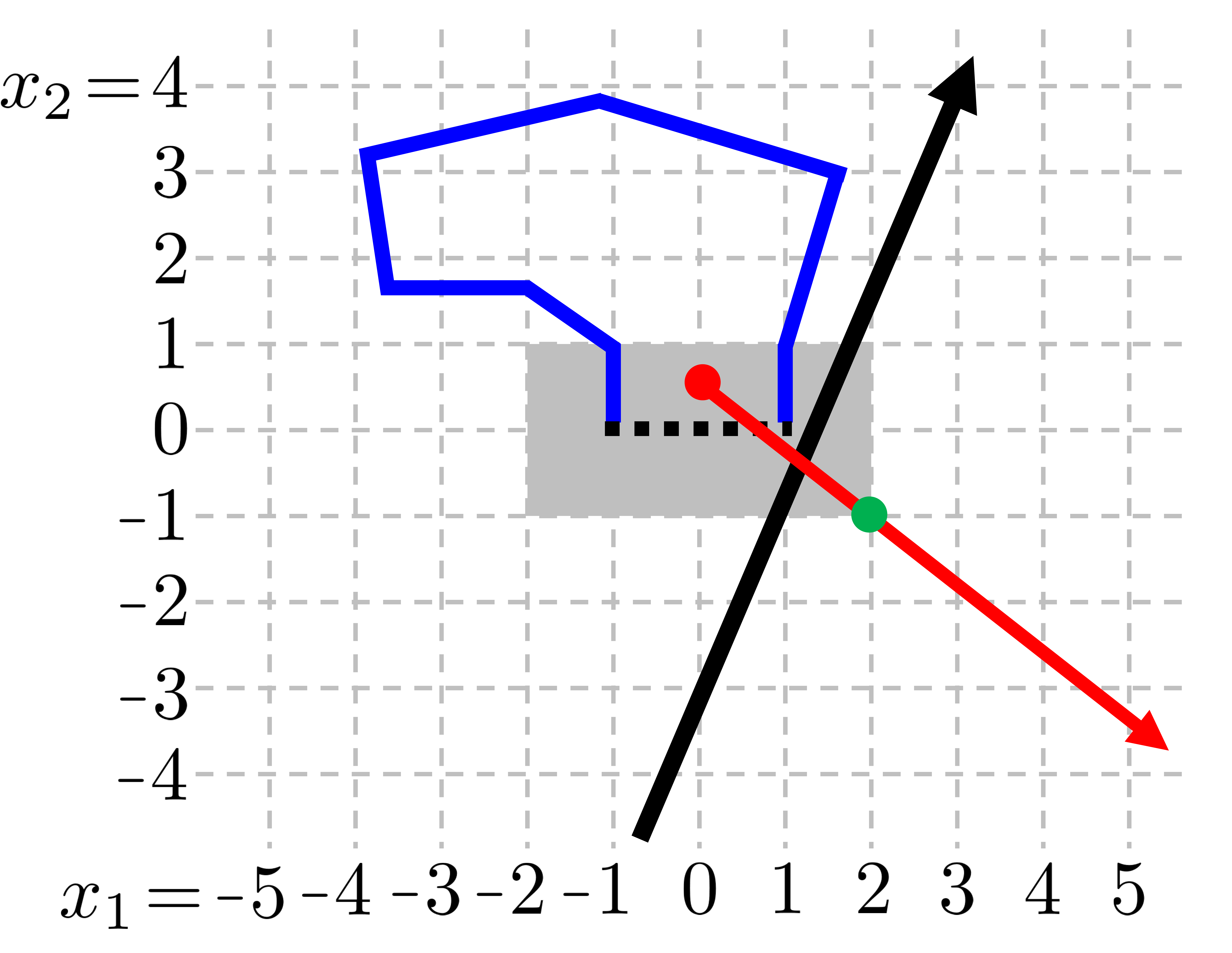}
\label{fig:topology3}
}

\caption{(a) Illustration of $\mathcal U$, $g_{\ell^*}(q)$. (b) Illustration of $\mathcal V$, $g_{\ell^*}(q)$, $v$. (c) Illustration of $\mathcal U$, $g_{\ell^*}(q)$, $v$.}
\label{fig:topology}
\end{figure*}
\begin{lemma}[Jordan curve theorem \citep{tverberg80}]\label{lem:jordancurve}
For any simple loop $\mathcal O\subset\mathbb R^2$, $\mathbb R^2\setminus\mathcal O$ consists of exactly two path-connected components where one is bounded and another is unbounded. 
\end{lemma}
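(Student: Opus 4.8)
The plan is not to reprove this from scratch: the statement is verbatim the Jordan curve theorem, so I would invoke the short self-contained proof of \citet{tverberg80}. It is worth noting, however, that every simple loop appearing in the proof of Theorem~\ref{thm:reluuniflower} is in fact a \emph{simple polygon}—a finite union of line segments, since images of intervals under \textsc{ReLU} networks are piece-wise linear—so it would suffice to establish the polygonal special case, which admits the elementary argument I now outline.

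First I would fix a simple polygon $\mathcal O\subset\mathbb R^2$ with edges $e_1,\dots,e_m$ and define, for each $x\in\mathbb R^2\setminus\mathcal O$, a parity $\nu(x)\in\{0,1\}$: shoot a ray from $x$ whose direction is not parallel to any $e_i$ and whose image contains no vertex of $\mathcal O$, and let $\nu(x)$ be the number of edge crossings modulo $2$. The first step is to check that $\nu(x)$ is independent of the (generic) direction chosen: rotating the ray, the crossing count changes only as the ray sweeps past a vertex, and a case split on whether the two incident edges lie on the same side of the ray or on opposite sides shows the count changes by an even amount. The second step is that $\nu$ is locally constant on $\mathbb R^2\setminus\mathcal O$—moving $x$ slightly without meeting $\mathcal O$ can be done without altering which edges a suitably chosen common ray direction meets—so $\nu$ is constant on each path-connected component. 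For $\|x\|_2$ large we have $\nu(x)=0$, so all such $x$ lie in one component, which is therefore unbounded, and nonempty since $\mathcal O$ is bounded. Taking a point just off the relative interior of a single edge yields a ray crossing $\mathcal O$ exactly once, so $\nu=1$ is attained and a second component exists; its points remain near $\mathcal O$, hence it is bounded.

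The remaining step—that there are \emph{at most} two components—is the delicate one. I would show that every $x\in\mathbb R^2\setminus\mathcal O$ connects, within $\mathbb R^2\setminus\mathcal O$, to a point in an arbitrarily small neighborhood of $\mathcal O$ (walk straight toward $\mathcal O$ and stop just before the first hit), reducing the count to the number of components meeting that neighborhood. A point near the relative interior of an edge $e_i$ lies on one of exactly two local sides, and these two sides have opposite parities, since a short transversal segment through $e_i$ flips $\nu$; moreover a near-boundary point can be slid along a curve parallel to $\mathcal O$, past vertices, without changing its parity, so all near-boundary points of a given parity lie in a single component. Thus each component is pinned down by its value of $\nu$, leaving exactly two. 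I expect this final bookkeeping—tracking a near-boundary point around convex and reflex vertices while keeping it on the correct side—to be the main obstacle, which is exactly why, for general (non-polygonal) simple loops, one instead defers to \citet{tverberg80}.
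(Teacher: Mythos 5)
The paper does not prove this lemma at all---it is cited directly to \citet{tverberg80}---and your primary move (invoke that reference) is exactly the paper's treatment, so you match. Your secondary observation is a correct and genuinely useful refinement: every simple loop the paper ever feeds into Lemma~\ref{lem:jordancurve} (namely $\mathcal U$ and $\mathcal V$ in the proof of Lemma~\ref{lem:topology2}) is constructed explicitly from finitely many line segments, since $g_{\ell^*}$ is piece-wise linear, so only the polygonal Jordan curve theorem is required. Your parity sketch for that special case is sound and in fact dovetails with the paper's own use of the parity function $\pi_{\mathcal U}$ and the even--odd rule in the proof of Lemma~\ref{lem:topology2}; the step you flag as delicate---showing there are \emph{at most} two components by sliding near-boundary points past convex and reflex vertices---is indeed where the bookkeeping lives, but the outline is correct and that case does admit a fully elementary completion, so there is no gap.
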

Lemma \ref{lem:jordancurve} ensures the existence of a bounded path-connected component of $\mathbb R^2\setminus\mathcal U$.

Furthermore, to prove the claim \eqref{eq:claim}, we introduce the parity function $\pi_{\mathcal U}:\mathbb{R}^2\setminus\mathcal U\rightarrow \{0,1\}$: For $x\in\mathbb R^2\setminus\mathcal U$ and a ray starting from $x$, $\pi_{\mathcal U}(x)$ counts the number of times that the ray ``properly'' intersects with $\mathcal U$ (reduced modulo 2) where the proper intersection is an intersection where $\mathcal U$ enters and leaves on different sides of the ray.
Here, it is well-known that $\pi_{\mathcal U}(x)$ does not depend on the choice of the ray, i.e., $\pi_{\mathcal U}$ is well-defined.
We refer the proof of Lemma 2.3 by \cite{thomassen92} and the proof of Lemma 1 by \cite{tverberg80} for more details.
Here, $\pi_{\mathcal U}$ characterizes the ``position'' of $x$ as $\pi_{\mathcal U}(x)=0$ if and only if $x$ is in the unbounded path-connected component of $\mathbb R^2\setminus\mathcal U$, which is known as the even-odd rule \citep{shimrat62,hacker62}.
Hence, proving that $\pi_{\mathcal U}(g_{\ell^*}(q))=1$ would complete the proof of the claim \eqref{eq:claim}.

Recall that there exists the line (e.g., the black arrow in Figure \ref{fig:topology}) that intersects with $\mathcal B$ and the image of $g_{\ell^*}$ can be at only ``one side'' of the line (see Section \ref{sec:counter} for details).
Since $\mathcal B$ is open, there exists a ``vertex'' $v\in\partial\mathcal B$ (e.g., the green dot in Figures \ref{fig:topology2} and \ref{fig:topology3}) such that $v$ is in the ``other side'' of the line.\footnote{A vertex denotes one of the points $(2,-1),(2,1),(-2,-1),(-2,1)$.}
We prove $\pi_{\mathcal U}(g_{\ell^*}(q))=1$ by counting the number of proper intersections between the ray $\mathcal R$ from $g_{\ell^*}(q)$ passing through $v$ (the red arrow in Figures \ref{fig:topology2} and \ref{fig:topology3} illustrates $\mathcal R$).

To simplify showing $\pi_{\mathcal U}(g_{\ell^*}(q))=1$, we consider
two points $z_1,z_2\in\mathcal U\cap\partial B$ near the points $(-1,1), (1,1)$, respectively, such that the simple curve $\mathcal P$ in $\mathcal U$ from $z_1$ to $z_2$ is contained in $\mathcal B$ except for $z_1,z_2$.
Then, one can observe that $\mathcal P$ and $\mathcal L(z_1,z_2)$ forms a simple loop which we call $\mathcal V$. 
Figure \ref{fig:topology2} illustrates $\mathcal V$ where the black dotted line indicates the line segment from $\mathcal L\big(g_{\ell^*}(p_2),g_{\ell^*}(1)\big)$, the blue line indicates the line segments from $g_{\ell^*}([p_2,1])$, and the green dotted line indicates $\mathcal L(z_1,z_2)$; from the definition of $\mc P$, the blue and green lines together correspond to $\mc P$.

Then, $\pi_{\mathcal V}(g_{\ell^*}(q))=1$ as a ray from $g_{\ell^*}(q)$ of the downward direction (the blue arrow in Figure \ref{fig:topology2}) only properly intersects once with $\mathcal V$ at some point in $\mathcal L\big(g_{\ell^*}(p_2),g_{\ell^*}(1)\big)$, under the assumption that $\|f^*-f\|_\infty\le\frac1{100}$.
From the property of $\pi_{\mathcal V}$, this implies that the ray $\mathcal R$ starting from $g_{\ell^*}(q)$ and passing through $v$ (e.g., the red arrow in Figures \ref{fig:topology2} and \ref{fig:topology3}) must properly intersect with $\mathcal V$ odd times.
Furthermore, from the construction of $\mathcal U$ and $\mathcal V$, definition of $\ell^*$, and under the assumption that $\|f^*-f\|_\infty\le\frac1{100}$, one can observe that the simple curve in $\mathcal U$ from $z_1$ to $z_2$ \emph{not} contained in $\mathcal B$ (i.e., $\mathcal U\setminus\mathcal P$) can only intersect with $\mathcal B$ within the $\ell_\infty$ balls of radius $\frac2{100}$ centered at the points $(-1,1)$ and $(1,1)$. This is because if $\mathcal U\setminus\mathcal P$ intersects with $\mc B$ outside these $\ell_\infty$ balls, then by definition of $\ell^*$, the network cannot make further modifications in $\mc B$, hence contradicting the approximation assumption $\|f^*-f\|_\infty\le\frac1{100}$. 
In other words, all proper intersections between $\mathcal U$ and $\mathcal R$ are \emph{identical} to those between $\mathcal V$ and $\mathcal R$. This implies that $\pi_{\mathcal U}(g_{\ell^*}(q))=1$ and hence, $g_{\ell^*}(q)$ is in the bounded path-connected component of $\mathbb R^2\setminus\mathcal U$.
This completes the proof of the claim \eqref{eq:claim} and therefore, completes the proof of Lemma \ref{lem:topology2}.

\subsection{Proof of Lemma \ref{lem:topologypolygon}}\label{sec:pflem:topologypolygon}
Suppose that $g_{\ell^*}([0,p_1])\cap g_{\ell^*}([p_2,1])=\emptyset$ and $g_{\ell^*}([0,p_1])\setminus\mathcal B$ is contained in a bounded path-connected component of $\mathbb R^2\setminus\mathcal U$ for some $\mathcal U\subset  g_{\ell^*}([p_2,1])\cup\mathcal B$.
If $g_{\ell^*}([0,p_1])\setminus\mathcal B$ is path-connected, then the statement of Lemma \ref{lem:topologypolygon} directly follows.
Hence, suppose that $g_{\ell^*}([0,p_1])\setminus\mathcal B$ has more than one path-connected components.
To help the proof, we introduce the following Lemma.
\begin{lemma}\label{lem:boundary}
If $g_{\ell^*}(p)\in\mathcal\partial\mathcal B$ for some $p\in[0,1]$, then $f(p)=g_{\ell^*}(p)$.
\end{lemma}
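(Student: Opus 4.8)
The plan is to leverage the defining property of $\ell^*$: for every layer index $\ell$ with $\ell^* < \ell \le L-1$ we have $\phi_\ell^{-1}\circ\sigma\circ\phi_\ell(\mathcal B) = \mathcal B$, and I will first upgrade this set-level identity to a pointwise one, namely that $\phi_\ell^{-1}\circ\sigma\circ\phi_\ell$ fixes every point of the closure $\overline{\mathcal B}$, in particular every point of $\partial\mathcal B$. Granting this, the lemma is immediate by induction on the remaining layers: since $g_{\ell^*}(p)\in\partial\mathcal B$, the block $\phi_{\ell^*+1}^{-1}\circ\sigma\circ\phi_{\ell^*+1}$ fixes it, so $g_{\ell^*+1}(p)=g_{\ell^*}(p)\in\partial\mathcal B$; iterating through $\ell=\ell^*+2,\dots,L-1$ gives $g_\ell(p)=g_{\ell^*}(p)$ for all such $\ell$, and since $f=g_{L-1}$ we conclude $f(p)=g_{\ell^*}(p)$. (If $\ell^*=L-1$ then $f=g_{\ell^*}$ and there is nothing to prove.)

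The remaining content is the claim that $\phi_\ell^{-1}\circ\sigma\circ\phi_\ell(\mathcal B)=\mathcal B$ forces $\phi_\ell^{-1}\circ\sigma\circ\phi_\ell$ to restrict to the identity on $\overline{\mathcal B}$. I would apply Lemma~\ref{lem:relu} to the invertible affine map $\phi_\ell$ (invertibility of each $t_\ell$, hence of each $\phi_\ell$, being a standing assumption): there is a closed set $\mathcal S_\ell=\{x:\langle a_1,x\rangle+b_1\ge0,\ \langle a_2,x\rangle+b_2\ge0\}$ on which the block acts as the identity, while every $x\notin\mathcal S_\ell$ is genuinely moved, landing somewhere on $\partial\mathcal S_\ell$. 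Since $\mathcal S_\ell$ is closed, $\partial\mathcal S_\ell\subseteq\mathcal S_\ell$, so the entire image $\phi_\ell^{-1}\circ\sigma\circ\phi_\ell(\mathcal B)$ is contained in $\mathcal S_\ell$. If that image equals the open set $\mathcal B$, we therefore get $\mathcal B\subseteq\mathcal S_\ell$, hence $\overline{\mathcal B}\subseteq\overline{\mathcal S_\ell}=\mathcal S_\ell$, and the first part of Lemma~\ref{lem:relu} then says the block fixes every point of $\overline{\mathcal B}$, as desired.

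I do not anticipate a genuine obstacle; the only step requiring a moment's care is the inclusion $\phi_\ell^{-1}\circ\sigma\circ\phi_\ell(\mathcal B)\subseteq\mathcal S_\ell$, which hinges on $\mathcal S_\ell$ being closed so as to contain its own boundary. One should also note in passing that the possibly degenerate shapes of $\mathcal S_\ell$ (a wedge, a half-plane, or all of $\mathbb R^2$) are all still closed, so the argument is unaffected by such cases, and that throughout we only ever use that $g_{\ell^*}(p)$ lies in $\partial\mathcal B$, which is preserved verbatim by each subsequent block.
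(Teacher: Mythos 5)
Your proof is correct and rests on the same key ingredients as the paper's: Lemma~\ref{lem:relu}'s characterization of the fixed-point set $\mathcal S_\ell$ of each post-$\ell^*$ block, and the fact that $\mathcal S_\ell$ is closed. The paper runs it as a contradiction — assume $f(p)\ne g_{\ell^*}(p)$, locate the offending block $\ell$, conclude $g_{\ell^*}(p)\notin\mathcal S_\ell$, then exploit that $g_{\ell^*}(p)\in\partial\mathcal B$ to find a nearby $z\in\mathcal B$ that is also moved, contradicting the definition of $\ell^*$ — whereas you argue directly that $\phi_\ell^{-1}\circ\sigma\circ\phi_\ell(\mathcal B)\subseteq\mathcal S_\ell$ forces $\mathcal B\subseteq\mathcal S_\ell$, hence $\overline{\mathcal B}\subseteq\mathcal S_\ell$, and then iterate. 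Your variant is marginally cleaner in that it makes explicit why ``the block preserves $\mathcal B$ set-wise'' upgrades to ``the block is the identity on $\overline{\mathcal B}$,'' a step the paper's contradiction leaves slightly implicit; but it is not a genuinely different argument, just a tidier packaging of the same idea.
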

\begin{proof}[Proof of Lemma \ref{lem:boundary}]
Suppose that $f(p)\ne g_{\ell^*}(p)$. Then, $\phi_{\ell}^{-1}\circ\sigma\circ\phi_\ell(g_{\ell^*}(p))\ne g_{\ell^*}(p)$ for some $\ell>\ell^*$.
By Lemma \ref{lem:relu}, there exist $a_1,a_2\in\mathbb R^2$ and $b_1,b_2\in\mathbb R$ such that $\phi_{\ell}^{-1}\circ\sigma\circ\phi_\ell(x)= x$ if and only if $\langle a_1,x\rangle+b_1\ge0,\langle a_2,x\rangle+b_2\ge0$. Without loss of generality, we assume that $\langle a_1,g_{\ell^*}(p)\rangle+b_1<0$.
Since $g_{\ell^*}(p)\in\partial\mathcal B$, there exists $z\in\mathcal B$ such that $\langle a_1,z\rangle+b_1<0$, i.e., $\phi_{\ell}^{-1}\circ\sigma\circ\phi_\ell(z)\ne z$, which contradicts to the definition of $\ell^*$ by Lemma \ref{lem:relu}.
This completes the proof of Lemma \ref{lem:boundary}
\end{proof}

By Lemma \ref{lem:boundary} and the assumption that $\|f^*-f\|\le\frac1{100}$, $g_{\ell^*}([0,p_1])\setminus\mathcal B$ can only intersect with $\partial\mathcal B$ within the $\ell_\infty$ ball $\mathcal O$ of radius $\frac2{100}$ centered at the point $(0,1)$. 
Hence, all path-connected components of $g_{\ell^*}([0,p_1])\setminus\mathcal B$ intersect with the line segment $\partial\mathcal B\cap\mathcal O$.
In other words, $g_{\ell^*}([0,p_1])\setminus\mathcal B$ is in a path-connected component of $\mathbb R^2\setminus(g_{\ell^*}([p_2,1])\cup\mathcal B)$ unless $g_{\ell^*}([p_2,1])$ intersects with $\partial\mathcal B \cap\mathcal O$.
However, by Lemma \ref{lem:boundary} and the assumption that $\|f^*-f\|\le\frac1{100}$, $g_{\ell^*}([p_2,1])$ must not intersect with $\partial\mathcal B\cap\mathcal O$.
This completes the proof of Lemma \ref{lem:topologypolygon}.

\end{document}